\newcommand{\vol}{\text{vol}}
\newcommand{\D}{\bold{D}}
\newcommand{\GammaMin}{\Gamma_{\text{min}}}
\newcommand{\Dt}{\mathcal{D}_{t}}
\newcommand{\I}{I}
\newcommand{\K}{\mathcal{K}}
\newcommand{\M}{\mathcal{M}}
\newcommand{\Ncut}{\text{Ncut}}
\newcommand{\Pm}{\bold{P}}
\newcommand{\Sm}{\bold{S}}
\newcommand{\Z}{\bold{Z}}
\newcommand{\s}{\bold{s}}
\newcommand{\sinf}{\bold{s}^{\infty}}
\newcommand{\x}{\textbf{x}}
\newcommand{\boldpi}{\bm{\pi}}
\newcommand{\0}{\bm{0}}
\newcommand{\numclust}{K} 
\newcommand{\Dbtw}{D_{t}^{\text{btw}}}
\newcommand{\Din}{D_{t}^{\text{in}}}
\newcommand{\Deg}{{D}}  
\newcommand{\rhoEuc}{\rho^{\text{Euc}}}
\newcommand{\G}{\mathcal{G}}
\newcommand{\knn}{k_{\text{nn}}}
\DeclareMathOperator*{\argmax}{arg\,max}
\DeclareMathOperator*{\cut}{\text{cut}}
\DeclareMathOperator*{\Lsym}{L_{SYM}}
\DeclareMathOperator*{\Tr}{\text{Tr}}
\newtheorem{thm}{Theorem}[section]
\newtheorem{cor}[thm]{Corollary}
\newtheorem{defn}[thm]{Definition}
\DeclareMathOperator*{\argmin}{arg\,min}
\newcommand{\distas}[1]{\mathbin{\overset{#1}{\kern\z@\sim}}}%
\newsavebox{\mybox}\newsavebox{\mysim}
\newcommand{\distras}[1]{%
  \savebox{\mybox}{\hbox{\kern3pt$\scriptstyle#1$\kern3pt}}%
  \savebox{\mysim}{\hbox{$\sim$}}%
  \mathbin{\overset{#1}{\kern\z@\resizebox{\wd\mybox}{\ht\mysim}{$\sim$}}}%
}
\let\c@equation\c@thm
\numberwithin{equation}{section}
\def\ps@pprintTitle{%
 \let\@oddhead\@empty
 \let\@evenhead\@empty
 \def\@oddfoot{}%
 \let\@evenfoot\@oddfoot}
\begin{document}

\title{Learning by Unsupervised Nonlinear Diffusion}

\author{\name Mauro Maggioni \email mauro.maggioni@jhu.edu \\
       \addr Department of Mathematics, Department of Applied Mathematics and Statistics,\\
       Mathematical Institute of Data Sciences, Institute of Data Intensive Engineering and Science,\\
       Johns Hopkins University, Baltimore, MD 21218, USA
       \AND 
         \name James M. Murphy \email jm.murphy@tufts.edu \\
        \addr  Department of Mathematics\\
       Tufts University, Medford, MA 02155, USA\\
       }

\editor{}

\maketitle

\begin{abstract}This paper proposes and analyzes a novel clustering algorithm that combines graph-based diffusion geometry with techniques based on density and mode estimation.  The proposed method is suitable for data generated from mixtures of distributions with densities that are both multimodal and have nonlinear shapes.  A crucial aspect of this algorithm is the use of time of a data-adapted diffusion process as a scale parameter that is different from the local spatial scale parameter used in many clustering algorithms. We prove estimates for the behavior of diffusion distances with respect to this time parameter under a flexible nonparametric data model, identifying a range of times in which the mesoscopic equilibria of the underlying process are revealed, corresponding to a gap between within-cluster and between-cluster diffusion distances. These structures can be missed by the top eigenvectors of the graph Laplacian, commonly used in spectral clustering. This analysis is leveraged to prove sufficient conditions guaranteeing the accuracy of the proposed \emph{learning by unsupervised nonlinear diffusion (LUND)} procedure.  We implement LUND and confirm its theoretical properties on illustrative datasets, demonstrating the theoretical and empirical advantages over both spectral clustering and density-based clustering techniques.  

\end{abstract}


\section{Introduction}

Unsupervised learning is a central problem in machine learning, requiring that data be analyzed without a priori knowledge of any class labels.  One of the most common unsupervised problems is the problem of \emph{clustering}, in which the data is to be partitioned into clusters so that each cluster contains similar points and distinct clusters are sufficiently separated.  In general, this problem is ill-posed, requiring various geometric, analytic, topological, and statistical assumptions on the data and measurement method be imposed to make it tractable.  Developing conditions under which empirically effective clustering methods have performance guarantees is an active research topic \citep{Hartigan1981, Ng2002, VonLuxburg2008, Awasthi2015, Lei2015, Schiebinger2015, Trillos2016_1, Trillos2016_2, Little2017Path}, as is the development of broad analyses and characterizations of generic clustering methods \citep{Ackerman2013, Ben-David2015}.  

Clustering techniques abound.  Some of the most popular include $\numclust$-means clustering and its variants \citep{Friedman2001, Arthur2007, Park2009}, hierarchical methods \citep{Hartigan1985, Friedman2001}, density-based methods \citep{Ester1996}, and mode-based methods \citep{Fukunaga1975, Comaniciu2002, Chacon2012, Rodriguez2014, Genovese2016}.  Feature extraction is often combined with these standard methods to improve clustering performance.  In particular, \emph{spectral methods} \citep{Shi2000, Ng2002,Coifman2005,VonLuxburg2007} construct graphs representing data, and use the spectral properties of the resulting weight matrix or Laplacian to produce structure-revealing features in the data.  

Graphs often encode pairwise similarities between points, typically local: this ``spatial'' scale is often determined by a parameter $\sigma$. For example only points within distance $4\sigma$ of each other may be connected, with weight $e^{-\|x_i-x_j\|_{2}^2/\sigma^2}$. From the graph, global features on the data may be derived if needed, for example by considering the eigenfunctions of the random walk on the graph.  Alternatively, graphs may be used to introduce data-adaptive distances, such as \emph{diffusion distances}, which are associated to random walks and diffusion processes on graphs \citep{Coifman2005,Nadler2006, Lafon2006, Coifman2008, Singer2008, Rohrdanz2011, Zheng2011, Lederman2015_1, Lederman2015_2, Czaja2016, Li2017efficient}.  Diffusion distances do not depend only on the graph itself, but also on a time parameter $t$ that determines a scale {\em{on the graph}} at which these distances are considered, related to the time of diffusion or random walk. Choosing $\sigma$ in graph-based algorithms, and both $\sigma$ and $t$ in the case of diffusion distances is important in both theory and applications \citep{Szlam2008regularization, Rohrdanz2011, Zheng2011, Murphy2018_Unsupervised}. However their role is well-understood only in certain regimes (e.g. $\sigma,t\rightarrow0^+$) which are of interest in some problems (e.g. manifold learning) but not necessarily in the case of clustering.

We propose the \emph{Learning by Unsupervised Nonlinear Diffusion (LUND)} scheme for clustering, which combines diffusion distances and density estimation to efficiently cluster data generated from a nonparametric model.  This method is applied to the empirical study of high-dimensional hyperspectral images by \citet{Murphy2018_Diffusion, Murphy2018_Iterative, Murphy2018_Unsupervised}, where it is shown to enjoy competitive performance with state-of-the-art clustering algorithms on specific data sets. At the same time, we advance the understanding of the relationship between the local ``spatial'' scale parameter $\sigma$ and the diffusion time parameter $t$ in the context of clustering, demonstrating how the role of $t$ can be exploited to successfully cluster data sets for which $K$-means, spectral clustering, or density-based clustering methods fail, and providing quantitative bounds and guarantees on the performance of the proposed clustering algorithm for data that may be highly nonlinear and of variable density. We moreover provide sufficient conditions under which LUND correctly estimates the number of clusters $K$.


\subsection{Major Contributions and Outline}

This article makes two major contributions.  First, \emph{explicit estimates on diffusion distances for nonparametric clustered data} are proved: we obtain lower bounds for the diffusion distance (see Definition \ref{defn:DD}, or \citep{Coifman2005}) between clusters, and upper bounds on the diffusion distance within clusters, as a function of the time parameter $t$ and suitable properties of the clusters.  Together, these bounds yield a mesoscopic -- not too small, not too large -- diffusion time-scale at which diffusion distances separate clusters clearly and cohere points in the same cluster.  These results, among other things, show how the role of the time parameter, which controls the scale ``on the data'' of the diffusion distances, is very different from the commonly-used scaling parameter $\sigma$ in the construction of the underlying graph, which is a local spatial scale measured in the ambient space. Relationships between $t$, $\sigma$ are well-understood in the asymptotic case of $n\rightarrow+\infty$, $\sigma\rightarrow0^+$ (at an appropriate rate with $n$ \citep{Coifman2005,Lafon2006,VonLuxburg2007}) and $t\rightarrow0^+$ (essentially Varadhan's lemma applied to diffusions on a manifold; see \citep{Den2008_Large}, \citep{Jones2008manifold}, and references therein). These relationships imply that the choice of $t$ is essentially irrelevant, since in these limits of diffusion distances are essentially geodesic distances. However the clustering phenomena we are interested in are far from this regime, and we show that the interplay between $t$, $\sigma$ and $n$ becomes crucial.

Second, \emph{the LUND clustering scheme} is proposed and shown to enjoy performance guarantees for clustering on a broad class of non-parametric mixture models.  We prove sufficient conditions for LUND  to correctly determine the number of clusters in the data and to have low clustering error.  From the computational perspective, we present an efficient algorithm implementing LUND, which scales essentially linearly in the number of points $n$, and in the ambient dimension $D$, for intrinsically low-dimensional data.  We verify the properties of the LUND scheme and algorithm on synthetic data, studying the relationships between the different parameters in LUND, in particular between $\sigma$ and $t$, and comparing with popular and related clustering algorithms, including \emph{spectral clustering} and \emph{fast search and find of density peaks clustering (FSFDPC)} \citep{Rodriguez2014}, illustrating weaknesses of these methods and corresponding advantages of LUND.  Our experiments illustrate how LUND combines benefits of spectral clustering and FSFDPC, allowing it to learn non-linear structure in data while also being guided by regions of high density.  

The outline of the article is as follows.  Background is presented in Section \ref{sec:Background}.  In Section \ref{sec:DataModel}, motivational datasets and a summary of the theoretical results are presented and discussed.  Theoretical comparisons with spectral clustering and density methods are also made in Section \ref{sec:DataModel}.  Estimates on diffusion distances are proved in Section \ref{sec:DiffusionProcesses}.  Performance guarantees for the LUND algorithm are proved in Section \ref{sec:PerformanceGuarantees}.  Numerical experiments and computational complexity are discussed in Section \ref{sec:NumericalExperiments}.  Conclusions and future research directions are given in Section \ref{sec:Conclusions}.


\section{Background}\label{sec:Background}


\subsection{Background on Clustering}

\emph{Clustering} is the process of determining groupings within data and assigning labels to data points according to these groupings---without supervision.  Given the wide variety of data of interest to scientific practitioners, many approaches to clustering have been developed, whose performance is often wildly variable and data-dependent.  Mathematical assumptions are placed on the data to prove performance guarantees.  


\subsubsection{$\numclust$-Means}

A classical and popular clustering algorithm is \emph{$\numclust$-means} \citep{Steinhaus1957, Friedman2001} and its variants \citep{Ostrovsky2006, Arthur2007, Park2009}, which is often used in conjunction with feature extraction methods.  In $K$-means the data is partitioned into $\numclust$ (a user-specified parameter) groups, where the partition $\{C_k\}_{k=1}^\numclust$ is chosen to minimize within-cluster dissimilarity: $C^{*}=\argmin_{\{C_{k}\}_{k=1}^{\numclust}} \sum_{k=1}^{\numclust}\sum_{x\in C_{k}}\|x-\bar{x}_{k}\|_{2}^{2},$
where $\bar{x}_{k}$ is the mean of the $k^{th}$ cluster (for a given partition, it is the minimizer of the least squares cost in the inner sum).  
While very popular in practice, $\numclust$-means and its variants are known to perform poorly for datasets that are not the union of well-separated, near-spherical clusters, and are often sensitive to outliers.  


\subsubsection{Spectral Methods}

The family of clustering methods known as \emph{spectral methods or spectral clustering} compute features that reveal the structure of data that may deviate from the spherical, Gaussian shapes ideal for $\numclust$-means, and in particular may be nonlinear or elongated in shape.  This is done by building local connectivity graphs on the data that encode pairwise similarities between points, then computing a spectral decomposition of adjacency or random walk or Laplacian operators defined on this graph. Focusing on the graph Laplacian $L$ (the other operators are related), one uses the eigenvectors of $L$ as global features input to $\numclust$-means, enabling clustering of nonlinear data that $\numclust$-means alone would fail to cluster accurately.  

More precisely, let $X=\{x_{i}\}_{i=1}^{n}\subset\mathbb{R}^{D}$ be a set of points to cluster.  Let $\G$ be a graph with vertices corresponding to points of $X$ and edges stored in an $n\times n$ symmetric weight matrix $W$.  Often one chooses $W_{ij}=\K(x_{i},x_{j})$ for some (symmetric, often radial and rapidly decaying) nonnegative \emph{kernel} $\K:\mathbb{R}^{D}\times\mathbb{R}^{D}\rightarrow\mathbb{R}$, such as $\K(x_i,x_j)=e^{-||x_i-x_j||^2/\sigma^2}$ for some choice of scaling parameter $\sigma>0$. The graph $\G$ may be fully connected, or it may be a nearest neighbors graph with respect to some metric.  Let $\Deg$ be the diagonal matrix $D_{ii}:=\sum_{j=1}^{n}W_{ij}$.  The \emph{graph Laplacian} is constructed as $L=\Deg-W$.  One then normalizes $L$ to acquire either the \emph{random walk Laplacian} $L_{\text{RW}}=\Deg^{-1}L=\I-\Deg^{-1}W$ or the \emph{symmetric normalized Laplacian} $\Lsym=\Deg^{-\frac{1}{2}}L\Deg^{-\frac{1}{2}}=\I-\Deg^{-\frac{1}{2}}W\Deg^{-\frac{1}{2}}$.  We focus on $\Lsym$ in what follows.  It can be shown that $\Lsym$ has real eigenvalues $0=\lambda_{1}\le \dots \le \lambda_{n}\le 2$ and corresponding eigenvectors $\{\phi_{i}\}_{i=1}^{n}$.  The original data $X$ can be clustered by clustering the embedded data $x_{i}\mapsto (\phi_{1}(x_{i}),\phi_{2}(x_{i}), \dots, \phi_{M}(x_{i}))$ for an appropriate choice of $M\le n$.  In this step typically $\numclust$-means is used, though Gaussian mixture models may (and perhaps should) be used, as they enjoy, unlike $K$-means, a suitably-defined statistical consistency guarantee in the infinite sample limit \citep{Athreya2017_Statistical}.  If clusters in the original data are sufficiently far apart and points within a cluster sufficiently nearby in a suitable sense, spectral clustering with an appropriate kernel can drastically improve over $\numclust$-means \citep{Ng2002}.

It is well-known that spectral clustering relaxes a graph-cut problem.  For a collection of subsets $X_{1},\dots,X_{\numclust}\subset X$, the corresponding \emph{normalized cut} is $\Ncut(X_{1},\dots,X_{\numclust})=\sum_{k=1}^{\numclust}\cut(X_{k},X_{k}^{c})/\vol(X_{k}),$where $\cut(A,B)=\sum_{x_{i}\in A, x_{j}\in B}W_{ij}, \ \vol(A)=\sum_{x_{i}\in A}\sum_{j=1}^{n}W_{ij}.$  One can partition $\G$ by finding a partition minimizing the $\Ncut$ quantity, yielding clusters that are simultaneously separated and balanced \citep{Shi2000}.  However, computing the minimal $\Ncut$ is NP-hard \citep{Wagner1993}.  To relax this problem, one notes that $\min_{X_{1},\dots,X_{\numclust}}\Ncut(X_{1},\dots,X_{\numclust}) =\min_{X_{1},\dots,X_{\numclust}}\Tr(H^{T}LH) \text{ s.t. } H^{T}DH=I,$ where $D,L$ are as above and $H_{ij}=\vol(X_{i})^{-\frac{1}{2}}$ for $x_{i} \in X_{j}$, and $H_{ij}=0$ for $x_{i}\notin X_{j}$.  This formulation may be relaxed to $\min_{U}\Tr(U^{T}D^{-\frac{1}{2}}LD^{\frac{1}{2}}U) \text{ s.t. } U^{T}U=I$, which has solution given by the matrix $U$ consisting of the first $\numclust$ eigenvectors of $\Lsym$.  Hence, one can approximate the NP-hard problem of minimizing $\Ncut$ with an $O(\numclust n^{2})$ time spectral decomposition problem.

Spectral clustering methods enjoy strong empirical performance on a variety of data sets.  However, they are sensitive to clusters of different sizes and densities, and also to parameters in the construction of the underlying graph. There has been significant work on performance guarantees for spectral clustering, which we discuss in Section \ref{subsec:Comparisons}.

Instead of using the eigenfunctions of the Laplacian, it is equivalent to analyze the corresponding random walk, represented by $\Pm=D^{-1}W$.  The graph cut problem is related to transition probabilities between the partition elements by the following result, which is a discrete counterpart to the classical results in the continuous setting involving Brownian motion (see e.g. \citep{Banuelos1999_Hot} and references therein):

\begin{thm}\label{thm:Meila}(\citep{Meila2001})  Let $\Pm$ be an irreducible, aperiodic Markov chain.  Let $\G$ be the graph with nodes corresponding to the states of $\Pm$, and edge weights between the $i^{th}$ and $j^{th}$ node given by $P_{ij}$.  Let $\boldpi_{0}$ be an initial distribution for the Markov chain, and for $A,B$ disjoint subsets of the state space, let $P(A|B)=P(\boldpi_{0}\Pm \in A\ | \ \boldpi_{0}\in B).$  Then $\Ncut(A,A^{c})=P(A^{c} |  A)+P(A| A^{c})$, where $A^{c}$ is the complement of $A$ in the state space.
\end{thm}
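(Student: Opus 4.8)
The plan is to interpret both sides of the claimed identity in terms of the equilibrium behavior of the chain and to reduce everything to the combinatorial quantities $\cut$ and $\vol$. Since $\Pm = D^{-1}W$ is the transition matrix of a random walk on the weighted graph $\G$, and $W$ is symmetric, I would first identify the unique stationary distribution. One checks that $\boldpi$ with entries $\pi_i = D_{ii}/\vol(X)$, where $\vol(X) = \sum_{i=1}^n D_{ii}$, satisfies $\boldpi\Pm = \boldpi$; irreducibility and aperiodicity guarantee uniqueness, and reversibility follows from $\pi_i P_{ij} = W_{ij}/\vol(X) = \pi_j P_{ji}$. The crucial modeling point is that the initial distribution $\boldpi_0$ in the statement is to be taken equal to this stationary $\boldpi$, so that the one-step probabilities computed below are the equilibrium ones and are independent of the starting time.

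Next I would unwind the definition of the conditional transition probability by first computing the joint probability of starting in $A$ and landing in $A^c$ after one step. Conditioning on the initial state, this factors as
\[
P(\boldpi_0 \in A,\ \boldpi_0\Pm \in A^c) = \sum_{x_i \in A} \pi_i \sum_{x_j \in A^c} P_{ij}.
\]
Substituting $\pi_i = D_{ii}/\vol(X)$ and $P_{ij} = W_{ij}/D_{ii}$, the degree factors $D_{ii}$ cancel, leaving $\cut(A,A^c)/\vol(X)$. Dividing by the marginal $P(\boldpi_0 \in A) = \sum_{x_i \in A}\pi_i = \vol(A)/\vol(X)$ yields
\[
P(A^c \mid A) = \frac{\cut(A,A^c)}{\vol(A)}, \qquad P(A \mid A^c) = \frac{\cut(A^c,A)}{\vol(A^c)},
\]
the second expression obtained by the same computation with the roles of $A$ and $A^c$ exchanged.

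Finally I would invoke the symmetry $W_{ij} = W_{ji}$, which gives $\cut(A,A^c) = \cut(A^c,A)$, so that summing the two conditional probabilities reproduces exactly the two terms in the definition $\Ncut(A,A^c) = \cut(A,A^c)/\vol(A) + \cut(A^c,A)/\vol(A^c)$. The argument is essentially a direct computation once the setup is fixed; the only genuinely conceptual step, and the one I expect to be the main obstacle, is recognizing that the initial distribution $\boldpi_0$ must be the stationary distribution $\boldpi$ for the identity to hold as stated. After that is pinned down, the cancellation of the degree terms $D_{ii}$ is the key algebraic step linking the probabilistic quantity $P(\cdot\mid\cdot)$ to the combinatorial quantities $\cut$ and $\vol$.
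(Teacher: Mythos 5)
Your proof is correct: the paper states this result as a citation to Meila and Shi (2001) and does not prove it, and your computation is precisely the standard argument from that reference --- identify $\boldpi_i = D_{ii}/\vol(X)$ as the stationary distribution, cancel the degree factors in $\pi_i P_{ij} = W_{ij}/\vol(X)$, and divide by the marginal $\vol(A)/\vol(X)$. You are also right to flag that the identity requires the initial distribution $\boldpi_0$ to be the stationary one (as in the original Meila--Shi formulation); the theorem as restated in this paper leaves that implicit.
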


Thus, the normalized graph cut generated by a subset $A$ is essentially the same as the probability of transitioning between the sets $A$ and $A^{c}$ in \emph{one} time step, according to the transition matrix $\Pm$.  A crucial aspect of the analysis proposed in this article is to study the behavior across \emph{many} time steps, which makes the proposed method quite different from spectral clustering. 

\begin{figure}[!htb] 
\centering
\begin{subfigure}{.32\textwidth}
\includegraphics[width=\textwidth]{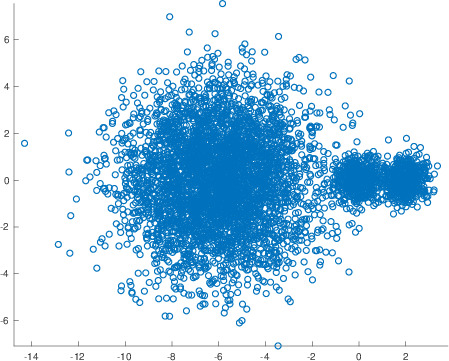}
\subcaption{Data to Cluster \citep{Nadler2007_Fundamental}}
\end{subfigure}
\begin{subfigure}{.32\textwidth}
\includegraphics[width=\textwidth]{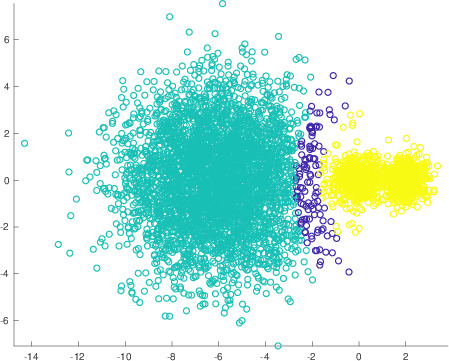}
\subcaption{Spectral clustering \citep{Shi2000}}
\end{subfigure}
\begin{subfigure}{.32\textwidth}
\includegraphics[width=\textwidth]{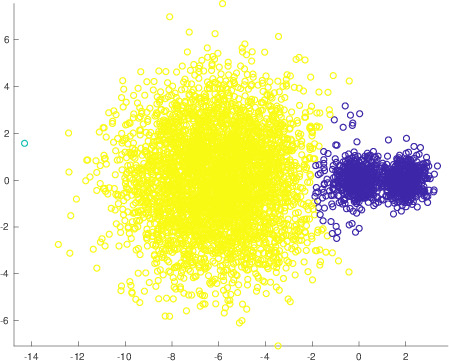}
\subcaption{Spectral clustering \citep{Ng2002}}
\end{subfigure}
\begin{subfigure}{.19\textwidth}
\includegraphics[width=\textwidth]{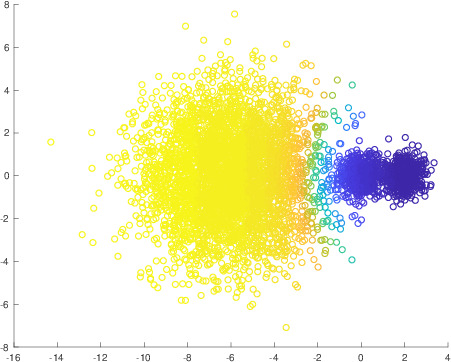}
\subcaption{Eigenvector 2}
\end{subfigure}
\begin{subfigure}{.19\textwidth}
\includegraphics[width=\textwidth]{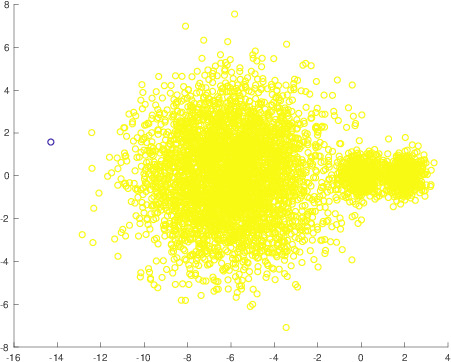}
\subcaption{Eigenvector 3}
\end{subfigure}
\begin{subfigure}{.19\textwidth}
\includegraphics[width=\textwidth]{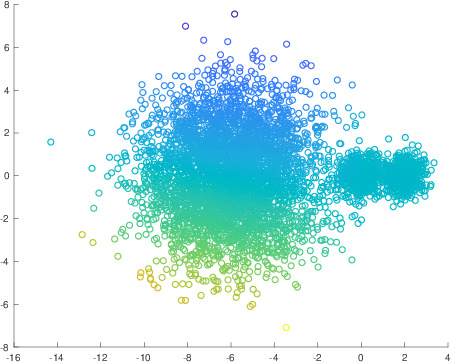}
\subcaption{Eigenvector 4 }
\end{subfigure}
\begin{subfigure}{.19\textwidth}
\includegraphics[width=\textwidth]{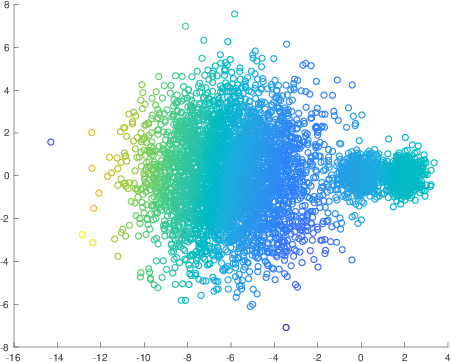}
\subcaption{Eigenvector 5}
\end{subfigure}
\begin{subfigure}{.19\textwidth}
\includegraphics[width=\textwidth]{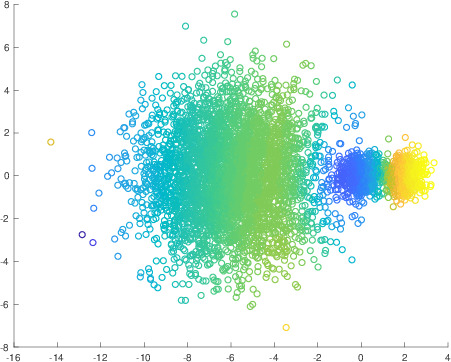}
\subcaption{Eigenvector 6}
\end{subfigure}
\caption{\label{fig:NadlerGaussians}In (a), three Gaussians of essentially the same density are shown.  Results of spectral clustering are shown in (b) \citep{Shi2000} and (c) \citep{Ng2002}.  In (d) - (h), the first five non-trivial eigenvectors are shown.  As noted by \citet{Nadler2007_Fundamental}, the underlying density for this data yields a Fokker-Planck operator whose low-energy eigenfunctions cannot distinguish between the two smaller clusters, thus preventing spectral clustering from succeeding: higher energy eigenfunctions are required.  For this example, the sixth non-trivial eigenvector localizes sufficiently on the small clusters to allow for correct determination of the cluster structure; this eigenvector is not used in traditional spectral clustering algorithms.}
\end{figure}

Weaknesses of spectral clustering were scrutinized by \citet{Nadler2007_Fundamental}.  Their approach starts from the observation in \citep{Coifman2005} that a random walk matrix $\Pm$---defined on $\{x_{i}\}_{i=1}^{n}$ sampled from an underlying density $p(x)$ proportional to $e^{-U(x)/2}$ for some potential function $U(x)$---converges under a suitable scaling as $n\rightarrow\infty$ to the stochastic differential equation (SDE) $\dot{x}(t)=-\nabla U(x) +\sqrt{2}\dot{w}(t),$ where $w$ is the Brownian motion.  Moreover, the top eigenvectors of $\Pm$ converge to the eigenfunctions of the Fokker-Planck operator $\mathcal{L}\psi(x)=\Delta \psi-\nabla\psi\cdot \nabla U=-\mu\psi(x).$  The characteristic time scales of the SDE determine the structure of the leading eigenfunctions of $\mathcal{L}$ \citep{Gardiner2009_Stochastic}: they correspond to the time scales of the slowest transitions between different clusters and the equilibrium times within clusters. The relationships between these quantities determine which eigenfunctions of $\mathcal{L}$ (or $\Pm$) reveal the cluster structure in the data.  Related connections between normalized cuts and exit times from certain clusters are analyzed in \citep{Gavish2013_Normalized}. In \citep{Nadler2007_Fundamental} several examples are presented, including that of three Gaussian clusters of different sizes and densities shown in Figure \ref{fig:NadlerGaussians}.  These data cannot be clustered by spectral clustering using either the second \citep{Shi2000} or the second, third, and fourth eigenfunction \citep{Ng2002}.


\subsubsection{Density and Mode-Based Methods}

Density and mode-based clustering methods detect regions of high-density and low-density to determine clusters.  The \emph{DBSCAN} \citep{Ester1996} and \emph{DBCLASD} \citep{Xu1998} algorithms assign to the same cluster points that are close and have many near neighbors, and flag as outliers points that lie alone in low-density regions.  The \emph{mean-shift} algorithm \citep{Fukunaga1975, Comaniciu2002} pushes points towards regions of high-density, and associate clusters with these high-density points.  In this sense, mean-shift clustering computes \emph{modes} in the data and assigns points to their nearest mode.  Both DBSCAN and mean-shift clustering suffer from a lack of robustness to outliers and depend strongly on parameter choices.  

The recent and popular \emph{fast search and find of density peaks clustering algorithm} (FSFDPC) \citep{Rodriguez2014} proposes to address these weaknesses.  This method characterizes class modes as points of high-density that are far in Euclidean distance from other points of high-density.  This algorithm has seen abundant applications to scientific problems \citep{Spitzer2015, Wiwie2015comparing, Sun2015, Rossant2016, Wang2016semantic, Jia2016}; indeed, this algorithm would cluster the example in Figure \ref{fig:NadlerGaussians} correctly.  However, little mathematical justification for this approach has been given.  We show that the standard FSFDPC method, while very popular in scientific applications, does not correctly cluster the data unless strong assumptions on the data are satisfied.  The main reason is that Euclidean distances are used to find modes, which is inappropriate for data drawn from mixtures of multimodal distributions or distributions nearly supported on nonlinear sets, see for example Figure \ref{fig:NonlinearLUNDversusFSFDPC}.


\subsection{Background on Diffusion Distances}
\label{sec:BackgroundDiffusionDistance}
One of the primary tools in the proposed clustering algorithm is \emph{diffusion distances}, a class of data-dependent distances computed by constructing Markov processes on data \citep{Coifman2005, Coifman2006} that capture its intrinsic structure.  We consider diffusion on the point cloud $X=\{x_{i}\}_{i=1}^{n}\subset\mathbb{R}^{D}$ via a Markov chain \citep{Levin2009} with state space $X$.  Let $\Pm$ be the corresponding $n\times n$ transition matrix.  The following shall be referred to as the \emph{usual assumptions} on $\Pm$: $\Pm$ is reversible, irreducible, aperiodic, and therefore ergodic.  A common construction for $\Pm$, and the one we consider in the algorithmic sections of this article, is to first compute a \emph{weight matrix} $W$, where $W_{ij}=e^{-d(x_{i},x_{j})^{2}/\sigma^{2}}, \ i\neq j$ for some appropriate scale parameter $\sigma\in (0,\infty)$, and $d:\mathbb{R}^{D}\times\mathbb{R}^{D}\rightarrow\mathbb{R}$ is a metric, typically the $\ell^{2}$ norm.  The parameter $\sigma$ encodes the interaction radius of each point: $\sigma$ large allows for long-range interactions between points that are far apart in $\ell^{2}$ norm, while $\sigma$ small allows only for short-range interactions between points that are close in $\ell^{2}$ norm.  $\Pm$ is constructed from $W$ by row-normalizing $W$ so that\footnote{Note that with some abuse of notation we denote the entries of $\Pm$ by $P_{ij}$, reserving the notation $\Pm_{ij}$  for block submatrices of $\Pm$ that will be introduced and used later.} $\sum_{j=1}^{n}P_{ij}=1, \ \forall i=1,\dots,n$.
A unique stationary distribution $\boldpi\in\mathbb{R}^{1\times n}$ satisfying $\boldpi \Pm=\boldpi$ is guaranteed to exist since $\Pm$ is ergodic.  In fact, $\boldpi_{i}=d(x_{i})/\sum_{x\in X}d(x),$ where $d(x_{i})=\sum\limits_{x_{j}\in X}P_{ij}$ is the \emph{degree} of $x_{i}$.

Diffusion processes on graphs lead to a data-dependent notion of distance, known as \emph{diffusion distance} \citep{Coifman2005, Coifman2006}.  While the focus of the construction is on diffusion distances and the diffusion process itself, we mention that \emph{diffusion maps} provide a way of computing and visualizing diffusion distances in Euclidean space, and may be understood as a type of non-linear dimension reduction, in which data in a high number of dimensions may be embedded in a low-dimensional space by a nonlinear coordinate transformation.  In this regard, diffusion maps are related to nonlinear dimension reduction techniques such as Isomap \citep{Tenenbaum2000}, Laplacian eigenmaps \citep{Belkin2003}, and local linear embedding \citep{Roweis2000}, among many others. We focus on the (random walk) process itself.

\begin{defn}\label{defn:DD}
Let $X=\{x_{i}\}_{i=1}^{n}\subset\mathbb{R}^{D}$ and let $\Pm$ be a Markov process on $X$ satisfying the usual assumptions and with stationary distribution $\boldpi$.  Let $\boldpi_{0}$ be a probability distribution on $X$.  For points $x_{i},x_{j}\in X$, let $p_{t}(x_{i},x_{j})=(P^{t})_{ij},$ for some $t\in[0,\infty)$.  The \emph{diffusion distance at time $t$} between $x,y\in X$ is defined, for $\nu=\boldpi_{0}/\boldpi$, by 
$$ 
D_{t}(x,y)=\sqrt{\sum_{u\in X}\left(p_{t}(x,u)-p_{t}(y,u)\right)^{2} \nu(u)}=||p_t(x,\cdot)-p_t(y,\cdot)||_{\ell^2(\nu)}\,.
$$
\end{defn}

For an initial distribution $\boldpi_{0}\in \mathbb{R}^{n}$ on $X$, the vector $\boldpi_{0} \Pm^{t}$ is the probability over states at time $t\ge 0$.  As $t$ increases, this diffusion process on $X$ evolves according to the connections between the points encoded by $\Pm$.  The computation of $D_{t}(x,y)$ involves summing over all paths of length $t$ connecting $x$ to $y$, hence $D_{t}(x,y)$ is small if $x,y$ are strongly connected in the graph according to $\Pm^{t}$, and large if $x,y$ are weakly connected in the graph.  It is known that if the underlying graph is generated from data sampled from a low-dimensional structure, such as a manifold, then diffusion distance parametrizes this low-dimensional structure \citep{Coifman2005, Jones2008manifold, Singer2009detecting, Singer2012vector, Talmon2018latent, Singer2016spectral}.  Indeed, diffusion distances admit a formulation in terms of the eigenfunctions of $\Pm$:

\begin{align}\label{eqn:EigenfunctionsDD}D_{t}^{2}(x,y)=\sum_{\ell=1}^{n}\lambda_{\ell}^{2t}(\psi_{\ell}(x)-\psi_{\ell}(y))^{2}=\lambda_{2}^{2t}\left((\psi_{2}(x)-\psi_{2}(y))^{2}+\sum_{\ell=3}^{\infty}\left(\frac{\lambda_{\ell}}{\lambda_{2}}\right)^{2t}(\psi_{\ell}(x)-\psi_{\ell}(y))^{2}\right)\end{align} 

where $\{(\psi_{\ell},\lambda_{\ell})\}_{\ell=1}^{n}$ is the $\boldpi$-normalized spectral decomposition of $\Pm$, ordered so that $1=\lambda_{1}>\lambda_{2}\ge\lambda_{3}\ge\dots\ge\lambda_{n}>-1$, and noting that $\psi_{1}$ is constant by construction.

Diffusion distances are parametrized by $t$, which measures how long the diffusion process on $\G$ has run when the distances are computed.  Small values of $t$ allow a small amount of diffusion, which may prevent the interesting geometry of $X$ from being discovered, but provide detailed, fine scale information. Large values of $t$ allow the diffusion process to run for so long that the fine geometry may be washed out, leaving only coarse scale information.  We will relate properties of clustered data $X$ to $t$.


\section{Data Model and Overview of Main Results}\label{sec:DataModel}

Among the main results of this article are sufficient conditions for clustering certain discrete data $X\subset \mathbb{R}^{D}$.   The data $X$ is modeled as a realization from a probability distribution 

\begin{align}\label{eqn:MixtureModel}\mu=\sum_{k=1}^{\numclust}w_{k}\mu_{k}, \ w_k\ge0, \ \sum_{k=1}^{\numclust}w_{k}=1,\end{align}

where each $\mu_{k}$ is a probability measure.  Intuitively, our results require \emph{separation and cohesion} conditions on $\{\mu_{k}\}_{k=1}^{\numclust}$.  That is, each $\mu_{k}$ is far from $\mu_{k'}$, $k\neq k'$ and connections are strong (in a suitable sense) within each $\mu_{k}$.
$X = \{x_{i}\}_{i=1}^{n}$ is generated by drawing, for each $i$, one of the $\numclust$ clusters, say $k_i$, according to the multinomial distribution with parameters $(w_1,\dots,w_\numclust)$, and then drawing $x_i$ from $\mu_{k_i}$.  The clusters in the data are defined as the subsets of $X$ whose samples were drawn from a particular $\mu_{k}$, that is, we define the cluster $X_k:=\{x_i\in X: k_i=k\}$.  Given $X$, the goal of clustering is to estimate these $X_{k}$'s accurately, and ideally also determine $\numclust$.  Throughout the theoretical analysis of this article, we will define the accuracy of a set of labels $\{Y_{i}\}_{i=1}^{n}$ learned from an unsupervised algorithm to be $|\{i \ | \ Y_{i}=k_{i}\}|/n,$ i.e. the proportion of points correctly labeled.  

We consider a \emph{nonparametric model} which makes few explicit assumptions on $\mu$.  We will allow $\mu_{k}$ to have nonlinear support and be multimodal (i.e. with multiple high-density regions). These features cause prominent clustering methods to fail, e.g. $\numclust$-means, which requires spherical or extremely separated clusters; spectral clustering, which often fails for highly elongated clusters or clusters of different sizes and densities; or density methods, which are highly sensitive to noise and multi-modality of the distributions.   Two simple, motivating examples are in Figure \ref{fig:MotivatingDatasets}.  They feature variable densities, variable levels of connectivity, both within and across clusters, and (for the second example) nonlinear cluster shapes.  

\begin{figure}[!htb] 
\centering
\begin{subfigure}[t]{.24\textwidth}
\includegraphics[width=\textwidth]{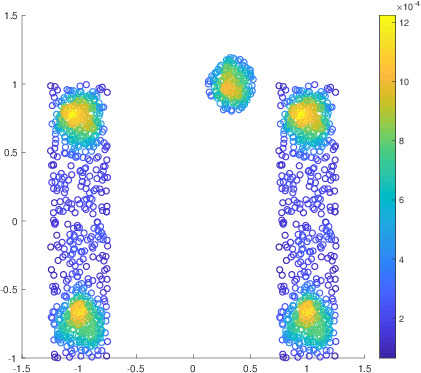}
\subcaption{Bottleneck data}
\end{subfigure}
\begin{subfigure}[t]{.24\textwidth}
\includegraphics[width=\textwidth]{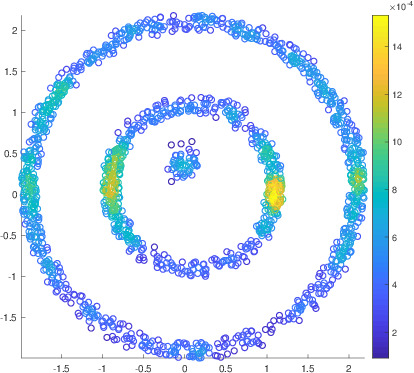}
\subcaption{Nonlinear data}
\end{subfigure}
\begin{subfigure}[t]{.24\textwidth}
\includegraphics[width=\textwidth]{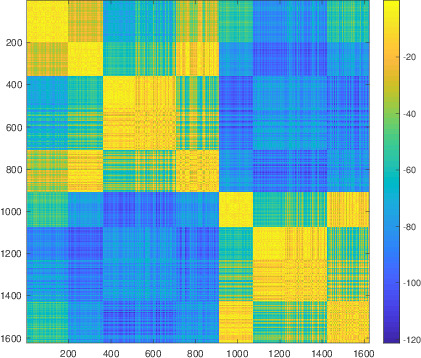}
\subcaption{$\Pm$ for data in (a).}
\end{subfigure}
\begin{subfigure}[t]{.24\textwidth}
\includegraphics[width=\textwidth]{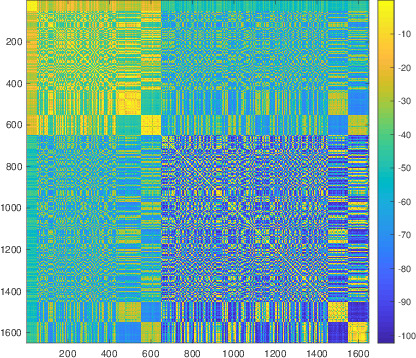}
\subcaption{$\Pm$ for data in (b).}
\end{subfigure}
\caption{\label{fig:MotivatingDatasets}(a), (b) shows two datasets---linear and nonlinear---colored by density.  In (c), (d), we show the corresponding Markov transition matrices $\Pm$, with entry magnitudes shown in $\log_{10}$ scale.  The Markov chains are ergodic, but close to being reducible.  The transition matrices were constructed using the Gaussian kernel as in Section \ref{sec:BackgroundDiffusionDistance}, with the Euclidean distance and $\sigma=.25$}
\end{figure}

The estimates for the behavior of diffusion distances that we derive will then be leveraged to prove that the LUND scheme correctly labels the points, and also estimates the number of clusters, while other clustering schemes fail to cluster these data sets correctly.

\subsection{Summary of Main Results}

Our first result shows that the within-cluster and between-cluster diffusion distances can be controlled, as soon as $\Pm$ is approximately block constant in some sense.  
Define the (worst-case) within-cluster and between-cluster diffusion distances as:
\begin{align}
\label{eqn:InBetween}
\Din=\max_{k}\max_{x,y\in X_{k}}D_{t}(x,y), \quad \Dbtw=\min_{k\neq k'}\min_{x\in X_{k}, y\in X_{k'}}D_{t}(x,y)\,.
\end{align}

The following simplification of Theorem \ref{thm:MainResult} holds:
\begin{thm}
\label{thm:MainResultSummary}
Let $X=\bigcup_{k=1}^{\numclust}X_{k}$ and let $\Pm$ be a corresponding Markov transition matrix on $X$, inducing diffusion distances $\{D_{t}\}_{t\ge0}$. 
Then there exist constants $\{C_{i}\}_{i=1}^{5}\ge0$ such that the following holds:  for any $\epsilon>0$, and for any $t$ satisfying $C_{1}\ln\left(\frac{C_{2}}{\epsilon}\right)<t<C_{3}\epsilon$, we have 
$$
\Din\le C_{4}\epsilon,\quad \Dbtw\ge C_{5}-C_{4}\epsilon.
$$
\end{thm}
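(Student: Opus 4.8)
The plan is to work entirely from the spectral representation of the diffusion distance in \eqref{eqn:EigenfunctionsDD} and to exploit the two-scale spectrum that an approximately block-constant $\Pm$ must possess. Writing the clusters $X_1,\dots,X_\numclust$ as the diagonal blocks of $\Pm$, I first want to record that when the between-cluster coupling is weak the stochastic matrix is a small perturbation of a block-diagonal chain whose eigenvalue $1$ has multiplicity $\numclust$. Perturbation theory then yields a spectral gap: the top $\numclust$ eigenvalues satisfy $1=\lambda_1>\lambda_2\ge\cdots\ge\lambda_\numclust\ge 1-\delta$ for a small $\delta$ controlled by the between-cluster transition mass, while the remaining \emph{fast} eigenvalues obey $|\lambda_\ell|\le\lambda_*<1$ for $\ell>\numclust$, with $\lambda_*$ governed by the within-cluster spectral gap. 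In the same step I want the associated \emph{slow} eigenfunctions $\psi_2,\dots,\psi_\numclust$ to be nearly piecewise constant on the clusters, since in the decoupled limit the top eigenspace is exactly $\mathrm{span}\{\mathbf 1_{X_1},\dots,\mathbf 1_{X_\numclust}\}$.

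Granting the gap, the within-cluster bound follows by splitting \eqref{eqn:EigenfunctionsDD} at $\ell=\numclust$. For $x,y$ in a common cluster the slow part $\sum_{\ell=2}^{\numclust}\lambda_\ell^{2t}(\psi_\ell(x)-\psi_\ell(y))^2$ is small because each $\psi_\ell$ is nearly constant on that cluster (and $\lambda_\ell^{2t}\le 1$), while the fast tail is bounded using the completeness identity $\sum_{\ell}(\psi_\ell(x)-\psi_\ell(y))^2=\boldpi(x)^{-1}+\boldpi(y)^{-1}$, giving $\sum_{\ell>\numclust}\lambda_\ell^{2t}(\psi_\ell(x)-\psi_\ell(y))^2\le 2\lambda_*^{2t}/\min_u\boldpi(u)$. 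Requiring this to be $O(\epsilon^2)$ forces $\lambda_*^{2t}\lesssim\epsilon^2$, i.e.\ $t>C_1\ln(C_2/\epsilon)$ with $C_1\sim 1/\ln(1/\lambda_*)$; this is exactly the lower endpoint of the admissible window, and combined with the slow estimate it yields $\Din\le C_4\epsilon$.

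For the between-cluster bound I would instead discard the nonnegative fast terms and keep only the slow ones: for $x\in X_k$, $y\in X_{k'}$ with $k\ne k'$, $D_t^2(x,y)\ge\lambda_\numclust^{2t}\sum_{\ell=2}^{\numclust}(\psi_\ell(x)-\psi_\ell(y))^2\ge c_0\,\lambda_\numclust^{2t}$, where $c_0>0$ is the minimal squared separation of distinct clusters in the slow embedding (strictly positive because the near-indicator eigenfunctions separate clusters). Using $\lambda_\numclust\ge 1-\delta$ and Bernoulli's inequality, $\lambda_\numclust^{2t}\ge 1-2t\delta$, so $\Dbtw\ge\sqrt{c_0}\,(1-2t\delta)\ge\sqrt{c_0}-C_4\epsilon$ as soon as $t<C_3\epsilon$ with $C_3\sim 1/\delta$. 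Setting $C_5=\sqrt{c_0}$ and taking $C_4$ to be the larger of the two constants completes the argument; note the window $C_1\ln(C_2/\epsilon)<t<C_3\epsilon$ is non-empty precisely when the coupling $\delta$ is small relative to the within-cluster gap, which is the mesoscopic regime the theorem targets.

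The routine part is the mode-splitting in \eqref{eqn:EigenfunctionsDD} together with the two elementary inequalities above. The real work — and the main obstacle — is the first step: turning the qualitative hypothesis that $\Pm$ is \emph{approximately block constant} into quantitative control of (i) the spectral gap between $\lambda_\numclust$ and $\lambda_{\numclust+1}$ and (ii) the near-constancy of $\psi_2,\dots,\psi_\numclust$ on each cluster, both with explicit constants $\delta$, $\lambda_*$, $c_0$. I expect to obtain these from matrix perturbation estimates for nearly decoupled reversible chains (e.g.\ Davis--Kahan bounds for the invariant subspaces, or direct Cheeger/conductance-type estimates relating $\delta$ to the between-cluster transition mass and $1-\lambda_*$ to the within-cluster conductance); propagating these constants cleanly through the two bounds so that a single pair $(C_4,C_5)$ and a genuinely non-empty time window emerge is where the care is required, and is presumably where the full Theorem \ref{thm:MainResult} supplies the precise hypotheses.
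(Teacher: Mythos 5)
Your route through the spectral representation \eqref{eqn:EigenfunctionsDD} is genuinely different from the paper's. The paper never perturbs eigenfunctions of $\Pm$ at all: it forms the stochastic complements $\Sm_{kk}=\Pm_{kk}+\Pm_{k*}(\I-\Pm_{k})^{-1}\Pm_{*k}$, invokes Meyer's nearly-reducible-chain estimate $\|\Pm^{t}-\Sm^{\infty}\|_{\infty}\le t\delta+\kappa|\lambda_{\numclust+1}|^{t}$ (Theorem \ref{thm:NearReducibility}, whose proof only diagonalizes the exactly decoupled matrix $\Sm$), converts $\ell^{1}$ to $\ell^{2}$ control via the balance factor $\gamma(t)$, and then obtains both bounds by the triangle inequality against $\Sm^{\infty}$, which is \emph{exactly} block-constant. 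That last point is what makes the paper's within-cluster bound clean: $\|\sinf(x,\cdot)-\sinf(y,\cdot)\|_{\ell^{2}(\nu)}=0$ exactly for $x,y$ in the same cluster, so the entire bound $\Din\le 2\epsilon(t)\gamma(t)/\sqrt{n}$ comes from the operator-norm estimate, with no residual term.

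The gap in your version is the one you name yourself, but it is worth being precise about why it bites. First, the quantitative perturbation step (the gap between $\lambda_{\numclust}$ and $\lambda_{\numclust+1}$, the near-constancy of $\psi_{2},\dots,\psi_{\numclust}$ on clusters, and the lower bound $c_{0}$ on the separation of clusters in the slow embedding) is the entire difficulty; Davis--Kahan controls only the invariant subspace, and since $\lambda_{2},\dots,\lambda_{\numclust}$ are nearly degenerate you must argue at the level of the subspace and then recover pointwise statements about the embedding, which is essentially the Schiebinger-style analysis the paper is deliberately circumventing. Second, and more structurally: your slow-part contribution to $\Din$, namely $\sum_{\ell=2}^{\numclust}\lambda_{\ell}^{2t}(\psi_{\ell}(x)-\psi_{\ell}(y))^{2}$, is bounded by a time-independent oscillation term that does not decrease as $t$ grows. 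The theorem demands $\Din\le C_{4}\epsilon$ for every $\epsilon$ for which the window is non-empty, so you need this fixed floor to sit below $C_{4}\epsilon_{\min}$, where $\epsilon_{\min}$ is the smallest $\epsilon$ with $C_{1}\ln(C_{2}/\epsilon)<C_{3}\epsilon$. That compatibility condition ties the perturbation error (of order $\delta$ divided by the spectral gap) to the window endpoints and is not automatic; it is an extra hypothesis your argument would have to impose and verify. The paper's decomposition has no such floor, which is the concrete payoff of comparing $\Pm^{t}$ to $\Sm^{\infty}$ rather than splitting the spectrum of $\Pm$ itself.
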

The constants depend on a flexible notion of clusterability of $\Pm$.  To get a sense of these constants, let $\Sm^{\infty}$ be an idealized version of $\Pm$, in which all edges from points between clusters are deleted, and redirected back into the cluster, in a sense that will be made precise in Section \ref{sec:DiffusionProcesses}.  If $\Sm^{\infty}$ is constant on each diagonal block corresponding to a cluster, then $C_{1}\approx 0$ and $C_{2}\approx 1$.  If $\Sm^{\infty}$ is a close approximation to $\Pm$, then $C_{3}$ will be large.   If each row of $\Sm^{\infty}-\Pm$ is nearly constant, then $C_{4}\approx 1/\sqrt{n}$.  If $\Sm^{\infty}$ is approximately block constant with blocks of the same size, then $C_{5}\approx{1/\sqrt{n}}$.  If all these conditions hold, the ratio of the bounds $$\frac{\Dbtw}{\Din}\ge\frac{C_{5}-C_{4}\epsilon}{C_{4}\epsilon}=O\left(\frac{1}{\epsilon}\right)$$ suggests strong separation with diffusion distances independently of $n$ when $\epsilon$ is small.  

The LUND scheme is based on the assumption that modes (i.e. high-density points) of the clusters $\{X_{k}\}_{k=1}^{\numclust}$ should have high-density and be far in diffusion distance from other points of high-density, regardless of the shape of the support of the distribution.  Let $p$ be a density estimator on $X$, for example $p(x)=\frac1Z\sum_{y\in NN(x)}e^{-\|x-y\|_{2}^{2}/\sigma^{2}},$ for some choice of $\sigma$ and set of nearest neighbors $NN(x)$, normalized by $Z$ so that $\sum_{x\in X}p(x)=1$.  Given $D_{t}$ defined on $X$, let

\begin{align}\label{eqn:rho_t}\rho_{t}(x)=\begin{cases}\min\limits_{y\in X}\{D_{t}(x,y) \ |\  p(y)\ge p(x)  \}, & x\neq \argmax\limits_{y\in X} p(y), \\ \max\limits_{y\in X}D_{t}(x,y), & x = \argmax\limits_{y\in X} p(y).\end{cases}\end{align}

The function $\rho_{t}$ measures the diffusion distance of a point to its $D_{t}$-nearest neighbor of higher empirical density.   LUND proceeds by estimating one representative mode from each $X_{k}$, then assigning all labels based on these learned modes.  The LUND algorithm is detailed in Algorithm \ref{alg:LUND}.  

\begin{algorithm}[!tb]
	\caption{\label{alg}Learning by Unsupervised Nonlinear Diffusion (LUND) Algorithm}
	\textbf{Input:} $X$ (data), $\sigma$ (scaling parameter), $t$ (time parameter), $\tau$ (threshold)\\
	\textbf{Output:} $Y$ (cluster assignments), $\hat{K}$ (estimated number of clusters)\\
	\begin{algorithmic}[1]
		\STATE Build Markov transition matrix $\Pm$ using scale parameter $\sigma$.  
		\STATE Compute an empirical density estimate $p(x)$ for all $x\in X$.  
		\STATE Compute $\rho_{t}(x)$ for all $x\in X$.
		\STATE Compute $\Dt(x)=\rho_{t}(x)p(x)$ for all $x\in X$.
		\STATE Sort $X$ according to $\Dt(x)$ in descending order as $\{x_{m_{i}}\}_{i=1}^{n}, n=|X|$.  
		\STATE Compute $\hat{K}=\inf_{k}\frac{\Dt(x_{m_{k}})}{\Dt(x_{m_{k+1}})}>\tau$.
		\STATE Assign $Y(x_{m_{i}})=i, \ i=1,\dots, \hat{K}$, and $Y(x_{m_{i}})=0, \ i=\hat{K}+1,\dots,n$.
		\STATE Sort $X$ according to $p(x)$ in decreasing order as $\{x_{\ell_{i}}\}_{i=1}^{n}$.  
		\FOR{$i=1:n$}
		\IF{$Y(x_{\ell_{i}})$=0}
		\STATE $Y(x_{\ell_{i}})=Y(x^{*}), \  x^{*}=\argmin_{y}\{D_{t}(x_{\ell_{i}},y) \ | \ p(y)\ge p(x_{\ell_{i}}) \text{ and } y \text{ is labeled}\}$.
		\ENDIF
		\ENDFOR	

		\end{algorithmic}
		\label{alg:LUND}
\end{algorithm}
  
The LUND algorithm combines density estimation (as captured by $p(x)$) with diffusion geometry (as captured by $\rho_{t}(x)$).  The crucial parameter of LUND is the time parameter, which determines the diffusion distance $D_{t}$ used.  Theorem \ref{thm:MainResultSummary} may be used to show that there is a wide range of $t$ for which applying the proposed LUND algorithm is provably accurate.  The first concern is to understand conditions guaranteeing these modes are estimated accurately, the second that all other points are consequently labeled correctly.  Let $\M=\{p(x) \ | \ \exists k \text{ such that }x = \argmax_{y\in X_{k}} p(y)\}$ be the set of cluster density maxima.  Define $\Dt(x)=p(x)\rho_{t}(x)$.   The following result summarizes Corollaries \ref{cor:NumClusters} and \ref{cor:PerformanceGuarantees}.

\begin{thm}\label{thm:PerformanceGuaranteesSummary}  Suppose $X=\bigcup_{k=1}^{\numclust}X_{k}$ as above. LUND labels all points accurately, and correctly estimates $\numclust$, provided that
\begin{equation}
\frac{\Din}{\Dbtw}<\frac{\min(\M)}{\max(\M)}\,.
\label{e:DratioMratio}
\end{equation}
\end{thm}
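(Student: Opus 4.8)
The plan is to reduce the claim to two structural facts about the selection statistic $\Dt(x)=p(x)\rho_{t}(x)$. First, that the $\numclust$ cluster modes $\{x_{k}^{\ast}\}_{k=1}^{\numclust}$, where $x_{k}^{\ast}=\argmax_{y\in X_{k}}p(y)$, are exactly the $\numclust$ points of largest $\Dt$-value and are separated by a detectable multiplicative gap from every remaining point; and second, that the density-ordered propagation loop (lines 8--13 of Algorithm \ref{alg:LUND}) assigns each non-mode point to the mode of its own cluster. Both facts descend from a single dichotomy for $\rho_{t}$, which I would establish first.

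The heart of the argument is that dichotomy. For a cluster mode $x_{k}^{\ast}$ that is not the global maximizer, any $y\neq x_{k}^{\ast}$ with $p(y)\ge p(x_{k}^{\ast})$ must lie outside $X_{k}$, since $x_{k}^{\ast}$ maximizes $p$ over $X_{k}$; hence by \eqref{eqn:rho_t} and the definition of $\Dbtw$ in \eqref{eqn:InBetween} we get $\rho_{t}(x_{k}^{\ast})\ge\Dbtw$, and the global maximizer satisfies $\rho_{t}=\max_{y}D_{t}(\cdot,y)\ge\Dbtw$ as well. Conversely, for a non-mode $x\in X_{k}$ the mode $x_{k}^{\ast}$ is an eligible point of higher density in the same cluster, so $\rho_{t}(x)\le D_{t}(x,x_{k}^{\ast})\le\Din$. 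Multiplying by density and using $\min(\M)\le p(x_{k}^{\ast})$ and $p(x)\le\max(\M)$ gives
\[
\Dt(x_{k}^{\ast})\ge\min(\M)\,\Dbtw,\qquad \Dt(x)\le\max(\M)\,\Din\ \ (x\text{ a non-mode}).
\]
Since hypothesis \eqref{e:DratioMratio} is exactly $\max(\M)\Din<\min(\M)\Dbtw$, the modes occupy the top $\numclust$ positions of the $\Dt$-sorted list, and the ratio between the $\numclust$-th and $(\numclust+1)$-th sorted values is at least $\min(\M)\Dbtw/(\max(\M)\Din)>1$. For a threshold $\tau$ in the range guaranteed by Corollary \ref{cor:NumClusters}, the stopping rule in line 6 then returns $\hat{K}=\numclust$, and line 7 assigns distinct labels to the $\numclust$ modes, one per cluster.

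For correct labeling I would induct on the density order used by the loop. When a non-mode $x\in X_{k}$ is processed, every point of strictly higher density has already been visited and hence carries a label, so the constrained minimizer $x^{\ast}$ of line 11 coincides with the unrestricted $\rho_{t}$-neighbor of $x$. Because $\min(\M)/\max(\M)\le1$ forces $\Din<\Dbtw$, the within-cluster candidate $x_{k}^{\ast}$ is strictly closer than any point of another cluster, so $x^{\ast}\in X_{k}$; by the inductive hypothesis $x^{\ast}$ already carries the label of $X_{k}$'s mode, which is passed to $x$. The base case is line 7. Iterating shows every point of $X_{k}$ inherits the label of $x_{k}^{\ast}$, so LUND labels all points accurately (up to the relabeling of clusters).

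The main obstacle, and where the genuine work sits, is the interface between this clean $\Dt$-dichotomy and the discrete stopping rule of line 6: the bounds only control the gap at position $\numclust$, whereas the $\inf_{k}$ defining $\hat{K}$ also sees the ratios \emph{among} the modes, which \eqref{e:DratioMratio} alone does not constrain. Pinning down the admissible range of $\tau$---so that no earlier within-mode ratio trips the threshold before position $\numclust$---is precisely what Corollary \ref{cor:NumClusters} must furnish, and I would dispatch there the edge cases (ties in $p$, and the exclusion of $y=x$ in \eqref{eqn:rho_t}) that turn the otherwise-immediate dichotomy into a rigorous statement.
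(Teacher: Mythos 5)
Your proposal is correct and follows essentially the same route as the paper: the bounds $\Dt(\text{mode})\ge\min(\M)\,\Dbtw$ versus $\Dt(\text{non-mode})\le\max(\M)\,\Din$ are exactly the paper's Theorem \ref{thm:Maximizers} (which it proves by induction on discovered modes rather than your direct pointwise dichotomy, a cosmetic difference), and the gap at position $\numclust$ plus the density-ordered propagation using $\Din<\Dbtw$ are precisely Corollaries \ref{cor:NumClusters} and \ref{cor:PerformanceGuarantees}. You also correctly locate the residual work in the choice of $\tau$ and the ratios among the top $\numclust$ values, which is where the paper places it as well.
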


Theorem \ref{thm:MainResultSummary} suggests that the condition \eqref{e:DratioMratio} will hold for a wide range of $t$ for a variety of data $X$, so that together with Theorem \ref{thm:PerformanceGuaranteesSummary}, the proposed method correctly labels the data and estimates the number of clusters $\numclust$ correctly.  Note that \eqref{e:DratioMratio} implicitly relates the density of the separate clusters to their geometric properties.  Indeed, if the clusters are well-separated and cohesive enough, then ${\Din}/{\Dbtw}$ is very small, and a large discrepancy in the density of the clusters can be tolerated in inequality \eqref{e:DratioMratio}.  Note that $\Din, \Dbtw, \min(\M),$ and $\max(\M)$ are invariant to increasing $n$, as long as the scale parameter in the kernel used for constructing diffusion distances and the kernel density estimator adjusts according to standard convergence results for graph Laplacians \citep{Belkin2005_Towards,Belkin2007_Convergence,Trillos2016_2,Trillos2018error}. In this sense these quantities are properties of the mixture model.

We note moreover that Theorem \ref{thm:PerformanceGuaranteesSummary} suggests $t$ must be taken in a mesoscopic range, that is, sufficiently far from 0 but also bounded.  Indeed, for $t$ small, $\Din$ is not necessarily small, as the Markov process has not mixed locally yet.  For $t$ large, $\boldpi_{0}\Pm^{t}$ converges to the stationary distribution for all $\boldpi_{0}$, and the ratio $\frac{\Din}{\Dbtw}$ is not necessarily small, since $\Dbtw$ will be small.  In this case, clusters would only be detectable based on density, requiring thresholding, which is susceptible to spurious identification of regions around local density maxima as clusters.


\subsection{Comparisons with Related Clustering Algorithms}\label{subsec:Comparisons}
LUND combines graph-based methods with density-based methods, and it is therefore natural to compare it with spectral clustering and FSFDPC.

\subsubsection{Comparison with Spectral Clustering}

In Theorem \ref{thm:Meila} the graph-cut problem in spectral clustering is related to the probability of transitioning between clusters in \emph{one} time step. LUND uses intermediate time scales to separate clusters, namely the time scale at which the random walk has almost reached the stationary distribution conditioned on not leaving a cluster, and has not yet transitioned (with sizeable probability) to a different cluster.  

Spectral clustering enjoys performance guarantees under a range of model assumptions \citep{Chen2009foundations, Chen2009spectral, Arias2011, AriasChen2011, Vidal2011, Zhang2012, Elhamifar2013, Wang2015_Multi, Soltanolkotabi2014, Arias2017,Little2017Path}.  Under nonparametric assumptions on (\ref{eqn:MixtureModel}) with $\numclust=2$, \citet{Shi2009} show that the principal eigenfunctions and eigenvalues of the associated kernel operator $\mathcal{K}(f)(x)={\int K(x,y)f(x)d\mu(y)}$ are closely approximated by the principal spectra of the kernel operators $\mathcal{K}_{i}(f)(x)={\int K(x,y)f(x)d\mu_{i}(y)}, i=1,2$, possibly mixed up, depending on the spectra of $\mathcal{K}_{1},\mathcal{K}_{2}$ and the weights $w_{1},w_{2}$.  This allows for the number of classes to be estimated accurately in some situations, and for points to be labeled by determining which distribution certain eigenvectors come from.  

The related work of \citet{Schiebinger2015} provides sufficient conditions under the nonparametric model (\ref{eqn:MixtureModel}) for the low-dimensional embedding of spectral clustering to map well-separated, coherent regions in input space to approximately orthogonal regions in the embedding space.  This in turn implies that $\numclust$-means clustering succeeds with high probability, thereby yielding guarantees on the accuracy of spectral clustering.  These results depend on two quantities: with $\mu$ as in (\ref{eqn:MixtureModel}) and $\K$ a kernel, they define separation and cohesion quantities (LUND uses $\Dbtw, \Din$), respectively, as $\mathcal{S}(\mu)=\max_{i\neq j}\mathcal{S}(\mu_{i},\mu_{j}), \GammaMin(\mu)=\min_{i=1,\dots,\numclust}\Gamma(\mu_{i})$, where 

\begin{eqnarray*}
\mathcal{S}(\mu_{i},\mu_{j})=\frac{1}{p(X)}\int_{X}\int_{X}\K(x,y)d\mu_{i}(x)d\mu_{j}(y),\Gamma(\mu_{i})=\inf_{S\subset X}\frac{p(X)}{p(S)p(S^{c})}\int_{S}\int_{S^{c}}\K(x,y)d\mu_{i}(x)d\mu_{i}(y),\end{eqnarray*}
 $p(S)={\int_{S} \int_{X}\K(x,y)d\mu(x)d\mu(y)}.$  A major result of \citet{Schiebinger2015} is that spectral clustering is accurate with high probability depending on a confidence parameter $\beta$ and the number of data samples $n$ if   
 
\begin{align}
\label{eqn:SchiebingerBound}
\frac{\sqrt{\numclust(\mathcal{S}(\mu)+\mathcal{C}(\mu))}}{\min_{i=1,\dots \numclust}w_{i}}+\left(\frac{1}{\sqrt{n}}+\beta\right)\lesssim \GammaMin^{4}(\mu)\,,
\end{align}

where $\mathcal{C}(\mu)$ is a ``coupling parameter" that is not germane to the present discussion.  (\ref{eqn:SchiebingerBound}) holds when the within-cluster coherence $\GammaMin(\mu)$ is large relative to the similarity between clusters $\mathcal{S}(\mu)$.  Fixing the separation $\GammaMin(\mu)$, (\ref{eqn:SchiebingerBound}) is more likely to hold if the clusters are relatively spherical in shape.  For example, in Figure \ref{fig:CompactVersusElongated} we represent two data sets, each consisting of two clusters, with comparable $\mathcal{S}(\mu)$, but substantially different $\GammaMin(\mu)$.  Also note that in the finite sample case when $\frac{1}{\sqrt{n}}$ in \eqref{eqn:SchiebingerBound} is non-negligible, the importance of $\GammaMin$ being not too small increases.  

\begin{figure}[!htb] 
\centering
\begin{subfigure}{.49\textwidth}
\includegraphics[width=\textwidth]{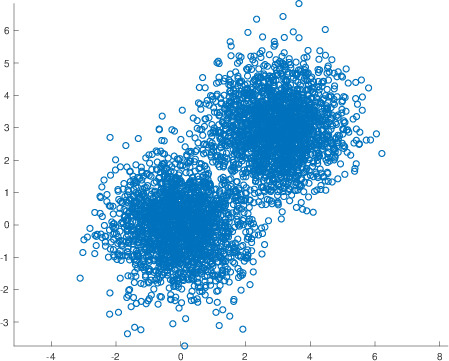}
\subcaption{$\mathcal{S}(\mu)=0.0533, \GammaMin(\mu)\approx 0.9550$.}
\end{subfigure}
\begin{subfigure}{.49\textwidth}
\includegraphics[width=\textwidth]{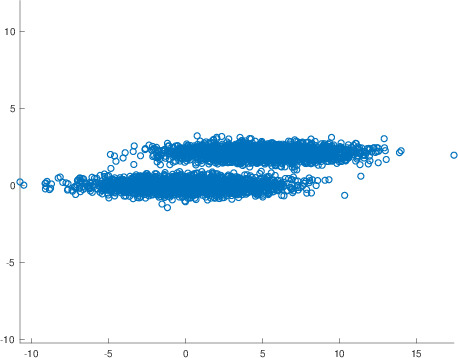}
\subcaption{$\mathcal{S}(\mu)=0.0523,\  \GammaMin(\mu)\approx 0.2560$.}
\end{subfigure}
\caption{\label{fig:CompactVersusElongated}In (a) and (b) two different mixtures of Gaussians are shown. The two mixtures have roughly the same measure of between-cluster distance $\mathcal{S}(\mu)$, but significantly different within-cluster coherence $\GammaMin(\mu)$.  Spectral clustering will enjoy much stronger performance guarantees, according to \citet{Schiebinger2015}, for the data in (a) compared to the data in (b), for a range of relevant choices of the parameter $\sigma$.}
\end{figure}

It is of related interest to compare LUND to spectral clustering by recalling (\ref{eqn:EigenfunctionsDD}).  In the generic case that $\lambda_{2}>\lambda_{3}$, the $(\psi_{2}(x)-\psi_{2}(y))^{2}$ term dominates asymptotically as $t\rightarrow\infty$.  Hence, as $t\rightarrow\infty$, LUND bears resemblance to spectral clustering with the second eigenvector alone \citep{Shi2000}.  On the other extreme, for $t=0$, diffusion distances depend on all eigenvectors equally.  Using the first $K$ or the $2^{nd}$ through $(K+1)^{st}$ eigenvectors $\psi_{l}$ is the basis for many spectral clustering algorithms \citep{Ng2002,Schiebinger2015}, and is comparable to LUND for $t=0$, combined with a truncation of (\ref{eqn:EigenfunctionsDD}).  Note that clustering with the kernel $\K$ alone relates to using all eigenvectors and $t=1$.  By allowing $t$ to be a tunable parameter, LUND interpolates between the extremes of the $\numclust$ principal eigenvectors equally ($t=0$ and cutting off the eigendecomposition after that $\numclust^{th}$ or $(\numclust+1)^{st}$ eigenvector), using the kernel matrix $(t=1)$, and using only the second eigenvector $(t\rightarrow\infty)$.  The results of Section \ref{sec:NumericalExperiments} validate the importance of this flexibility.  

An additional challenge when using spectral clustering is to robustly estimate $\numclust$.  The \emph{eigengap} $\hat{\numclust}=\argmax_{i}\lambda_{i+1}-\lambda_{i}$ is a commonly used heuristic, but is often ineffective when Euclidean distances are used in the case of non-spherical clusters \citep{Arias2011, Little2017Path}.  In contrast, Theorem \ref{thm:PerformanceGuaranteesSummary} suggests LUND can robustly estimate $\numclust$, which is shown empirically for synthetic data in Section \ref{sec:NumericalExperiments}.  

Computationally, LUND and spectral methods are essentially the same, with the bottleneck in complexity being either the spectral decomposition of a dense $n\times n$ matrix ($O(Mn^{2})$ where $M$ is the number of eigenvectors sought), or the computation of nearest neighbors when using a sparse diffusion operator or Laplacian (using an indexing structure for a fast nearest neighbors search, this is $O(C^{d}Dn\log(n))$, where $d$ is the intrinsic dimension of the data).

\subsubsection{Comparison With Local Graph Cutting Algorithms}

The LUND algorithm bears some resemblance to local graph cutting algorithms \citep{Spielman2004_Nearly, Andersen2006_Local, Andersen2008_Local, Andersen2009_Finding, Spielman2013_Local, Spielman2014_Nearly, Yin2017_Local, Fountoulakis2017_Optimization}.  These methods compute a cluster $C$ around a given vertex $v$ such that the conductance of $C$ is high (see Definition \ref{defn:Conductance}), and which can be computed in sublinear time with respect to the total number of vertices in the graph $n$, and in linear time with respect to $|C|$.  In order to avoid an algorithm that scales linearly (or worse) in $n$, global features---such as eigenvectors of a Markov transition matrix or graph Laplacian defined on the data---must be avoided.  The Nibble algorithm \citep{Spielman2013_Local} and related methods \citep{Andersen2009_Finding} compute approximate random walks for points nearby $v$, and truncate steps that take the random walker too far from already explored points.  This accounts for the most important steps a random walker would take, and avoids considering all $n$ vertices of the graph.  In this sense, Nibble and related methods approximate global diffusion with local diffusion in order to compute a local cluster around a prioritized vertex $v$, while LUND uses diffusion to uncover multitemporal structure.

\subsubsection{Comparison with FSFDPC}

The FSFDPC algorithm \citep{Rodriguez2014} learns the modes of distributions in a manner similar to the method proposed in this article, except that the diffusion distance-based quantity $\rho_{t}$ is replaced with a corresponding Euclidean distance-based quantity:

\begin{align*}\rhoEuc(x)=\begin{cases}\min\limits_{y\in X}\{\|x-y\|_{2} \ |\  p(y)\ge p(x)  \}, & x\neq \argmax\limits_{y\in X} p(y), \\ \max\limits_{y\in X}\|x-y\|_{2}, & x = \argmax\limits_{y\in X} p(y).\end{cases}\end{align*}

Points are then iteratively assigned the same label as their nearest Euclidean neighbor of higher density.  This difference is fundamental. Theoretical guarantees for the FDFDPC using Euclidean distances do not accommodate a rich class of distributions and the guarantees proved in this article fail when using $\mathcal{D}^{\text{euc}}(x)=p(x)\rhoEuc(x)$ for computing modes.  This is because for clusters that are multimodal or nonlinear, there is no reason for high-density regions of one cluster to be well-separated in Euclidean distance.  In  Section \ref{sec:NumericalExperiments}, we shall see FSFDPC fails for the motivating data in Section \ref{sec:DataModel}; examples comparing FSFPDC and LUND for real hyperspectral imaging data appear in \citet{Murphy2018_Unsupervised, Murphy2018_Diffusion}.  


\section{Analysis of Diffusion Processes on Data}\label{sec:DiffusionProcesses}

In this section, we derive estimates for diffusion distances.   Let $\Din, \Dbtw$ be as in (\ref{eqn:InBetween}).  The main result of this section is to show there exists an interval $\mathcal{T}\subset[0,\infty]$ so that $\forall t\in \mathcal{T}, \ \Dbtw>\Din,$ that is, for $t\in\mathcal{T}$, within cluster diffusion distance is smaller than between cluster diffusion distance.  Showing that within-cluster distances is small and between-cluster distance is large is essential for any clustering problem.  The benefit of using diffusion distance is its adaptability to the geometry of the data: it is possible that within cluster diffusion distance is less than between cluster diffusion distance, even in the case that the clusters are highly elongated and nonlinear.  This property does not hold when points are compared with Euclidean distances or many other data-independent distances.

\subsection{Near Reducibility of Diffusion Processes}\label{subsec:NearReducibility}

Let $\Pm$ be a Markov chain defined on points $X$ satisfying the usual assumptions with stationary distribution $\boldpi$.  We will sometimes consider $\boldpi$ as a function with domain $X$, other times as a vector with indices $\{1,\dots,|X|\}$. 

For any initial distribution $\boldpi_{0}$, $\lim_{t\rightarrow\infty}\boldpi_{0}\Pm^{t}=\boldpi$ and moreover for any choice of $\nu=\boldpi_{0}/\boldpi,$ $D_{t}(x,y)\rightarrow0$ uniformly as $t\rightarrow\infty$.  One can quantify the rate of this convergence by estimating the convergence rate of $\Pm$ to its stationary distribution in a normalized $\ell^{\infty}$ sense.  

\begin{defn}\label{defn:RPD}
For a discrete Markov chain with transition matrix $\Pm$ and stationary distribution $\boldpi$, the \emph{relative pointwise distance} at time $t$ is $\Delta(t)=\max\limits_{i,j\in\{1,\dots,n\}}|P_{ij}^{t}-\boldpi_{j}|/\boldpi_{j}.$
\end{defn}

The rate of decay of $\Delta(t)$ is regulated by the spectrum of $\Pm$ \citep{Jerrum1989, Sinclair1989}.  Indeed, let $1=\lambda_{1}>\lambda_{2}\ge\dots\ge \lambda_{n}>-1$ be the eigenvalues of $\Pm$; note that $\lambda_{2}<1$ follows from $\Pm$ irreducible and $\lambda_{n}>-1$ follows from $\Pm$ aperiodic \citep{Chung1997}.  Let $\lambda_*=\max_{i=2,\dots,n}|\lambda_{i}|=\max(|\lambda_{2}|,|\lambda_{n}|),  \ \ \boldpi_{\text{min}}=\min_{x\in X}\boldpi(x).$

\begin{thm}\citep{Jerrum1989, Sinclair1989}\label{thm:DeltaLambda}
Let $\Pm$ be the transition matrix of a Markov chain on state space $X$ satisfying the usual assumptions.  Then $\Delta(t)\le\lambda_*^{t}/\boldpi_{\emph{min}}.$
\end{thm}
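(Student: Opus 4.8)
The plan is to exploit reversibility to reduce everything to the spectral theorem for a real symmetric matrix. First I would symmetrize $\Pm$: since the chain is reversible, detailed balance $\boldpi_{i}P_{ij}=\boldpi_{j}P_{ji}$ holds, so the conjugated matrix $\Sm:=\Pi^{1/2}\Pm\Pi^{-1/2}$, with $\Pi:=\mathrm{diag}(\boldpi)$, is symmetric, with entries $S_{ij}=\boldpi_{i}^{1/2}P_{ij}\boldpi_{j}^{-1/2}$. Because $\Sm$ is similar to $\Pm$, it shares the eigenvalues $1=\lambda_{1}>\lambda_{2}\ge\dots\ge\lambda_{n}>-1$, and by the spectral theorem it admits an orthonormal eigenbasis $\{v_{\ell}\}_{\ell=1}^{n}$ of $\mathbb{R}^{n}$. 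The Perron eigenvector is explicit: $\Sm$ fixes the vector with entries $\sqrt{\boldpi_{i}}$, so I may take $(v_{1})_{i}=\sqrt{\boldpi_{i}}$, which already has unit $\ell^{2}$ norm since $\sum_{i}\boldpi_{i}=1$.

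Next I would pass to powers. From $\Sm^{t}=\sum_{\ell=1}^{n}\lambda_{\ell}^{t}v_{\ell}v_{\ell}^{\top}$ and the relation $P_{ij}^{t}=\boldpi_{i}^{-1/2}S_{ij}^{t}\boldpi_{j}^{1/2}$, I obtain
\[
P_{ij}^{t}=\boldpi_{i}^{-1/2}\boldpi_{j}^{1/2}\sum_{\ell=1}^{n}\lambda_{\ell}^{t}(v_{\ell})_{i}(v_{\ell})_{j}.
\]
The key algebraic step is to isolate the $\ell=1$ term: since $\lambda_{1}^{t}=1$ and $(v_{1})_{i}(v_{1})_{j}=\sqrt{\boldpi_{i}\boldpi_{j}}$, that term equals exactly $\boldpi_{j}$. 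Hence the deviation from stationarity is carried entirely by the subdominant spectrum,
\[
P_{ij}^{t}-\boldpi_{j}=\boldpi_{i}^{-1/2}\boldpi_{j}^{1/2}\sum_{\ell=2}^{n}\lambda_{\ell}^{t}(v_{\ell})_{i}(v_{\ell})_{j}.
\]

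The final step bounds this tail. Dividing by $\boldpi_{j}$, taking absolute values, and using $|\lambda_{\ell}|\le\lambda_{*}$ for $\ell\ge2$,
\[
\frac{|P_{ij}^{t}-\boldpi_{j}|}{\boldpi_{j}}=\frac{1}{\sqrt{\boldpi_{i}\boldpi_{j}}}\left|\sum_{\ell=2}^{n}\lambda_{\ell}^{t}(v_{\ell})_{i}(v_{\ell})_{j}\right|\le\frac{\lambda_{*}^{t}}{\sqrt{\boldpi_{i}\boldpi_{j}}}\sum_{\ell=2}^{n}|(v_{\ell})_{i}|\,|(v_{\ell})_{j}|.
\]
A Cauchy--Schwarz step over $\ell$, together with orthonormality of $\{v_{\ell}\}$ (which forces $\sum_{\ell=1}^{n}(v_{\ell})_{i}^{2}=1$ for every $i$, so $\sum_{\ell\ge2}(v_{\ell})_{i}^{2}=1-\boldpi_{i}\le1$), gives $\sum_{\ell=2}^{n}|(v_{\ell})_{i}|\,|(v_{\ell})_{j}|\le1$. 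Combining with $\sqrt{\boldpi_{i}\boldpi_{j}}\ge\boldpi_{\text{min}}$ yields $|P_{ij}^{t}-\boldpi_{j}|/\boldpi_{j}\le\lambda_{*}^{t}/\boldpi_{\text{min}}$, and maximizing over $i,j$ gives $\Delta(t)\le\lambda_{*}^{t}/\boldpi_{\text{min}}$.

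The only genuinely delicate point, and the one I would watch most carefully, is the orthonormality/Cauchy--Schwarz estimate that strips off the dependence on the individual eigenvector coordinates: it is precisely what converts an eigenvector-dependent sum into the clean, $\boldpi$-only constant $1/\boldpi_{\text{min}}$. Everything else is bookkeeping, provided one recalls that reversibility is exactly the hypothesis licensing a real orthonormal eigenbasis --- without it, $\Pm$ need not even be diagonalizable and the symmetrization $\Sm=\Pi^{1/2}\Pm\Pi^{-1/2}$ would not be symmetric.
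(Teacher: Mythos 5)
Your proof is correct. The paper does not supply its own argument for this theorem --- it is quoted from \citet{Jerrum1989} and \citet{Sinclair1989} --- and what you have written is precisely the standard spectral proof from those references: symmetrize via $\Pi^{1/2}\Pm\Pi^{-1/2}$ using reversibility, peel off the Perron term $\boldpi_j$, and control the remainder by Cauchy--Schwarz over the orthonormal eigenbasis (the row-orthonormality fact $\sum_{\ell}(v_{\ell})_i^2=1$ is exactly the right tool, and your bound $\sqrt{\boldpi_i\boldpi_j}\ge\boldpi_{\text{min}}$ closes the argument). Every step checks out.
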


Instead of analyzing $\lambda_*$, the \emph{conductance} of $X$ may be used to bound $\Delta(t)$.  

\begin{defn}\label{defn:Conductance}
Let $\G$ be a weighted graph on $X$ and let $S\subset X$.  The \emph{conductance of $S$} is $\Phi_{X}(S)=\sum\limits_{x_{i}\in S, x_{j}\in S^{c}}\boldpi_{i}P_{ij} / \min\left(\sum_{x_{i}\in S}\boldpi_{i}, \sum_{x_{i}\in S^{c}}\boldpi_{i}\right)$.  The \emph{conductance of $\G$} is $\Phi(\Pm)=\min_{S\subseteq\G}\Phi_{X}(S)$.
\end{defn}

Methods for estimating the conductance of certain graphs include \emph{Poincar\'{e} estimates} \citep{Diaconis1991, Diaconis1993} and the method of \emph{canonical paths} \citep{Jerrum1989, Sinclair1989,  Aldous2002}.  These approaches estimate $\Phi(\Pm)$ by showing that certain simple paths may be used as a substitute for a generic path in the graphs.  
The conductance is related to $\lambda_{2}$ (see e.g. \citep{Chung1997}):

\begin{thm}(Cheeger's Inequality)  
Let $\G$ be a weighted undirected graph with transition matrix $\Pm$.  Then the second eigenvalue $\lambda_{2}$ of $\Pm$ satisfies  $\Phi(\Pm)^{2}/2\le 1-\lambda_{2}\le 2\Phi(\Pm).$
\end{thm}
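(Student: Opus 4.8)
The plan is to prove the two inequalities separately, both resting on the variational (Rayleigh-quotient) characterization of the spectral gap $1-\lambda_{2}$. Since $\Pm$ is reversible, the detailed-balance relation $\boldpi_{i}P_{ij}=\boldpi_{j}P_{ji}$ makes $\Pm$ self-adjoint on the weighted space $\ell^{2}(\boldpi)$ with inner product $\langle f,g\rangle_{\boldpi}=\sum_{i}\boldpi_{i}f(i)g(i)$, so $I-\Pm$ is a nonnegative self-adjoint operator whose associated quadratic form is the Dirichlet form
$$
\mathcal{E}(f,f)=\langle (I-\Pm)f,f\rangle_{\boldpi}=\tfrac12\sum_{i,j}\boldpi_{i}P_{ij}(f(i)-f(j))^{2}.
$$
Because $\psi_{1}\equiv 1$ spans the top eigenspace, the min-max theorem gives
$$
1-\lambda_{2}=\min_{f:\,\langle f,\mathbf{1}\rangle_{\boldpi}=0,\,f\neq 0}\frac{\mathcal{E}(f,f)}{\langle f,f\rangle_{\boldpi}},
$$
with the minimum attained at $f=\psi_{2}$. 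This identity is the common engine for both bounds.

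For the upper bound $1-\lambda_{2}\le 2\Phi(\Pm)$, I would feed the indicator of a near-optimal cut into the Rayleigh quotient. Let $S$ achieve (or nearly achieve) $\Phi(\Pm)$ and set $f=\mathbf{1}_{S}$; since subtracting the $\boldpi$-mean leaves $\mathcal{E}$ unchanged and turns the denominator into the variance $\boldpi(S)\boldpi(S^{c})$, the variational principle gives $1-\lambda_{2}\le \mathcal{E}(f,f)/(\boldpi(S)\boldpi(S^{c}))$, while a direct computation gives $\mathcal{E}(f,f)=\sum_{i\in S,\,j\in S^{c}}\boldpi_{i}P_{ij}$. Using $\max(\boldpi(S),\boldpi(S^{c}))\ge\tfrac12$ to rewrite the cut weight as the conductance $\Phi_{X}(S)$ produces the factor $2$, and minimizing over $S$ yields the claim.

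The hard direction $\Phi(\Pm)^{2}/2\le 1-\lambda_{2}$ is where the real work lies. I would first reduce to a convenient test function: take $f=\psi_{2}$, shift by its $\boldpi$-weighted median $m$ so that $\tilde f=f-m$ satisfies $\boldpi(\{\tilde f>0\})\le\tfrac12$ and $\boldpi(\{\tilde f<0\})\le\tfrac12$, and split $\tilde f=\tilde f_{+}-\tilde f_{-}$ into positive and negative parts. Since the Dirichlet form only sees differences, a short convexity check shows $\mathcal{E}(\tilde f_{+},\tilde f_{+})+\mathcal{E}(\tilde f_{-},\tilde f_{-})\le\mathcal{E}(\tilde f,\tilde f)=\mathcal{E}(f,f)$, while shifting only increases the norm, $\langle\tilde f,\tilde f\rangle_{\boldpi}=\langle f,f\rangle_{\boldpi}+m^{2}$; a mediant argument then extracts a nonnegative $g\in\{\tilde f_{+},\tilde f_{-}\}$ with $\boldpi(\mathrm{supp}\,g)\le\tfrac12$ and $\mathcal{E}(g,g)/\langle g,g\rangle_{\boldpi}\le 1-\lambda_{2}$.

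It then remains to prove the core Cheeger estimate $\mathcal{E}(g,g)/\langle g,g\rangle_{\boldpi}\ge\Phi(\Pm)^{2}/2$ for any such $g$, which I expect to be the main obstacle. The idea is to bound $\sum_{i,j}\boldpi_{i}P_{ij}|g(i)^{2}-g(j)^{2}|$ from both sides. From above, factoring $g(i)^{2}-g(j)^{2}=(g(i)-g(j))(g(i)+g(j))$ and applying Cauchy-Schwarz with $(g(i)+g(j))^{2}\le 2(g(i)^{2}+g(j)^{2))}$ bounds it by $2\sqrt{2\mathcal{E}(g,g)}\,\langle g,g\rangle_{\boldpi}^{1/2}$. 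From below, a co-area (layer-cake) argument writes the sum as $2\int_{0}^{\infty}Q(S_{t},S_{t}^{c})\,dt$ over the super-level sets $S_{t}=\{g^{2}>t\}$, each contained in $\mathrm{supp}\,g$ and hence of $\boldpi$-measure at most $\tfrac12$, so that $Q(S_{t},S_{t}^{c})\ge\Phi(\Pm)\,\boldpi(S_{t})$ and integration recovers $2\Phi(\Pm)\langle g,g\rangle_{\boldpi}$. Chaining the two bounds and cancelling $\langle g,g\rangle_{\boldpi}^{1/2}$ gives $\Phi(\Pm)\langle g,g\rangle_{\boldpi}^{1/2}\le\sqrt{2\mathcal{E}(g,g)}$, which squares to the desired inequality and, combined with the previous paragraph, completes the proof. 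The delicate points to get right are the sign bookkeeping in the positive/negative split and the identification of the layer-cake integral with the conductance flow across every threshold set.
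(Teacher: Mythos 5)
The paper does not prove this statement at all: it is quoted as a known background result with a pointer to \citet{Chung1997}, so there is no internal argument to compare against. Your proposal is the standard Jerrum--Sinclair/Chung proof of Cheeger's inequality for reversible chains, and it is correct as outlined: the variational characterization of $1-\lambda_{2}$ is legitimate here because the paper's ``usual assumptions'' include reversibility, so $\Pm$ is self-adjoint on $\ell^{2}(\boldpi)$; the easy direction via the centered indicator of the optimal cut correctly produces the factor $2$ from $\max(\boldpi(S),\boldpi(S^{c}))\ge\tfrac12$; and the hard direction's median shift, positive/negative splitting, mediant selection, and Cauchy--Schwarz-plus-co-area chain all go through (the pointwise inequality $a^{2}+b^{2}\le(a-b)^{2}$ for $a,b$ of opposite sign is what makes the Dirichlet form subadditive under the splitting, and $\|\tilde f\|_{\boldpi}^{2}=\|f\|_{\boldpi}^{2}+m^{2}\ge\|f\|_{\boldpi}^{2}$ is what lets you keep the bound $1-\lambda_{2}$ after shifting). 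The only details left implicit that deserve a line each in a full write-up are the identity $\sum_{i,j}\boldpi_{i}P_{ij}(g(i)^{2}+g(j)^{2})=2\langle g,g\rangle_{\boldpi}$ (stochasticity plus reversibility) and the factor of $2$ in the layer-cake identity from counting each edge in both orientations; neither is a gap.
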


Combining Theorem \ref{thm:DeltaLambda} and Cheeger's inequality relates $\Delta(t)$ to $\Phi(\Pm)$.  

\begin{thm}\citep{Jerrum1989, Sinclair1989}\label{thm:DeltaConductance}
Let $\Pm$ be the transition matrix for a Markov chain on $X$ satisfying the usual assumptions.  Suppose $P_{ii}\ge \frac{1}{2}, \ \forall i=1,\dots,n$.  Then $\Delta(t)\le (1-\frac{1}{2}\Phi(\Pm)^{2})^{t}/\boldpi_{\emph{min}}.$
\end{thm}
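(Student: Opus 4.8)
The plan is to reduce the statement to the spectral bound of Theorem~\ref{thm:DeltaLambda} together with Cheeger's inequality, using the laziness hypothesis $P_{ii}\ge\frac12$ to control the smallest eigenvalue. Theorem~\ref{thm:DeltaLambda} already supplies $\Delta(t)\le\lambda_*^{t}/\boldpi_{\text{min}}$, where $\lambda_*=\max(|\lambda_2|,|\lambda_n|)$, so it suffices to show $\lambda_*\le 1-\frac12\Phi(\Pm)^2$; since $\Phi(\Pm)\in[0,1]$ the right-hand side lies in $[\frac12,1]$, and monotonicity of $s\mapsto s^{t}$ on $[0,1]$ then delivers the claim.

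First I would exploit the hypothesis $P_{ii}\ge\frac12$ to write $\Pm=\frac12(\I+R)$ with $R:=2\Pm-\I$. The off-diagonal entries $R_{ij}=2P_{ij}\ge0$ and the diagonal entries $R_{ii}=2P_{ii}-1\ge0$ are all nonnegative, while each row of $R$ sums to $2\cdot1-1=1$, so $R$ is a (row-)stochastic matrix. Consequently its spectral radius equals $1$, so every eigenvalue $\mu$ of $R$ satisfies $\mu\ge-1$; and since the eigenvalues of $\Pm$ are exactly $\lambda=\frac12(1+\mu)$, they are all nonnegative, and in particular $\lambda_n\ge0$.

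Because every eigenvalue of $\Pm$ is nonnegative and $\lambda_2\ge\lambda_n$, we have $|\lambda_n|=\lambda_n\le\lambda_2=|\lambda_2|$, hence $\lambda_*=\lambda_2$. This is the crux of the argument: the half-laziness assumption removes the negative part of the spectrum, so that only the second eigenvalue governs the mixing rate and the $\lambda_n$ term in $\lambda_*$ becomes harmless. It then remains only to invoke the lower bound in Cheeger's inequality, $\frac12\Phi(\Pm)^2\le 1-\lambda_2$, which rearranges to $\lambda_2\le 1-\frac12\Phi(\Pm)^2$. Combining $\lambda_*=\lambda_2\le 1-\frac12\Phi(\Pm)^2$ with Theorem~\ref{thm:DeltaLambda} yields $\Delta(t)\le\lambda_*^{t}/\boldpi_{\text{min}}\le\bigl(1-\tfrac12\Phi(\Pm)^2\bigr)^{t}/\boldpi_{\text{min}}$, as desired.

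I expect the only genuinely substantive step to be the spectral positivity argument of the second paragraph; everything else is direct substitution, and only the easy (lower-bound) direction of Cheeger's inequality is needed. A minor point worth confirming is the sign bookkeeping ensuring $1-\frac12\Phi(\Pm)^2\in[0,1]$, so that raising to the $t$-th power preserves the inequality; this follows from $\Phi(\Pm)\in[0,1]$, which in turn holds because the numerator in Definition~\ref{defn:Conductance} (the stationary flow across the cut) is bounded by each of $\sum_{x_i\in S}\boldpi_i$ and $\sum_{x_i\in S^c}\boldpi_i$.
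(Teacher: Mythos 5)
Your argument is correct and follows exactly the route the paper indicates (it introduces Theorem~\ref{thm:DeltaConductance} with the remark that it comes from combining Theorem~\ref{thm:DeltaLambda} with Cheeger's inequality): you bound $\Delta(t)$ by $\lambda_*^t/\boldpi_{\text{min}}$, use the laziness hypothesis to show the spectrum is nonnegative so that $\lambda_*=\lambda_2$, and then apply the lower-bound direction of Cheeger. The only addition you make is to spell out the $\Pm=\frac12(\I+R)$ decomposition, which is the standard way to make the ``$\lambda_n$ is harmless'' step rigorous and is exactly what the cited sources do.
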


Note that any Markov chain can be made to satisfy $P_{ii}\ge \frac{1}{2}, \forall i=1,\dots,n$, simply by replacing $\Pm$ with $\frac{1}{2}(\Pm+I)$.  This keeps the same stationary distribution and reduces the conductance by a factor of $\frac{1}{2}$.  
Whether Theorem \ref{thm:DeltaLambda} or \ref{thm:DeltaConductance} is used, the convergence of $\Pm$ towards its stationary distribution is exponential, with rate determined by $\lambda_*$ or $\Phi(\Pm)$, that is, to how close to being reducible the chain is.  Indeed, for $x,y\in X$ and any initial distribution $\boldpi_{0}$,

\begin{align*}
D_{t}(x,y)=&||p_t(x,\cdot)-p_t(y,\cdot)||_{\ell^2(\nu)}
\le ||p_t(x,\cdot)-\boldpi(\cdot)||_{\ell^2(\nu)}+||p_t(y,\cdot)-\boldpi(\cdot)||_{\ell^2(\nu)}\\
\le&2\sqrt{\sum_{u\in X}\max\limits_{z\in X}\frac{|p_{t}(z,u)-\boldpi(u)|^2}{\boldpi(u)^2}\boldpi(u) \boldpi_{0}(u)}\\
\le& 2\Delta(t)\sqrt{\sum_{u\in X}\boldpi(u)\boldpi_{0}(u)}\le 2 \Delta(t)
\le\frac{2(1-\frac{1}{2}\Phi(\Pm)^{2})^{t}}{\boldpi_{\text{min}}}\,.
\end{align*}

Thus, as $t\rightarrow\infty$, $D_{t}\rightarrow 0$ uniformly at an exponential rate depending on the conductance of the underlying graph; a similar result holds for $\lambda_*$ in place of $\Phi(\Pm)$.  This gives a global estimate on the diffusion distance in terms of $\lambda_*$ and $\Phi(\Pm)$.  Note that a similar conclusion holds by analyzing (\ref{eqn:EigenfunctionsDD}), recalling that $\psi_{1}$ is constant and $\lambda_{2}=\max_{i\neq 1}|\lambda_{i}|=\lambda_*$.

Unfortunately, a global mixing time may be too coarse for unsupervised learning.  To obtain the desired separation of $\Din, \Dbtw$, we need to study not the global mixing time, but the \emph{mesoscopic} mixing times, corresponding to the time it takes for convergence of points in each cluster towards their mesoscopic equilibria, before reaching the  global equilibrium.  For this purpose, we use results from the theory of \emph{nearly reducible Markov processes} \citep{Simon1961, Meyer1989}. 
Suppose the matrix $\Pm$ is irreducible; write $\Pm$, possibly after a permutation of the indices of the points, in block decomposition as

\begin{equation}
\Pm=\begin{bmatrix}\label{eqn:Ppartition}
    \Pm_{11}  & \Pm_{12}  & \dots & \Pm_{1m} \\
   \Pm_{21}  & \Pm_{22}  & \dots & \Pm_{2m} \\
   \vdots & \vdots & \ddots & \vdots \\
   \Pm_{m1}  & \Pm_{m2}  & \dots & \Pm_{m m}
\end{bmatrix},
\end{equation}

where each $\Pm_{ii}$ is square and $m\le n$.  Let $I_{i}$ be the indices of the points corresponding to $\Pm_{ii}$.  Recall that if the graph corresponding to $\Pm$ is disconnected, then $\Pm$ is a \emph{reducible} Markov chain; this corresponds to a block diagonal matrix in which $\|\Pm_{ij}\|_{\infty}=0$, $i\neq j$, for some $i$ (recall that $\|\textbf{A}\|_{\infty}=\max_{i}\sum_{j}|a_{ij}|$ is the maximal row sum of $\textbf{A}=(a_{ij})$).  Consider instead that $\|\Pm_{ij}\|_{\infty}$ is small  but nonzero for $i\neq j$, that is, most of the interactions for points in $I_{i}$ are contained within $\Pm_{ii}$.  This suggests diffusion on the the blocks $\Pm_{ii}$ have dynamics that converge to their own, mesoscopic equilibria before the entire chain converges to a global equilibrium, depending on the weakness of connection between blocks.  Interpreting the support sets $I_{i}$ as corresponding to the clusters of $X$, this suggests there will be a time range for which points within each cluster are close in diffusion distance but far in diffusion distance from points in other clusters; such a state corresponds to a mesoscopic equilibrium.  To make this precise, consider the notion of \emph{stochastic complement}.

\begin{defn}  Let $\Pm$ be an $n\times n$ irreducible Markov matrix partitioned into square block matrices as in (\ref{eqn:Ppartition}).  For a given index $i\in\{1,\dots,m\}$, let $\Pm_{i}$ denote the principal block submatrix generated by deleting the $i^{\text{th}}$ row and $i^{\text{th}}$ column of blocks from (\ref{eqn:Ppartition}), and let $\Pm_{*i}=\begin{bmatrix}\Pm_{1i}   \Pm_{2i} \dots \Pm_{i-1,i}  \Pm_{i+1,i} \dots  \Pm_{mi}\end{bmatrix}^{T}$ and $\Pm_{i*}=\begin{bmatrix} \Pm_{i1} \ \Pm_{i2}\  \dots \ \Pm_{i,i-1} \ \Pm_{i,i+1} \ \dots \ \Pm_{im} \end{bmatrix}$.  The \emph{stochastic complement of $\Pm_{ii}$} is the matrix $\Sm_{ii}=\Pm_{ii}+\Pm_{i*}(\I-\Pm_{i})^{-1}\Pm_{*i}.$
\end{defn}

One can interpret the stochastic complement $\Sm_{ii}$ as the transition matrix for a reduced Markov chain obtained from the original chain, but in which transitions into or out of $I_{i}$ are masked.  More precisely, in the reduced chain $\Sm_{ii}$, a transition is either direct in $\Pm_{ii}$ or indirect by moving first through points outside of $\Pm_{ii}$, then back into $\Pm_{ii}$ at some future time.  Indeed, the term $\Pm_{i*}(\I-\Pm_{i})^{-1}\Pm_{*i}$ in the definition of $\Sm_{ii}$ accounts for leaving $I_{i}$ (the factor $\Pm_{i*}$), traveling for some time in $I_{i}^{c}$ (the factor $(\I-\Pm_{i})^{-1}$), then re-entering $I_{i}$ (the factor $\Pm_{*i}$).  Note that the factor $(\I-\Pm_{i})^{-1}$ may be expanded in Neumann sum as $(\I-\Pm_{i})^{-1}=\sum_{t=0}^{\infty}\Pm_{i}^{t},$ clearly showing that it accounts for exiting from $I_i$ and returning to it after an arbitrary number of steps outside of it.

The notion of stochastic complement quantifies the interplay between the mesoscopic and global equilibria of $\Pm$.  We say $\Pm$ is \emph{primitive} if it is non-negative, irreducible and aperiodic.  The following theorem indicates how $\Pm$ may be analyzed when it is derived from cluster data $\{X_{k}\}_{k=1}^{\numclust}$ sampled according to (\ref{eqn:MixtureModel}); a proof appears in the Appendix for completeness.

\begin{thm}\label{thm:NearReducibility}\citep{Meyer1989}  Let $\Pm$ be an $n\times n$ irreducible row-stochastic matrix partitioned into $K^{2}$ square block matrices, and let $\Sm$ be the completely reducible row-stochastic matrix consisting of the stochastic complements of the diagonal blocks of $\Pm$:

\[ 
\Pm=\begin{bmatrix}
    \Pm_{11}  & \Pm_{12}  & \dots & \Pm_{1\numclust} \\
   \Pm_{21}  & \Pm_{22}  & \dots & \Pm_{2\numclust} \\
   \vdots & \vdots & \ddots & \vdots \\

   \Pm_{\numclust 1}  & \Pm_{\numclust 2}  & \dots & \Pm_{\numclust \numclust }
\end{bmatrix}, \
\
\Sm=\begin{bmatrix}
    \Sm_{11}  & \0  & \dots & \0 \\
   \0  & \Sm_{22}  & \dots & \0 \\
   \vdots & \vdots & \ddots & \vdots \\

   \0  & \0  & \dots & \Sm_{\numclust \numclust}
\end{bmatrix}.\]Suppose each $\Sm_{ii}$ is primitive, so that the eigenvalues of $\Sm$ satisfy $\lambda_{1}=\lambda_{2}=\dots=\lambda_{\numclust}=1>\lambda_{\numclust+1}\ge \lambda_{\numclust+2}\ge \dots>-1.$  Let $\Z$ diagonalize $\Sm$, and let \[ 
\Sm^{\infty}=\lim_{t\rightarrow\infty}\Sm^{t}=\begin{bmatrix}
    \bold{1} \boldpi^{1}  & \0  & \dots & \0 \\
   \0  & \bold{1}\boldpi^{2}  & \dots & \0 \\
   \vdots & \vdots & \ddots & \vdots \\

   \0  & \0  & \dots & \bold{1}\boldpi^{K}
\end{bmatrix},\] where $\boldpi^{i}$ is the stationary distribution for $\Sm_{ii}$.  Then $\|\Pm^{t}-\Sm^{\infty}\|_{\infty}\le t\delta + \kappa|\lambda_{\numclust+1}|^{t},$ where $\delta=2\max_{i}\|\Pm_{i*}\|_{\infty}$ and $\kappa=\|\Z\|_{\infty}\|\Z^{-1}\|_{\infty}$.  
Moreover, for any initial distribution $\boldpi_{0}$ and $\s=\lim_{t\rightarrow\infty}\bold\boldpi_{0}\Sm^{t}=\boldpi_0\Sm^\infty$, $\|\boldpi_{0}\Pm^{t}-\s\|_{1}\le t\delta+\kappa|\lambda_{\numclust+1}|^{t}.$
\end{thm}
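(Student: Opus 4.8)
The plan is to control $\|\Pm^{t}-\Sm^{\infty}\|_{\infty}$ through the triangle-inequality split
$$
\|\Pm^{t}-\Sm^{\infty}\|_{\infty}\le \|\Pm^{t}-\Sm^{t}\|_{\infty}+\|\Sm^{t}-\Sm^{\infty}\|_{\infty},
$$
treating the two terms by entirely different mechanisms: the second is a spectral-decay estimate for the exactly reducible matrix $\Sm$, while the first is a perturbation estimate measuring how far $\Pm$ is from $\Sm$. For the second term I would use the diagonalization $\Sm=\Z\Lambda\Z^{-1}$ with $\Lambda=\mathrm{diag}(\lambda_{1},\dots,\lambda_{n})$ and $\lambda_{1}=\dots=\lambda_{\numclust}=1$. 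Since $\Sm^{\infty}=\Z\Lambda^{\infty}\Z^{-1}$, where $\Lambda^{\infty}=\mathrm{diag}(1,\dots,1,0,\dots,0)$ retains only the unit eigenvalues, we get $\Sm^{t}-\Sm^{\infty}=\Z(\Lambda^{t}-\Lambda^{\infty})\Z^{-1}$, and $\Lambda^{t}-\Lambda^{\infty}$ is diagonal with entries $\lambda_{j}^{t}$ for $j>\numclust$ and $0$ otherwise. Submultiplicativity of $\|\cdot\|_{\infty}$ then yields $\|\Sm^{t}-\Sm^{\infty}\|_{\infty}\le\|\Z\|_{\infty}\|\Z^{-1}\|_{\infty}\max_{j>\numclust}|\lambda_{j}|^{t}=\kappa|\lambda_{\numclust+1}|^{t}$, the subdominant eigenvalue governing the rate.

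For the first term I would invoke the telescoping identity
$$
\Pm^{t}-\Sm^{t}=\sum_{s=0}^{t-1}\Pm^{s}(\Pm-\Sm)\Sm^{t-1-s}.
$$
Both $\Pm$ and $\Sm$ are row-stochastic: each $\Sm_{ii}$ is stochastic, so the block-diagonal $\Sm$ is too, and products of row-stochastic matrices remain row-stochastic, whence $\|\Pm^{s}\|_{\infty}=\|\Sm^{t-1-s}\|_{\infty}=1$ for all exponents. Submultiplicativity collapses the sum to $\|\Pm^{t}-\Sm^{t}\|_{\infty}\le t\,\|\Pm-\Sm\|_{\infty}$, so the entire estimate reduces to bounding the single-step discrepancy $\|\Pm-\Sm\|_{\infty}$ by $\delta$.

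This single-step bound is where the real work lies, and I expect it to be the main obstacle. Because $\Sm$ is block diagonal, the off-diagonal blocks of $\Pm-\Sm$ are exactly the $\Pm_{ij}$, $i\neq j$, whose total contribution to any row sum in block-row $i$ is at most $\|\Pm_{i*}\|_{\infty}$, the escape probability from block $i$. The diagonal block of $\Pm-\Sm$ is $\Pm_{ii}-\Sm_{ii}=-\Pm_{i*}(\I-\Pm_{i})^{-1}\Pm_{*i}$, which is nonnegative. Its row sums I would compute from the identity $\Pm_{*i}\mathbf{1}=(\I-\Pm_{i})\mathbf{1}$ — valid because a state outside block $i$ either transitions into block $i$ (weight $\Pm_{*i}$) or among the remaining blocks (weight $\Pm_{i}$) — which gives $(\I-\Pm_{i})^{-1}\Pm_{*i}\mathbf{1}=\mathbf{1}$ and hence $\Pm_{i*}(\I-\Pm_{i})^{-1}\Pm_{*i}\mathbf{1}=\Pm_{i*}\mathbf{1}$, the escape probability once more. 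Thus each row of $\Pm-\Sm$ in block-row $i$ sums in absolute value to twice its escape probability, so $\|\Pm-\Sm\|_{\infty}\le 2\max_{i}\|\Pm_{i*}\|_{\infty}=\delta$. The delicate point is precisely this verification that the indirect-transition term $\Pm_{i*}(\I-\Pm_{i})^{-1}\Pm_{*i}$ has row sums equal to the escape probability — equivalently, that the stochastic complement $\Sm_{ii}$ is exactly stochastic — which is what keeps the perturbation first order in the inter-block leakage and produces the clean factor $2$ in $\delta$. Combining with the diagonalization term proves the first claim.

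Finally, for the measure statement I would use that for a probability row vector $\boldpi_{0}$ and any matrix $M$ one has $\|\boldpi_{0}M\|_{1}\le\|\boldpi_{0}\|_{1}\|M\|_{\infty}=\|M\|_{\infty}$, by interchanging the order of summation over the rows. Applying this with $M=\Pm^{t}-\Sm^{\infty}$ and using $\s=\boldpi_{0}\Sm^{\infty}$ gives
$$
\|\boldpi_{0}\Pm^{t}-\s\|_{1}=\|\boldpi_{0}(\Pm^{t}-\Sm^{\infty})\|_{1}\le\|\Pm^{t}-\Sm^{\infty}\|_{\infty}\le t\delta+\kappa|\lambda_{\numclust+1}|^{t},
$$
which completes the argument.
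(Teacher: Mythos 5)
Your proof is correct and follows essentially the same route as the paper's: the same triangle-inequality split $\|\Pm^{t}-\Sm^{\infty}\|_{\infty}\le\|\Pm^{t}-\Sm^{t}\|_{\infty}+\|\Sm^{t}-\Sm^{\infty}\|_{\infty}$, the same telescoping identity giving the factor $t\|\Pm-\Sm\|_{\infty}$, and the same diagonalization argument giving $\kappa|\lambda_{\numclust+1}|^{t}$. In fact you supply one step the paper leaves implicit, namely the verification via $\Pm_{*i}\bold{1}=(\I-\Pm_{i})\bold{1}$ that the stochastic complement is exactly row-stochastic and hence $\|\Pm-\Sm\|_{\infty}\le 2\max_{i}\|\Pm_{i*}\|_{\infty}=\delta$, which is a welcome addition.
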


Note that this result does not require the Markov chain to be reversible, and hence applies to diffusion processes defined on \emph{directed} graphs.  The assumption that $\Sm$ is diagonalizable is not strictly necessary, and similar estimates hold more generally \citep{Meyer1989}.  

The estimate $t\delta+\kappa|\lambda_{\numclust+1}|^{t}$ splits into two terms.  The $t\delta$ term corresponds to $\|\Pm^{t}-\Sm^{t}\|_{\infty}$, which accounts for the approximation of $\Pm^{t}$ by the reducible Markov chain $\Sm^{t}$.  In the context of clustering, this term accounts for the between-cluster connections in $\Pm$.  The term $\kappa |\lambda_{\numclust+1}|^{t}$ corresponds to $\|\Sm^{t}-\Sm^{\infty}\|_{\infty}$, which accounts for propensity of mixing within a cluster.  In the clustering context, this term quantifies the within-cluster distances.  

It follows from Theorem \ref{thm:NearReducibility} that, given $\epsilon$ sufficiently large, there is a range of $t$ for which the dynamics of $\Pm^{t}$ are $\epsilon$-close to the dynamics of the reducible Markov chain $\Sm^{\infty}$:

\begin{cor}\label{cor:CriticalTimeRange}Let $\lambda_{\numclust+1}, \delta, \kappa$ be as in Theorem \ref{thm:NearReducibility}.  Suppose that for some $\epsilon(t)>0$, ${\ln\left(\frac{2\kappa}{\epsilon(t)}\right)}/{\ln\left(\frac{1}{|\lambda_{\numclust+1}|}\right)}<t<\frac{\epsilon(t)}{2\delta}.$  Then $\|\Pm^{t}-\Sm^{\infty}\|_{\infty}<\epsilon(t),$ and for every initial distribution $\boldpi_{0}$, $\|\boldpi_{0}\Pm^{t}-\s\|_{1}<\epsilon(t).$
\end{cor}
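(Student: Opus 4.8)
The plan is to invoke Theorem \ref{thm:NearReducibility} directly and then split its bound into two equal halves, each controlled by one of the two hypothesized inequalities on $t$. Concretely, Theorem \ref{thm:NearReducibility} supplies $\|\Pm^{t}-\Sm^{\infty}\|_{\infty}\le t\delta+\kappa|\lambda_{\numclust+1}|^{t}$, so it suffices to show $t\delta<\epsilon(t)/2$ and $\kappa|\lambda_{\numclust+1}|^{t}<\epsilon(t)/2$; adding these yields $\|\Pm^{t}-\Sm^{\infty}\|_{\infty}<\epsilon(t)$. Throughout, for each fixed admissible $t$ I would treat $\epsilon(t)$ as a fixed positive number, so no self-referential difficulty arises from $\epsilon$ depending on $t$.

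The first bound is immediate: the right-hand hypothesis $t<\epsilon(t)/(2\delta)$ rearranges (for $\delta>0$; if $\delta=0$ the term already vanishes) to exactly $t\delta<\epsilon(t)/2$. For the second bound I would start from the target inequality $\kappa|\lambda_{\numclust+1}|^{t}<\epsilon(t)/2$, equivalently $|\lambda_{\numclust+1}|^{t}<\epsilon(t)/(2\kappa)$, and take logarithms to obtain $t\ln|\lambda_{\numclust+1}|<\ln(\epsilon(t)/(2\kappa))$. The one step requiring care is that, since each $\Sm_{ii}$ is primitive, we have $|\lambda_{\numclust+1}|<1$ and hence $\ln|\lambda_{\numclust+1}|<0$; dividing through by this negative quantity reverses the inequality, and using $\ln(\epsilon(t)/(2\kappa))=-\ln(2\kappa/\epsilon(t))$ together with $\ln|\lambda_{\numclust+1}|=-\ln(1/|\lambda_{\numclust+1}|)$ produces precisely
\[
t>\frac{\ln\!\left(2\kappa/\epsilon(t)\right)}{\ln\!\left(1/|\lambda_{\numclust+1}|\right)},
\]
which is exactly the left-hand hypothesis. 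Thus the hypothesized interval for $t$ is characterized as the set on which both halves of the bound fall below $\epsilon(t)/2$.

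Summing the two halves gives $\|\Pm^{t}-\Sm^{\infty}\|_{\infty}<\epsilon(t)/2+\epsilon(t)/2=\epsilon(t)$. For the second assertion I would observe that Theorem \ref{thm:NearReducibility} furnishes the identical upper bound $t\delta+\kappa|\lambda_{\numclust+1}|^{t}$ for $\|\boldpi_{0}\Pm^{t}-\s\|_{1}$ for every initial distribution $\boldpi_{0}$, so the same two estimates apply verbatim and yield $\|\boldpi_{0}\Pm^{t}-\s\|_{1}<\epsilon(t)$. There is no substantive obstacle here: the entire content is the elementary logarithmic manipulation above, and the only point to watch is the reversal of the inequality when dividing by $\ln|\lambda_{\numclust+1}|<0$.
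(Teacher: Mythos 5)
Your proposal is correct and is exactly the argument the paper intends: the corollary is stated as an immediate consequence of Theorem \ref{thm:NearReducibility}, and the only content is splitting the bound $t\delta+\kappa|\lambda_{\numclust+1}|^{t}$ into two halves, each controlled by one of the two hypotheses on $t$ via the logarithmic rearrangement you carry out (including the sign reversal from $\ln|\lambda_{\numclust+1}|<0$). The same reasoning applies verbatim to the $\ell^{1}$ bound, as you note.
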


In contrast with $t$, the values $\lambda_{\numclust+1}, \delta, \kappa$ may be understood as fixed geometric parameters of the dataset which determine the range of times $t$ at which mesoscopic equilibria are reached.  More precisely, as $n\rightarrow\infty$, $\delta,\kappa$ converge to natural continuous quantities independent of $n$, and \citet{Trillos2018error} proved that as $n\rightarrow\infty$, there is a natural scaling for $\sigma\rightarrow 0$ in which the (random) empirical eigenvalues of $\Pm$ converge in a precise sense to the (deterministic) eigenvalues of a corresponding continuous operator defined on the support of $\mu$.  Thus, the parameters of Theorem \ref{cor:CriticalTimeRange} may be understood as random fluctuations of geometrically intrinsic quantities depending on $\mu$.  In the context of the proposed data model, these quantities may be interpreted as follows:
\begin{itemize}
\item $\lambda_{\numclust+1}$ is the largest eigenvalue of $\Sm$ not equal to 1.  Since $\Sm$ is block diagonal and each $\Sm_{kk}$ is primitive, it follows that $\lambda_{\numclust+1}=\max_{k=1,\dots,\numclust}\lambda_{2}(\Sm_{kk})$. As discussed above, $\{\lambda_{2}(\Sm_{kk})\}_{k=1}^{\numclust}$ is related to the conductance $\Phi(\Sm_{kk})$ and the mixing  time of the random walk restricted to $\Sm_{kk}$.  If the entries of $\Sm_{kk}$ are very close to the entries of $\Pm_{kk}$, then a perturbative argument yields $\lambda_{2}(\Sm_{kk})\approx \lambda_{2}(\Pm_{kk})$.  

\item The quantity $\delta=2\max_{k=1,\dots,\numclust}\|\Pm_{k*}\|_{\infty}$ is controlled by the largest interaction between clusters.  If the separation between the $\{X_{k}\}_{k=1}^{\numclust}$ is large enough, $\delta$ will be small.

\item The quantity $\kappa=\|\Z\|_{\infty}\|\Z^{-1}\|_{\infty}$, with $\Z=\left(\phi_{1}|\dots|\phi_{n}\right)$, is  a measure of the condition number of diagonalizing $\Sm$. If $\Z,\Z^{-1}$ are orthogonal matrices, then each row of $\Z,\Z^{-1}$ have $\ell^{2}$ norm 1, hence  $\kappa\le n$.  We remark that $\kappa$ is bounded independently of $n$ in the case that all the data live on a common manifold, using convergence of heat kernels and low-frequency eigenfunctions together with heat kernel estimates on manifolds.  In the clustering setting, if each cluster is a manifold, similar results would hold in this case, albeit this analysis is a topic of ongoing research.  
\end{itemize}


\subsection{Diffusion Distance Estimates}

Returning to the proposed data model $X=\bigcup_{k=1}^{\numclust}X_{k}\sim\mu$ as per (\ref{eqn:MixtureModel}), let $\Pm$ be a corresponding Markov chain on $X$ satisfying the usual assumptions.  We estimate the dependence of diffusion distances on the parameters $\delta,\lambda_{\numclust+1}, \kappa$ above.  We also introduce a \emph{balance quantity} that quantifies the difference between the $\ell^{1}$ norm (the setting of Theorem \ref{thm:NearReducibility}) and the $\ell^{2}$ norm (the setting of diffusion distances).

\begin{defn}
Let $\Pm, \Sm^{\infty}\in \mathbb{R}^{n\times n}$ be as in Theorem \ref{thm:NearReducibility} and set $p_{t}(x_{i},x_{j})=\Pm^{t}_{ij}, \ \s^{\infty}(x_{i},x_{j})=\Sm_{ij}^{\infty}$.  Define $$\gamma(t)=\max_{x\in X}\left(1-\frac{1}{2}\sum_{u\in X}\left|\frac{|p_{t}(x,u)-\sinf(x,u)|}{\|p_{t}(x,\cdot)-\sinf(x,\cdot)\|_{\ell^{2}}}-\frac{1}{\sqrt{n}}\right|^{2}\right)^{-1}.$$  
\end{defn}

\citet{Botelho2017_Exact} show that for any vector $v\in\mathbb{R}^{n}$, $\|v\|_{\ell^{2}}=\frac{c_{v}}{\sqrt{n}}\|v\|_{\ell^{1}},$ where $$c_{v}=\left(1-\frac{1}{2}\sum_{i=1}^{n}\left|\frac{|v_{i}|}{\|v\|_{\ell^{2}}}-\frac{1}{\sqrt{n}}\right|^{2}\right)^{-1}.$$  In this sense, $\gamma(t)$ measures how the $\ell^{1}$ norm differs from the $\ell^{2}$ norm across all rows of $\Pm^{t}-\Sm^{\infty}$.  In particular, when each row of $\Pm^{t}-\Sm^{\infty}$ is close to uniform, $\gamma(t)$ is close to 1; when some row of $\Pm^{t}-\Sm^{\infty}$ concentrates all its mass around one index, then $\gamma(t)=\sqrt{n}$.  Note that $1\le \gamma(t)\le \sqrt{n}$ for all $t$.

\begin{thm}\label{thm:MainResult}Let $X=\bigcup_{k=1}^{\numclust}X_{k}$ and let $\Pm$ be a corresponding Markov transition matrix on $X$.  Let $\delta, \lambda_{\numclust+1}, \kappa,\Sm^{\infty}$ be as in Theorem \ref{thm:NearReducibility} and let $\s^{\infty}(x_{i},x_{j})=\Sm_{ij}^{\infty}$. Let $D_{t}$ be the diffusion distance associated to $\Pm$ and counting measure $\nu$.  If $t,\epsilon(t)$ satisfy $$\frac{\ln\left(\frac{2\kappa}{\epsilon(t)}\right)}{\ln\left(\frac{1}{\lambda_{\numclust+1}}\right)}<t<\frac{\epsilon(t)}{2\delta}\,,$$ then
\begin{enumerate}[(a)]

\item $\displaystyle\Din\le 2\frac{\epsilon(t)}{\sqrt{n}}\gamma(t)$.

\item $\displaystyle\Dbtw\ge 2\min_{y\in X}\|\sinf(y,\cdot)\|_{\ell^{2}(\nu)}-2\frac{\epsilon(t)}{\sqrt{n}}\gamma(t)$.

\end{enumerate}

\end{thm}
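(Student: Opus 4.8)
The plan is to push the row-wise $\ell^\infty$ control of $\Pm^t-\Sm^\infty$ supplied by Theorem \ref{thm:NearReducibility} and Corollary \ref{cor:CriticalTimeRange} into the $\ell^2$ setting of diffusion distances, exploiting the block structure of $\Sm^\infty$ to treat within- and between-cluster pairs separately. Since $\nu$ is the counting measure, $D_t(x,y)=\|p_t(x,\cdot)-p_t(y,\cdot)\|_{\ell^2}$, so the only bridge needed between the two norms is the exact identity of \citet{Botelho2017_Exact}: for $v\in\mathbb{R}^n$, $\|v\|_{\ell^2}=(c_v/\sqrt n)\,\|v\|_{\ell^1}$, where $c_v$ is precisely the constant defining $\gamma(t)$. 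Applying this to each row $v(x)=p_t(x,\cdot)-\sinf(x,\cdot)$ and using $c_{v(x)}\le\gamma(t)$ together with the operator-norm bound $\|v(x)\|_{\ell^1}\le\|\Pm^t-\Sm^\infty\|_\infty$, I would first establish the workhorse estimate $\|p_t(x,\cdot)-\sinf(x,\cdot)\|_{\ell^2}\le(\gamma(t)/\sqrt n)\,\|\Pm^t-\Sm^\infty\|_\infty<\gamma(t)\epsilon(t)/\sqrt n$ for every $x\in X$, valid because the hypothesis on $t$ is exactly the interval of Corollary \ref{cor:CriticalTimeRange}. Both parts reduce to this estimate.

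For part (a), the key structural fact is that $\Sm^\infty$ has identical rows inside each diagonal block: for $x,y\in X_k$ the rows $\sinf(x,\cdot)$ and $\sinf(y,\cdot)$ both equal $\boldpi^k$ on the $k$-th block and vanish elsewhere, so their difference is zero. Writing $p_t(x,\cdot)-p_t(y,\cdot)=\big(p_t(x,\cdot)-\sinf(x,\cdot)\big)-\big(p_t(y,\cdot)-\sinf(y,\cdot)\big)$ and applying the triangle inequality, $D_t(x,y)$ is bounded by the sum of two workhorse terms, hence by $2\gamma(t)\epsilon(t)/\sqrt n$; maximizing over $x,y\in X_k$ and over $k$ gives $\Din\le 2\epsilon(t)\gamma(t)/\sqrt n$.

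For part (b) I would use the same decomposition but in the reverse-triangle direction. For $x\in X_k$, $y\in X_{k'}$ with $k\neq k'$, $D_t(x,y)\ge\|\sinf(x,\cdot)-\sinf(y,\cdot)\|_{\ell^2}-\|p_t(x,\cdot)-\sinf(x,\cdot)\|_{\ell^2}-\|p_t(y,\cdot)-\sinf(y,\cdot)\|_{\ell^2}$, where the two subtracted terms are again controlled by the workhorse and together contribute at most $2\gamma(t)\epsilon(t)/\sqrt n$. The structural point is now that rows of $\Sm^\infty$ from distinct blocks have disjoint support, so the Pythagorean identity gives $\|\sinf(x,\cdot)-\sinf(y,\cdot)\|_{\ell^2}=\sqrt{\|\sinf(x,\cdot)\|_{\ell^2}^2+\|\sinf(y,\cdot)\|_{\ell^2}^2}$; minimizing this over between-cluster pairs and bounding each row norm below by $\min_{y\in X}\|\sinf(y,\cdot)\|_{\ell^2(\nu)}$ supplies the leading term in the claimed lower bound on $\Dbtw$.

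The main obstacle is the norm mismatch: the near-reducibility estimates live naturally in the $\ell^\infty/\ell^1$ world, whereas diffusion distance is $\ell^2$, and the crude conversion $\|v\|_{\ell^1}\le\sqrt n\,\|v\|_{\ell^2}$ would inject a fatal factor of $\sqrt n$. What rescues the argument is that the identity of \citet{Botelho2017_Exact} is exact and is controlled uniformly across rows by $\gamma(t)\in[1,\sqrt n]$, so the whole estimate is only useful in the regime where $\gamma(t)$ is close to $1$, i.e. where each row of $\Pm^t-\Sm^\infty$ is nearly uniform. A secondary point of care is to confirm that the stated constraint on $t$ coincides exactly with the hypothesis of Corollary \ref{cor:CriticalTimeRange}, so the bound $\|\Pm^t-\Sm^\infty\|_\infty<\epsilon(t)$ may be invoked row-by-row, and to dispatch the degenerate case $v(x)=\0$ (where the constant $c_{v(x)}$ is undefined but the corresponding term is trivially zero).
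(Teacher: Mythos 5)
Your proposal follows essentially the same route as the paper's proof: the $\ell^\infty$ control from Corollary \ref{cor:CriticalTimeRange}, the exact $\ell^1$--$\ell^2$ conversion of \citet{Botelho2017_Exact} absorbed into $\gamma(t)$, the fact that $\Sm^\infty$ has identical rows within each diagonal block for part (a), and the reverse triangle inequality plus disjointness of supports for part (b). One caveat on the final step of (b): your Pythagorean identity gives $\|\sinf(x,\cdot)-\sinf(y,\cdot)\|_{\ell^{2}(\nu)}=\bigl(\|\sinf(x,\cdot)\|_{\ell^{2}(\nu)}^{2}+\|\sinf(y,\cdot)\|_{\ell^{2}(\nu)}^{2}\bigr)^{1/2}\ge\sqrt{2}\,\min_{w}\|\sinf(w,\cdot)\|_{\ell^{2}(\nu)}$, which does not recover the factor $2$ in the stated leading term (take $a=b$ to see that $\sqrt{a^{2}+b^{2}}<2\min(a,b)$); the paper's proof asserts the factor $2$ at the same point without the Pythagorean computation, so your argument is, if anything, more transparent about where the constant actually comes out as $\sqrt{2}$ rather than $2$.
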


\begin{proof}

By Corollary \ref{cor:CriticalTimeRange}, $\|\Pm^{t}-\Sm^{\infty}\|_{\infty}<\epsilon(t)$, that is, $\max_{x\in X}\sum_{u\in X}|p_{t}(x,u)-\sinf(x,u)|\nu(u)<\epsilon(t).$  To see (a), let $k$ be arbitrary and let $x,y \in X_{k}$.  Then:  \begin{align*}
& ||p_t(x,\cdot)-p_t(y,\cdot)||_{\ell^2(\nu)}\\
\le &  ||p_t(x,\cdot)-\sinf(x,\cdot)||_{\ell^2(\nu)} + ||p_t(y,\cdot)-\sinf(y,\cdot)||_{\ell^2(\nu)} + ||\sinf(y,\cdot)-\sinf(x,\cdot)||_{\ell^2(\nu)}\\
= &  \frac{1}{\sqrt{n}}\left(1-\frac{1}{2}\sum_{u\in X}\left|\frac{|p_{t}(x,u)-\sinf(x,u)|}{\|p_{t}(x,\cdot)-\sinf(x,\cdot)\|_{\ell^{2}(\nu)}}-\frac{1}{\sqrt{n}}\right|^{2}\right)^{-1}||p_t(x,\cdot)-\sinf(x,\cdot)||_{\ell^1(\nu)} \\+& \frac{1}{\sqrt{n}}\left(1-\frac{1}{2}\sum_{u\in X}\left|\frac{|p_{t}(y,u)-\sinf(y,u)|}{\|p_{t}(y,\cdot)-\sinf(y,\cdot)\|_{\ell^{2}(\nu)}}-\frac{1}{\sqrt{n}}\right|^{2}\right)^{-1}||p_t(y,\cdot)-\sinf(y,\cdot)||_{\ell^1(\nu)}\\ +& ||\sinf(y,\cdot)-\sinf(x,\cdot)||_{\ell^2(\nu)}\\
\le &\frac{2\epsilon(t)}{\sqrt{n}}\max_{x\in X}\left(1-\frac{1}{2}\sum_{u\in X}\left|\frac{|p_{t}(x,u)-\sinf(x,u)|}{\|p_{t}(x,\cdot)-\sinf(x,\cdot)\|_{\ell^{2}(\nu)}}-\frac{1}{\sqrt{n}}\right|^{2}\right)^{-1}+||\sinf(y,\cdot)-\sinf(x,\cdot)||_{\ell^2(\nu)}\,,
\end{align*}
where $t$ satisfies $\ln\left(\frac{2\kappa}{\epsilon(t)}\right)/\ln\left(\frac{1}{\lambda_{\numclust+1}}\right)<t<\epsilon(t)/(2\delta).$  The line relating the norm in $\ell^{1}(\nu)$ and $\ell^{2}(\nu)$ follows from Theorem 1 in \citep{Botelho2017_Exact}.  Note that $\Sm^{\infty}$ has constant columns on each cluster, and in particular for $x,y\in X_{k}$, $\sinf(x,u)=\sinf(y,u)=\boldpi^{k}(u)$ for all $u\in X$, so that $||\sinf(y,\cdot)-\sinf(x,\cdot)||_{\ell^2(\nu)}=0$. Statement (a) follows.

To see (b), suppose that $x\in X_{k}, y\in X_{\ell}, \ k\neq \ell$.  Then \begin{align*}&\|p_{t}(x,\cdot)-p_{t}(y,\cdot)\|_{\ell^{2}(\nu)}\\
=&\|p_{t}(x,\cdot)-\sinf(x,\cdot)+\sinf(x,\cdot)-\sinf(y,\cdot)+\sinf(y,\cdot)-p_{t}(y,\cdot)\|_{\ell^{2}(\nu)}\\
\ge& \|\sinf(x,\cdot)-\sinf(y,\cdot)\|_{\ell^{2}(\nu)}-\|p_{t}(x,\cdot)-\sinf(x,\cdot)\|_{\ell^{2}(\nu)}-\|p_{t}(y,\cdot)-\sinf(y,\cdot)\|_{\ell^{2}(\nu)}\\
=&\|\sinf(x,\cdot)-\sinf(y,\cdot)\|_{\ell^{2}(\nu)}\\-&\frac{1}{\sqrt{n}}\left(1-\frac{1}{2}\sum_{u\in X}\left|\frac{|p_{t}(x,u)-\sinf(x,u)|}{\|p_{t}(x,\cdot)-\sinf(x,\cdot)\|_{\ell^{2}(\nu)}}-\frac{1}{\sqrt{n}}\right|^{2}\right)^{-1}||p_t(x,\cdot)-\sinf(x,\cdot)||_{\ell^1(\nu)}\\-&\frac{1}{\sqrt{n}}\left(1-\frac{1}{2}\sum_{u\in X}\left|\frac{|p_{t}(y,u)-\sinf(y,u)|}{\|p_{t}(y,\cdot)-\sinf(y,\cdot)\|_{\ell^{2}(\nu)}}-\frac{1}{\sqrt{n}}\right|^{2}\right)^{-1}||p_t(y,\cdot)-\sinf(y,\cdot)||_{\ell^1(\nu)}\\
\ge&\|\sinf(x,\cdot)-\sinf(y,\cdot)\|_{\ell^{2}(\nu)}-\frac{1}{\sqrt{n}}\left(1-\frac{1}{2}\sum_{u\in X}\left|\frac{|p_{t}(x,u)-\sinf(x,u)|}{\|p_{t}(x,\cdot)-\sinf(x,\cdot)\|_{\ell^{2}(\nu)}}-\frac{1}{\sqrt{n}}\right|^{2}\right)^{-1}\epsilon(t)\\-&\frac{1}{\sqrt{n}}\left(1-\frac{1}{2}\sum_{u\in X}\left|\frac{|p_{t}(y,u)-\sinf(y,u)|}{\|p_{t}(y,\cdot)-\sinf(y,\cdot)\|_{\ell^{2}(\nu)}}-\frac{1}{\sqrt{n}}\right|^{2}\right)^{-1}\epsilon(t)\\
\ge & \|\sinf(x,\cdot)-\sinf(y,\cdot)\|_{\ell^{2}(\nu)}-2\frac{\epsilon(t)}{\sqrt{n}}\max_{z\in X_{\ell}\cup X_{k}}\left(1-\frac{1}{2}\sum_{u\in X}\left|\frac{|p_{t}(z,u)-\sinf(z,u)|}{\|p_{t}(z,\cdot)-\sinf(z,\cdot)\|_{\ell^{2}(\nu)}}-\frac{1}{\sqrt{n}}\right|^{2}\right)^{-1}\\
\ge & 2\min_{w\in X_{k}\cup X_{\ell}}\|\sinf(w,\cdot)\|_{\ell^{2}(\nu)}-2\frac{\epsilon(t)}{\sqrt{n}}\max_{z\in X_{\ell}\cup X_{k}}\left(1-\frac{1}{2}\sum_{u\in X}\left|\frac{|p_{t}(z,u)-\sinf(z,u)|}{\|p_{t}(z,\cdot)-\sinf(z,\cdot)\|_{\ell^{2}(\nu)}}-\frac{1}{\sqrt{n}}\right|^{2}\right)^{-1}\,,
\end{align*}
where in the last step, to lower bound the first term we used that $\sinf(y,\cdot)=\boldpi^{l}(\cdot)$, $\sinf(x,\cdot)=\boldpi^{k}(\cdot)$, and recalled that since $k\neq l$ the supports of $\boldpi^k$ and $\boldpi^l$ are disjoint.
Minimizing this lower bound over all clusters $X_{k}, X_{\ell}$ yields the desired result.  
\end{proof}

Heuristically, if $\epsilon(t)$ is small and the reduced equilibrium distribution $\sinf$ is roughly constant on each cluster, there will be a range of $t$ for which $\Din\ll\Dbtw.$  The notion of $\sinf$ being roughly constant on each cluster is equivalent to nodes in the same cluster having roughly constant degree.  These theoretical estimates are compared to empirical bounds computed numerically in Section \ref{sec:NumericalExperiments}.  

If $\Pm$ is very close to  $\Sm$ in Frobenius norm, then $p_{t}(x,y)$ would be very close to $\sinf(x,y)$ and $\epsilon(t)$ would be close to $0$.  In this case the estimates of Theorem \ref{thm:MainResult} reduce to 
\begin{equation}
\label{eqn:epsilon0}
\begin{aligned}
\Din =0\qquad,\qquad
 \Dbtw\ge 2\min_{y\in X}\|\sinf(y,\cdot)\|_{\ell^{2}(\nu)}.
 \end{aligned}  
\end{equation}
One can define a natural notion of diffusion distance between disjoint clusters in a reducible Markov chain as the sum of the $\ell^{2}$ norms of their respective stationary distribution, which agrees with both the definition of diffusion distances upon taking the limit $t\rightarrow+\infty$ and with the lower bound (b) in Theorem \ref{thm:MainResult} when $\epsilon(t)\rightarrow 0$.  Hence, while the estimates in the proof of Theorem \ref{thm:MainResult} may not be optimal, they are quite natural for $\epsilon(t)\rightarrow 0$.  
 
 Away from the asymptotic regime $\Pm\rightarrow \Sm$, the estimates of Theorem \ref{thm:MainResult} may be further simplified by placing additional assumptions on the data.
\begin{cor}Suppose that $\sinf$ is uniform on each $X_{k}$, and the cardinality of each $X_{k}$ is $\frac{n}{\numclust}$.   Then for any $t,\epsilon(t)$ satisfying ${\ln\left(\frac{2\kappa}{\epsilon(t)}\right)}\big/{\ln\left(\frac{1}{\lambda_{\numclust+1}}\right)}<t<\frac{\epsilon(t)}{2\delta}$, 
$$\Din \le \frac{2}{\sqrt n}\epsilon(t)\gamma(t)\qquad,\qquad
\Dbtw\ge \frac{2}{\sqrt n}\left(\sqrt K-\epsilon(t)\gamma(t)\right).$$
\end{cor}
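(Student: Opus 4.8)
The plan is to derive this corollary directly from Theorem~\ref{thm:MainResult}, since the extra hypotheses on $\sinf$ and on the cluster cardinalities do not alter the admissible time window; they merely let me evaluate explicitly the single geometric quantity $\min_{y\in X}\|\sinf(y,\cdot)\|_{\ell^{2}(\nu)}$ that appears in part (b). The bound on $\Din$ requires no new work: statement (a) of Theorem~\ref{thm:MainResult} already reads $\Din\le 2\tfrac{\epsilon(t)}{\sqrt n}\gamma(t)$, which is precisely the first claimed inequality. Thus the entire content of the corollary is the simplification of the lower bound in (b).

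For that, I first recall from Theorem~\ref{thm:NearReducibility} that $\Sm^{\infty}$ is block diagonal with $k$-th diagonal block equal to $\bold{1}\boldpi^{k}$, where $\boldpi^{k}$ is the stationary distribution of $\Sm_{kk}$, supported on $X_{k}$. Hence for any $y\in X_{k}$ the row $\sinf(y,\cdot)$ equals $\boldpi^{k}(\cdot)$, which is supported on $X_{k}$ and, by hypothesis, uniform there. Since $|X_{k}|=n/\numclust$ and $\boldpi^{k}$ is a probability vector, each of its nonzero entries equals $\numclust/n$. Taking $\nu$ to be the counting measure as in Theorem~\ref{thm:MainResult}, I then compute
$$\|\sinf(y,\cdot)\|_{\ell^{2}(\nu)}^{2}=\sum_{u\in X_{k}}\Big(\frac{\numclust}{n}\Big)^{2}=\frac{n}{\numclust}\cdot\frac{\numclust^{2}}{n^{2}}=\frac{\numclust}{n},$$
so that $\|\sinf(y,\cdot)\|_{\ell^{2}(\nu)}=\sqrt{\numclust/n}=\sqrt{\numclust}/\sqrt n$.

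The final step is to note that this value is independent of the cluster index $k$ and of the point $y$, so the minimum over $y\in X$ in part (b) is attained trivially and equals $\sqrt{\numclust}/\sqrt n$. Substituting into statement (b) of Theorem~\ref{thm:MainResult} gives
$$\Dbtw\ge 2\frac{\sqrt{\numclust}}{\sqrt n}-2\frac{\epsilon(t)}{\sqrt n}\gamma(t)=\frac{2}{\sqrt n}\big(\sqrt{\numclust}-\epsilon(t)\gamma(t)\big),$$
as claimed. I do not expect any genuine obstacle: the only point that needs care is the normalization of $\boldpi^{k}$ — one must use that it is a \emph{probability} distribution on $X_{k}$, so its uniform value is $\numclust/n$ rather than $1$, and that the counting-measure $\ell^{2}$ norm introduces no reweighting, after which the $n/\numclust$ terms in the sum collapse to the clean factor $\sqrt{\numclust}/\sqrt n$.
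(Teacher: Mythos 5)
Your proposal is correct and follows essentially the same route as the paper: the paper likewise invokes Theorem \ref{thm:MainResult} and simply evaluates $2\min_{y\in X}\|\sinf(y,\cdot)\|_{\ell^{2}(\nu)}=2\sqrt{K/n}$ under the uniformity and equal-cardinality hypotheses. Your computation of the uniform value $\numclust/n$ of $\boldpi^{k}$ and the resulting $\ell^{2}$ norm just makes explicit the step the paper states without detail.
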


\begin{proof}If $\Sm^{\infty}$ has constant rows on each cluster (i.e. the stationary distribution on each cluster of the reduced Markov chain is uniform), and the clusters are of constant size $n/K$, then $$2\min_{y\in X}\|\sinf(y,\cdot)\|_{\ell^{2}(\nu)}=2\sqrt{\frac{K}{n}}.$$  The result follows from Theorem \ref{thm:MainResult}.  

\end{proof}

In particular, if $\epsilon(t)\ll \frac{\sqrt{K}}{2\gamma(t)}$, within cluster distances will be small since $\Din\ll \sqrt{\frac{K}{n}}$, and also there will be clear separation between clusters since $\Dbtw=\Omega\left(\sqrt{\frac{K}{n}}\right)$.  Note that when $\Pm^{t}-\Sm^{\infty}$ is balanced, $\gamma(t)$ is $O(1)$ with respect to $n$, so that the assumption on $\epsilon(t)$ is independent of $n$.

\subsection{Example: A Simple Gaussian Mixture Model}\label{subsec:ExampleAnalysis}

The major parameters controlling the estimates in Theorem \ref{thm:MainResult} are $\delta, \lambda_{\numclust+1},$ and $\kappa$.  To illustrate the key quantities of this theorem, we consider the simple example of a mixture of Gaussians $\mu_{G}=\frac{1}{2}\mathcal{N}(\x_{1},\Sigma)+\frac{1}{2}\mathcal{N}(\x_{2},\Sigma)$ in $\mathbb{R}^{2}$ with diagonal isotropic covariance matrix $\Sigma=\frac{1}{10}I$.  Our method depends mainly on the intrinsic geometric constants $\delta$ and $\lambda_{K+1}$, so clustering performance with LUND is robust to small amounts of geometric deformation, for example the action of a bi-Lipschitz map.  We construct the diffusion transition matrix $\Pm$ using the Gaussian kernel with $\sigma=.2$, as described in Section \ref{sec:BackgroundDiffusionDistance}.  It is thus expected that our empirical analysis of Gaussian data will be broadly illustrative.

As $\|\x_{1}-\x_{2}\|_{2}$ increases, Theorem \ref{thm:MainResult} becomes more informative.  In Figure \ref{fig:GaussianExamples}, samples are drawn from $\mu_{G}$ with different amounts of separation (and hence different $\delta, \lambda_{\numclust+1}, \kappa$ values) and we show the dependence on $\epsilon=\epsilon(t)$ of the bounds 

\begin{align*}
\overline{\Din}=& 2\frac{\epsilon(t)}{\sqrt{n}}\gamma(t),\\
\underline{\Dbtw}=&2\min_{y\in X}\|\sinf(y,\cdot)\|_{\ell^{2}(\nu)}-2\frac{\epsilon(t)}{\sqrt{n}}\gamma(t).
\end{align*}

and the permissible time interval $[\ln\left(\frac{2\kappa}{\epsilon(t)}\right)/\ln\left(\frac{1}{\lambda_{\numclust+1}}\right),\frac{\epsilon(t)}{2\delta}]$.  For Theorem \ref{thm:MainResult} to be meaningful, $\epsilon=\epsilon(t)$ must be such that simultaneously $\overline{\Din}<\underline{\Dbtw}$ and $\ln\left(\frac{2\kappa}{\epsilon(t)}\right)/\ln\left(\frac{1}{\lambda_{\numclust+1}}\right)<\frac{\epsilon(t)}{2\delta}$.  As $\epsilon\rightarrow0$, $\overline{\Din}<\underline{\Dbtw}$ holds if the clusters are internally well-connected and separated, as articulated in (\ref{eqn:epsilon0}), while $\ln\left(\frac{2\kappa}{\epsilon(t)}\right)/\ln\left(\frac{1}{\lambda_{\numclust+1}}\right)\rightarrow\infty$ and $\frac{\epsilon(t)}{2\delta}\rightarrow 0$; a similar but reversed dichotomy occurs as $\epsilon\rightarrow\infty$.  Figure \ref{fig:GaussianExamples} illustrates this tension between the hypotheses of Theorem \ref{thm:MainResult} and the strength of its conclusion.

\begin{figure}
\centering
\begin{subfigure}{.32\textwidth}
\includegraphics[width=\textwidth]{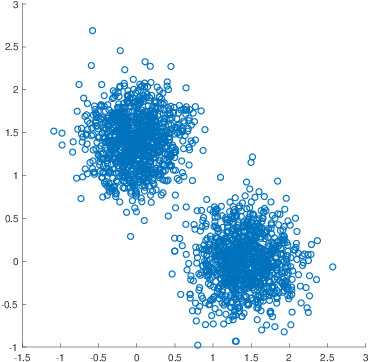}
\subcaption{$\x_{1}=(\sqrt{2},0)$, $\x_{2}=(0,\sqrt{2})$ }
\end{subfigure}
\begin{subfigure}{.32\textwidth}
\includegraphics[width=\textwidth]{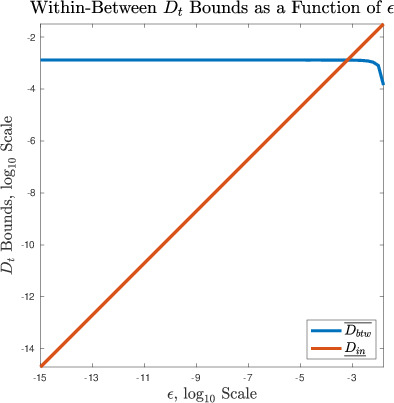}
\subcaption{$[\overline{\Din},\underline{\Dbtw}]$.}
\end{subfigure}
\begin{subfigure}{.32\textwidth}
\includegraphics[width=\textwidth]{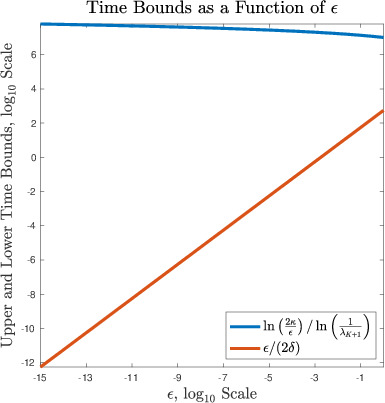}
\subcaption{$[\ln\left(\frac{2\kappa}{\epsilon(t)}\right)/\ln\left(\frac{1}{\lambda_{\numclust+1}}\right),\frac{\epsilon(t)}{2\delta}]$}
\end{subfigure}
\begin{subfigure}{.32\textwidth}
\includegraphics[width=\textwidth]{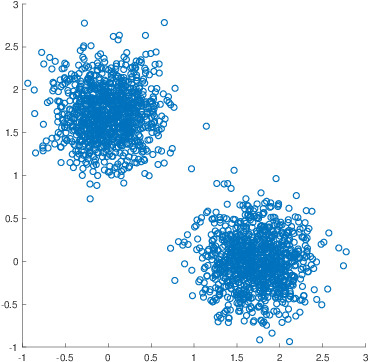}
\subcaption{$\x_{1}=(\sqrt{3},0)$, $\x_{2}=(0,\sqrt{3})$}
\end{subfigure}
\begin{subfigure}{.32\textwidth}
\includegraphics[width=\textwidth]{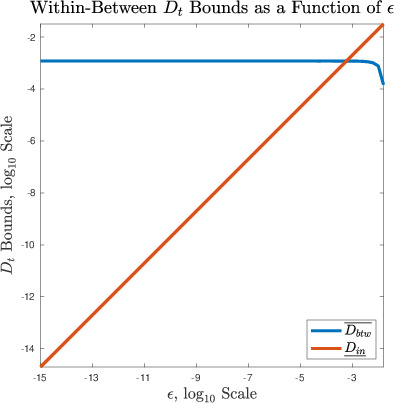}
\subcaption{$[\overline{\Din},\underline{\Dbtw}]$.}
\end{subfigure}
\begin{subfigure}{.32\textwidth}
\includegraphics[width=\textwidth]{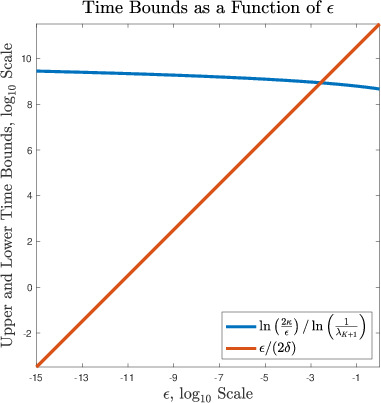}
\subcaption{$[\ln\left(\frac{2\kappa}{\epsilon(t)}\right)/\ln\left(\frac{1}{\lambda_{\numclust+1}}\right),\frac{\epsilon(t)}{2\delta}]$}
\end{subfigure}
\begin{subfigure}{.32\textwidth}
\includegraphics[width=\textwidth]{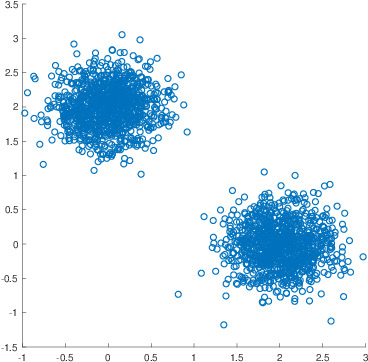}
\subcaption{$\x_{1}=(2,0)$, $\x_{2}=(0,2)$}
\end{subfigure}
\begin{subfigure}{.32\textwidth}
\includegraphics[width=\textwidth]{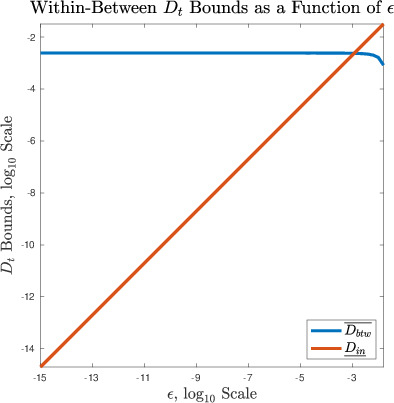}
\subcaption{$[\overline{\Din},\underline{\Dbtw}]$.}
\end{subfigure}
\begin{subfigure}{.32\textwidth}
\includegraphics[width=\textwidth]{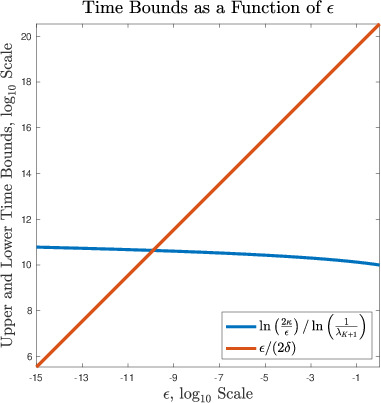}
\subcaption{$[\ln\left(\frac{2\kappa}{\epsilon(t)}\right)/\ln\left(\frac{1}{\lambda_{\numclust+1}}\right),\frac{\epsilon(t)}{2\delta}]$}
\end{subfigure}
\caption{\label{fig:GaussianExamples}  2000 data points sampled from $\mu_{G}$, for various means are shown in (a), (d), (g).  In (b), (e), (h), the range between $\overline{\Din}$ and $\underline{\Dbtw}$---as a function of $\epsilon$---is shown. Plots (c), (f), (i) show the $t$ interval guaranteed by Theorem \ref{thm:MainResult}, indicating the existence of a non-empty range of $t$ for which the conclusions of Theorem \ref{thm:MainResult} apply whenever the red curve is above the blue curve.  As the means move apart, the time interval in which Theorem \ref{thm:MainResult} guarantees good separation between the clusters expands.  This is makes sense intuitively, since as the clusters move apart, the unsupervised learning problem becomes easier.}
\end{figure}


\subsection{Relationship Between Time and Scaling Parameter in Diffusion Distances}
\label{s:TimeVsDD}
The Markov chain underlying diffusion distances is typically constructed using the heat kernel $\K(x,y)=e^{-\|x-y\|_{2}^{2}/\sigma^{2}}$, for some choice of (spatial) scale parameter $\sigma$.  Once $\Pm=\Pm_\sigma$ is constructed, the time parameter $t$ enters.  There exists an asymptotic relationship between $t$ and $\sigma$ as $n\rightarrow\infty$:

\begin{align}\label{eqn:TsigmaEquivalence}\lim_{\sigma\rightarrow 0} \Pm_{\sigma}^{T_{0}/\sigma}=e^{-T_{0}\Delta},\end{align}

where $e^{-T_{0}\Delta}$ is the infinitesimal generator corresponding to continuous diffusion with canonical time $T_{0}$ \citep{Lafon2006}.  So, asymptotically as $n\rightarrow\infty$, and requiring $\sigma\rightarrow 0$ and $t=\frac{T_{0}}{\sigma}$, using $(t,\sigma)$ is equivalent to using $(Ct,\sigma/C)$ for any constant $C>0$.  This suggests that asymptotically, the performance of LUND with respect to $\sigma, t$ should be constant if $t\sigma$ constant.  As we shall see observe in Section \ref{sec:NumericalExperiments}, working with {\em{finite}} data in the {\em{cluster}} setting, rather than the asymptotic regime on a common manifold, may lead to more subtle relationships between $t$ and $\sigma$.


\section{Performance Guarantees for Unsupervised Learning}\label{sec:PerformanceGuarantees}

We consider now how the LUND algorithm (Algorithm \ref{alg:LUND}) performs on data $X=\bigcup_{k=1}^{\numclust}X_{k}$.  Let $p(x)$ be a kernel density estimator for $x\in X$, let $\rho_{t}$ be as in (\ref{eqn:rho_t}), and recall $\Dt(x)=p(x)\rho_{t}(x)$.  The LUND algorithm sets the maximizers of $\Dt$ to be the modes of the clusters.  Requiring potential modes to have large $\rho_{t}$ values enforces that modes should be far in diffusion distance from other high-density points, and incorporating $p(x)$ downweighs outliers, which may be far in diffusion distance from their nearest neighbor of higher empirical density.

\begin{thm}\label{thm:Maximizers}  Suppose $X=\bigcup_{k=1}^{\numclust}X_{k}$.  If $\Din/\Dbtw<\min(\M)/\max(\M)$, then the $\numclust$ maximizers $\{\x_{i}^{*}\}_{i=1}^{\numclust}$ of $\Dt(x)$ are such that $\x_{i}^{*}$ is a highest empirical density point of $X_{k_{i}}$ for some permutation $(k_{1},\dots,k_{\numclust})$ of $(1,\dots,\numclust)$.
\end{thm}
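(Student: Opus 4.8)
The plan is to turn the single inequality in the hypothesis into a clean threshold argument: I would bound $\Dt(x)=p(x)\rho_{t}(x)$ from \emph{below} on the set of per-cluster density maxima and from \emph{above} on every other point, and then show that the condition $\Din/\Dbtw<\min(\M)/\max(\M)$ forces these two ranges to be disjoint. Write $m_{k}=\argmax_{y\in X_{k}}p(y)$ for the density maximizer of cluster $X_{k}$; the goal is to prove that $\{m_{1},\dots,m_{\numclust}\}$ are exactly the top $\numclust$ maximizers of $\Dt$.

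First I would control $\rho_{t}$ on the maxima $m_{k}$. If $m_{k}$ is not the global density maximizer, then any $y\neq m_{k}$ with $p(y)\ge p(m_{k})$ must lie outside $X_{k}$ (inside $X_{k}$ only $m_{k}$ attains the maximal density), so the nearest admissible competitor sits in a different cluster and hence $\rho_{t}(m_{k})\ge\Dbtw$ by definition of $\Dbtw$. If $m_{k}$ is the global maximizer, then $\rho_{t}(m_{k})=\max_{y}D_{t}(m_{k},y)\ge\Dbtw$ as well, since (for $\numclust\ge 2$) any point in another cluster already realizes a diffusion distance $\ge\Dbtw$. In both cases $\Dt(m_{k})=p(m_{k})\rho_{t}(m_{k})\ge\min(\M)\,\Dbtw$. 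Next, for a point $x$ that is \emph{not} a cluster maximizer, $x$ lies in some $X_{k}$ with $p(m_{k})\ge p(x)$ and $m_{k}\neq x$, so $m_{k}$ is an admissible target in the minimization defining $\rho_{t}(x)$; since $x,m_{k}\in X_{k}$ this gives $\rho_{t}(x)\le D_{t}(x,m_{k})\le\Din$, and using $p(x)\le\max(\M)$ yields $\Dt(x)\le\max(\M)\,\Din$.

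The hypothesis rearranges to $\max(\M)\,\Din<\min(\M)\,\Dbtw$, so the upper bound for every non-maximizer falls strictly below the lower bound for every cluster maximizer. Therefore the $\numclust$ largest values of $\Dt$ are attained precisely at $m_{1},\dots,m_{\numclust}$, and since each cluster contributes exactly one such point, these comprise the $\numclust$ maximizers and realize the asserted permutation $(k_{1},\dots,k_{\numclust})$.

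I do not expect a genuine obstacle here, as the argument is just a two-sided estimate plus a gap; the only real delicacy is the definition of $\rho_{t}$ via $\{y:p(y)\ge p(x)\}$, which must be read as excluding $y=x$ (otherwise $\rho_{t}\equiv 0$) and must contend with density ties, so that within a cluster only $m_{k}$ counts as a strictly-higher-density target. Under the generic assumption that each per-cluster density maximizer is unique this is immediate; in the presence of ties one fixes a tie-breaking rule consistent with the phrase ``a highest empirical density point,'' after which the three displayed inequalities go through unchanged.
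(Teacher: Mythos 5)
Your proof is correct and rests on exactly the same two estimates as the paper's: $\rho_{t}\le\Din$ at any non-maximizer (because the within-cluster density maximizer is an admissible target) and $\rho_{t}\ge\Dbtw$ at each cluster maximizer (because all admissible targets lie in other clusters), combined with the gap $\max(\M)\,\Din<\min(\M)\,\Dbtw$. The only difference is organizational --- you state the threshold argument directly while the paper packages it as an induction on the number of modes already found --- and your explicit remarks about excluding $y=x$ and about density ties address points the paper leaves implicit.
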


\begin{proof}We proceed by induction on $1\le m\le \numclust$.  Clearly $\x_{1}^{*}=\argmax_{y\in X}p(y)$ is a highest empirical density point of some $X_{k}$.  Then suppose $\x_{1}^{*},\dots,\x_{m}^{*}, m<\numclust$, have been determined, and are highest empirical density points of distinct classes $X_{k_{1}},\dots,X_{k_{m}}$.  We show that $\x_{m+1}^{*}$ must be a highest density point among the remaining $X_{k}$, $k\notin\{ k_{1},\dots,k_{m}\}$.  

First, suppose $x\in X_{k_{r}}$ for some $r\in\{1,\dots, m\}$ is any point in the classes already discovered, not of maximal within-class density.  Then $\rho_{t}(x)\le\Din$, since $\x_{r}^{*}\in X_{k_{r}}$ has $p(x)<p(\x_{r}^{*})$, and hence for any $\x$ a highest density point in a cluster not already discovered,  $$\Dt(x)< p(\x_{k_{r}}^{*})\rho_{t}(x)\le\max(\M)\Din \le\frac{\max(\M)}{\min(\M)}\frac{\Din}{\Dbtw}p(\x)\rho_{t}(\x)<p(\x)\rho_{t}(\x)=\Dt(\x).$$  Hence, $\x_{m+1}^{*}\neq x$. 

Now, suppose $x\in X_{k}, k\neq k_{1},\dots k_{m}$.  If  $x\neq \x$ an empirical density maximizer of $X_{k}$, then: $$\Dt(x)=p(x)\rho_{t}(x)<p(\x)\rho_{t}(x)\le p(\x)\Din<p(\x)\Dbtw\le p(\x)\rho_{t}(\x)=\Dt(\x).$$  Hence, $\x_{m+1}^{*}\neq x,$ and thus $\x_{m+1}^{*}$ must be among the classwise empirical density maximizers of $X_{k}$, $k\notin\{k_{1},\dots,k_{m}\}$.
\end{proof}

We remark that requiring $\Din/\Dbtw<\min(\M)/\max(\M)$ allows for a simple detection of the modes, while a weaker hypothesis would allow for mode detection based on a more refined analysis of the decay of the sorted $\Dt(x)$ values.  

A similar method proves that the decay of $\Dt$ determines the number of clusters $\numclust$.  The problem of estimating the number of clusters is a crucial one, but few methods admit theoretical guarantees; see \citep{Little2015} for an overview.

\begin{cor}\label{cor:NumClusters}Let $\{x_{m_{i}}\}_{i=1}^{n}$ be the points $\{x_{i}\}_{i=1}^{n}$, sorted in non-increasing order: $\Dt(x_{m_{1}})\ge \Dt(x_{m_{2}})\ge \dots \ge \Dt(x_{m_{n}})$.  Then:

\begin{enumerate}[(a)]
\item $\frac{\Dt(x_{m_{j}})}{\Dt(x_{m_{j+1}})}\le \frac{\max(\M)}{\min(\M)}\frac{\max_{i=1,\cdots,\numclust}\rho_{t}(x_{m_{i}})}{\min_{i=1,\cdots,\numclust}\rho_{t}(x_{m_{i}})}$ for $j< \numclust$.
\item $ \frac{\Dt(x_{m_{\numclust}})}{\Dt(x_{m_{\numclust+1}})}\ge  \frac{\min(\M)}{\max(\M)}\frac{\Dbtw}{\Din}$.
\end{enumerate}
\end{cor}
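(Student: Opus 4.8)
The plan is to leverage Theorem \ref{thm:Maximizers} to pin down the identity of the top $\numclust+1$ entries in the sorted list, and then to estimate their $\Dt$ values separately through the density factor $p$ and the diffusion factor $\rho_{t}$. Under the standing hypothesis $\Din/\Dbtw<\min(\M)/\max(\M)$ inherited from Theorem \ref{thm:Maximizers}, that theorem guarantees the $\numclust$ largest values $\Dt(x_{m_{1}}),\dots,\Dt(x_{m_{\numclust}})$ are attained exactly at one highest-density point per cluster, while $x_{m_{\numclust+1}}$ is \emph{not} a cluster mode. This dichotomy between modes and non-modes is what drives both bounds.

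For part (a), I would note that for $j<\numclust$ both $x_{m_{j}}$ and $x_{m_{j+1}}$ are cluster modes, so their densities lie in $\M$; in particular $p(x_{m_{j}})\le\max(\M)$ and $p(x_{m_{j+1}})\ge\min(\M)$. Writing $\Dt=p\,\rho_{t}$ and taking the quotient, the density contribution is bounded by $\max(\M)/\min(\M)$, while the remaining factor $\rho_{t}(x_{m_{j}})/\rho_{t}(x_{m_{j+1}})$ is at most the ratio of the largest to smallest $\rho_{t}$ over the top $\numclust$ indices, giving (a) immediately.

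For part (b), I would bound the numerator and denominator of $\Dt(x_{m_{\numclust}})/\Dt(x_{m_{\numclust+1}})$ factorwise. Since $x_{m_{\numclust}}$ is a cluster mode, $p(x_{m_{\numclust}})\ge\min(\M)$, and its nearest point of higher-or-equal empirical density must lie in a different cluster---because it is the densest point of its own cluster---so by the definition \eqref{eqn:rho_t} of $\rho_{t}$ we get $\rho_{t}(x_{m_{\numclust}})\ge\Dbtw$ (the case where $x_{m_{\numclust}}$ is the global density maximizer is handled directly, since then $\rho_{t}$ is a maximum of diffusion distances, still at least $\Dbtw$). Conversely, since $x_{m_{\numclust+1}}$ is not a cluster mode, $p(x_{m_{\numclust+1}})\le\max(\M)$, and the mode of its own cluster is a higher-density point within the same cluster, hence within diffusion distance $\Din$; this makes that mode an admissible competitor in \eqref{eqn:rho_t}, forcing $\rho_{t}(x_{m_{\numclust+1}})\le\Din$. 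Multiplying the factorwise bounds yields $\Dt(x_{m_{\numclust}})\ge\min(\M)\,\Dbtw$ and $\Dt(x_{m_{\numclust+1}})\le\max(\M)\,\Din$, and dividing gives (b).

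The main obstacle I anticipate is the careful handling of the piecewise $\rho_{t}$ definition at the two boundary points, in particular verifying $\rho_{t}(x_{m_{\numclust}})\ge\Dbtw$: one must rule out that the nearest higher-density point of a mode lies in its own cluster, which relies on the mode being the essentially unique density maximizer of its cluster, and one must separately treat the second branch of \eqref{eqn:rho_t} at the global density maximizer. Everything else is a routine factorization of $\Dt=p\,\rho_{t}$ combined with the membership facts $p(x_{m_{i}})\in[\min(\M),\max(\M)]$ supplied by Theorem \ref{thm:Maximizers}.
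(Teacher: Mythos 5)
Your proposal is correct and follows essentially the same route as the paper's proof: both parts reduce to the factorization $\Dt = p\,\rho_{t}$ together with the identification, via Theorem \ref{thm:Maximizers}, of $x_{m_{1}},\dots,x_{m_{\numclust}}$ as cluster modes (so $p\in[\min(\M),\max(\M)]$ and $\rho_{t}\ge \Dbtw$ there) and of $x_{m_{\numclust+1}}$ as a non-mode (so $\rho_{t}\le \Din$ there), yielding exactly the chain $\Dt(x_{m_{\numclust}})\ge\min(\M)\Dbtw$ and $\Dt(x_{m_{\numclust+1}})\le\max(\M)\Din$ that the paper uses. Your version simply spells out the justifications---including the boundary cases of the piecewise definition of $\rho_{t}$---that the paper leaves implicit.
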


\begin{proof}Statement (a) is immediate from the definition.  To see (b), we compute $$\Dt(x_{m_{\numclust}})\ge \min(\M)\Dbtw= \frac{\min(\M)}{\max(\M)}\frac{\Dbtw}{\Din}\Din\max(\M)\ge  \frac{\min(\M)}{\max(\M)}\frac{\Dbtw}{\Din}\Dt(x_{m_{\numclust+1}}).$$

\end{proof}

Hence if $ \frac{\Din}{\Dbtw}\frac{\max_{i=1,\cdots,\numclust}\rho_{t}(x_{m_{i}})}{\min_{i=1,\cdots,\numclust}\rho_{t}(x_{m_{i}})}\ll \left(\frac{\min(\M)}{\max(\M)}\right)^{2},$ there will be a sharp drop-off in the values of $\Dt$ after the first $\numclust$ maximizers.   This observation can be used to accurately identify the number of clusters.  The ratio $\min_{i=1,\cdots\numclust}\rho_{t}(x_{m_{i}})/\max_{i=1,\cdots\numclust}\rho_{t}(x_{m_{i}})$ will be insignificant unless the clusters are arranged at different scales (i.e. some clusters are very close to each other but far from others).  Similarly, $({\min(\M)}/{\max(\M)})^{2}$ will be nearly 1 if the maximal densities of the clusters are comparable.

Once the modes have been learned correctly, points may be clustered simply by labeling each mode as belonging to its own class, then requiring that every point has the same label as its nearest neighbor in diffusion distance of higher density.

\begin{cor}\label{cor:PerformanceGuarantees}
Suppose $X=\bigcup_{k=1}^{\numclust}X_{k}$.  Let $\{x_{m_{i}}\}_{i=1}^{n}$ be the points $\{x_{i}\}_{i=1}^{n}$, sorted so that $\Dt(x_{m_{1}})\ge \Dt(x_{m_{2}})\ge \dots \ge \Dt(x_{m_{n}})$.  Then Algorithm \ref{alg} labels all points correctly for any $\tau$ satisfying $$\frac{\Din}{\Dbtw}\frac{\max_{i=1,\cdots,\numclust}\rho_{t}(x_{m_{i}})}{\min_{i=1,\cdots,\numclust}\rho_{t}(x_{m_{i}})}<\tau<\left(\frac{\min(\M)}{\max(\M)}\right)^{2}.$$
\end{cor}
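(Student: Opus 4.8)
The plan is to deduce this corollary by chaining together Theorem~\ref{thm:Maximizers} (correct mode identification), Corollary~\ref{cor:NumClusters} (correct estimation of $\numclust$), and a density-ordered induction tracking the final relabeling loop of Algorithm~\ref{alg:LUND}. First I would observe that the stated range for $\tau$ already forces the hypothesis of Theorem~\ref{thm:Maximizers}: since $\max_{i}\rho_{t}(x_{m_{i}})/\min_{i}\rho_{t}(x_{m_{i}})\ge1$ and $(\min(\M)/\max(\M))^{2}\le\min(\M)/\max(\M)$, the assumption $\frac{\Din}{\Dbtw}\frac{\max_{i}\rho_{t}(x_{m_{i}})}{\min_{i}\rho_{t}(x_{m_{i}})}<\tau<(\min(\M)/\max(\M))^{2}$ yields $\Din/\Dbtw<\min(\M)/\max(\M)$. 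Hence the first $\numclust$ entries $x_{m_{1}},\dots,x_{m_{\numclust}}$ of the $\Dt$-sorted list are exactly the $\numclust$ cluster density maxima, one drawn from each $X_{k}$.

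Next I would establish $\hat{\numclust}=\numclust$. The index $\hat{\numclust}$ is the first $k$ at which the consecutive ratio $\Dt(x_{m_{k}})/\Dt(x_{m_{k+1}})$ crosses the threshold $\tau$, so it suffices to sandwich $\tau$ strictly between the mode-to-mode ratios (indices $k<\numclust$) and the mode-to-nonmode ratio (index $k=\numclust$). Corollary~\ref{cor:NumClusters}(a) bounds each ratio with $k<\numclust$ from above by $\frac{\max(\M)}{\min(\M)}\frac{\max_{i}\rho_{t}(x_{m_{i}})}{\min_{i}\rho_{t}(x_{m_{i}})}$, while Corollary~\ref{cor:NumClusters}(b) bounds the ratio at $k=\numclust$ from below by $\frac{\min(\M)}{\max(\M)}\frac{\Dbtw}{\Din}$. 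A short computation shows the displayed condition on $\tau$ is exactly what makes the window $\frac{\max(\M)}{\min(\M)}\frac{\max_{i}\rho_{t}(x_{m_{i}})}{\min_{i}\rho_{t}(x_{m_{i}})}<\frac{\min(\M)}{\max(\M)}\frac{\Dbtw}{\Din}$ nonempty, so a $\tau$ separating the two families of ratios exists and triggers the jump at precisely $k=\numclust$. I would make the reconciliation between these two ratio bounds and the stated interval explicit, as this is the step most prone to an off-by-a-factor slip.

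With the modes fixed and $\hat{\numclust}=\numclust$ confirmed, I would finish by inducting over the points in order of decreasing density, mirroring the final loop. The base case assigns each of the $\numclust$ modes its own distinct label. For the inductive step, take a non-mode $x\in X_{k}$: the density maximizer of $X_{k}$ lies in $X_{k}$, has density at least $p(x)$, and sits within diffusion distance $\Din$ of $x$, so $\rho_{t}(x)\le\Din$; meanwhile every point of any other cluster is at diffusion distance at least $\Dbtw>\Din$. Consequently the minimizer $x^{*}=\argmin_{y}\{D_{t}(x,y)\mid p(y)\ge p(x),\ y\text{ labeled}\}$ selected in the loop must belong to $X_{k}$, and by the inductive hypothesis it already carries the label of $X_{k}$; thus $x$ is labeled correctly.

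The main obstacle I anticipate is the bookkeeping in this last step: I must guarantee that when $x$ is reached, a correctly-labeled higher-density point of its own cluster is already available as a candidate in the restricted $\argmin$ (the ``$y$ labeled'' constraint), and that no cross-cluster point can tie or beat it. The strict separation $\Din<\Dbtw$ rules out cross-cluster minimizers, and processing in decreasing density order guarantees that the mode of $X_{k}$, and indeed the entire higher-density portion of $X_{k}$, has been labeled before $x$ is processed; care is needed only with density ties, which I would handle by any fixed tie-breaking rule consistent with the sort.
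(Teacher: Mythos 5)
Your proposal follows essentially the same route as the paper's proof: invoke Corollary \ref{cor:NumClusters} to conclude $\hat{K}=\numclust$, Theorem \ref{thm:Maximizers} to identify the $\numclust$ modes (noting the stated $\tau$-interval forces its hypothesis $\Din/\Dbtw<\min(\M)/\max(\M)$), and then an induction in decreasing density order using $\Din<\Dbtw$ to show the relabeling loop assigns every point the label of its own cluster. If anything you are more explicit than the paper, which dispatches the threshold step in a single sentence; the reconciliation you flag between the ratio bounds of Corollary \ref{cor:NumClusters} and the stated interval for $\tau$ is indeed the one place requiring care.
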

\begin{proof}
By Corollary \ref{cor:NumClusters}, the algorithm correctly estimates $\hat{K}$.  Then, by Theorem \ref{thm:Maximizers}, the algorithm correctly learns the empirical density maximizers of each of the $\{X_{k}\}_{k=1}^{\numclust}$.  It remains to show that the subsequent labeling of all points is accurate.  For an unlabeled point $x\in X_{k}$, its nearest diffusion neighbor of higher density, $x^{*}$, must be in the same cluster $X_{k}$, since $\Din<\Dbtw$.  Moreover, that point is already labeled as $Y(x^{*})=k$, since $p(x^{*})\ge p(x)$.  Hence, $Y(x)=k$ and by induction, all points are labeled correctly.  
\end{proof}

The dependence on $\tau$ is somewhat unsatisfying, and in practice, this quantity can be removed from the inputs of Algorithm \ref{alg} by instead setting $\hat{\numclust}=\argmax_{k}\Dt(x_{m_{k}})/\Dt(x_{m_{k}+1})$.  This provably detects $\numclust$ accurately by noting that the ratios $\Dt(x_{m_{j+1}})/\Dt(x_{m_{j}})$ are small for $j>\numclust$ under a range of reasonable assumptions, for example the assumptions that the density of each cluster is bounded away from 0 and the ratio of the minimal and maximal within-cluster diffusion distance is bounded.

If $\hat{K}$ is known a priori, a weaker condition guarantees correct labeling:

\begin{cor}Suppose $X=\bigcup_{k=1}^{\numclust}X_{k}$ and $\numclust$ is known.  If $\frac{\Din}{\Dbtw}<\frac{\min(\M)}{\max(\M)}$, then Algorithm \ref{alg}, using $\numclust$ instead of $\hat{\numclust}$ for the number of clusters, labels all points correctly.
\end{cor}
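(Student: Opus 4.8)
The plan is to reduce the claim to two ingredients already in hand: exact identification of the cluster modes, and correct propagation of labels to every remaining point. The essential observation is that, because $\numclust$ is assumed known, the algorithm never executes the $\hat{\numclust}$-estimation step (line 6 of Algorithm \ref{alg}); it simply designates the $\numclust$ largest values of $\Dt$ as modes. This is exactly why the weaker hypothesis $\Din/\Dbtw<\min(\M)/\max(\M)$ suffices here, whereas Corollary \ref{cor:PerformanceGuarantees} required the stronger inequality involving $(\min(\M)/\max(\M))^{2}$ and the $\rho_{t}$-ratio: that sharper condition was needed only to force a detectable drop-off in the sorted $\Dt$ values after the $\numclust$-th maximizer, so as to recover $\numclust$ automatically.

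First I would invoke Theorem \ref{thm:Maximizers} directly, since its hypothesis is precisely the one assumed. Its conclusion yields that the $\numclust$ maximizers $\{\x_{i}^{*}\}_{i=1}^{\numclust}$ of $\Dt$ are highest-empirical-density points, exactly one drawn from each of the $\numclust$ clusters (up to a relabeling permutation). With $\numclust$ known, assigning each of these maximizers its own label therefore produces one correct representative mode per cluster, with no two representatives sharing a cluster.

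Next I would run the labeling induction exactly as in the proof of Corollary \ref{cor:PerformanceGuarantees}, sweeping points in order of decreasing density $p(x)$. The hypothesis forces the strict separation $\Din<\Dbtw$: since $\min(\M)\le\max(\M)$ we have $\min(\M)/\max(\M)\le1$, so $\Din/\Dbtw<1$. Consequently, for any not-yet-labeled $x\in X_{k}$, its nearest diffusion neighbor of strictly higher density, $x^{*}=\argmin_{y}\{D_{t}(x,y)\mid p(y)\ge p(x),\ y\text{ labeled}\}$, must itself lie in $X_{k}$: every candidate in a different cluster sits at distance at least $\Dbtw>\Din$ from $x$, while the higher-density points of $X_{k}$ (including the cluster representative) are within $\Din$. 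Because points are processed in decreasing-density order, $x^{*}$ is already labeled, and by the induction hypothesis its label equals $k$; hence $x$ inherits the label $k$, closing the induction.

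I do not expect a genuine obstacle here: all of the analytic content resides in Theorem \ref{thm:Maximizers} and in the labeling step of Corollary \ref{cor:PerformanceGuarantees}, both available. The only point deserving care is confirming that the weaker hypothesis still delivers simultaneously (i) exact mode identification through Theorem \ref{thm:Maximizers} and (ii) the strict inequality $\Din<\Dbtw$ that drives the propagation step, together with the remark that knowing $\numclust$ is precisely what lets us dispense with the sharper drop-off estimate and hence with the squared density ratio.
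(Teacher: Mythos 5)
Your argument is correct and follows the paper's own route: the paper proves this corollary in one line by citing Theorem \ref{thm:Maximizers} together with the inequality $\Din<\Dbtw$, and your proposal simply fills in the same two ingredients (exact mode identification, then the decreasing-density labeling induction from Corollary \ref{cor:PerformanceGuarantees}). Your added remark explaining why knowing $\numclust$ removes the need for the squared density ratio is accurate and consistent with the paper's discussion.
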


\begin{proof}This follows from Theorem \ref{thm:Maximizers}, along with $\Din<\Dbtw$.
\end{proof}


\section{Numerical Experiments}\label{sec:NumericalExperiments}

We return to the motivating datasets of Section \ref{sec:DataModel}.  The diffusion distances are computed by truncating (\ref{eqn:EigenfunctionsDD}) by summing only over the largest $M=100\ll n$ eigenpairs, and the kernel density estimator $p(x)$ uses 100 nearest neighbors.  

We compute a number of statistics on the data to test our theoretical estimates and to verify the efficacy of the proposed algorithm.  For the first two datasets we examine, we plot $\Din, \Dbtw$ as functions of $t$, to observe the multitemporal nature of our clustering algorithm.  We also plot the diffusion distances from a fixed point for a variety of $t$ values, to illustrate the multitemporal behavior of these distances. We also compute the theoretical estimates on $\|\Pm^{t}-\Sm^{\infty}\|_{\infty}$ as guaranteed by Theorem \ref{thm:NearReducibility}.  The tightness of the theoretical estimates is evaluated by comparing to the empirical values.  

After these evaluations, we cluster the data with the proposed LUND algorithm and compute the accuracy, comparing with spectral clustering and the FSFDPC algorithm.  We moreover compute the estimates of $\numclust$ with both the proposed method $\hat{\numclust}=\argmax_{k}\Dt(x_{m_{k}})/\Dt(x_{m_{k}+1})$ where $\{x_{m_{i}}\}_{i=1}^{n}$ are the points $\{x_{i}\}_{i=1}^{n}$ sorted so that $\Dt(x_{m_{1}})\ge \Dt(x_{m_{2}})\ge \dots \ge \Dt(x_{m_{n}})$, and spectral clustering eigengap, as a function of the crucial parameters of the respective algorithms.  For spectral clustering, we consider the variant in which just the second eigenvector $\psi_{2}$ is used \citep{Shi2000}, as well as the variant in which the first $\numclust$ eigenvectors \{$\psi_{i}\}_{i=1}^{\numclust}$ are used \citep{Ng2002}.  All experiments are conducted on randomly generated data, with results averaged over 100 trials.


\subsection{Bottleneck Data}

We first analyze the linear, multimodal dataset of Figure \ref{fig:MotivatingDatasets}, in which two of the clusters feature two high-density regions, connected by a lower density bottleneck region.  Theorem \ref{thm:NearReducibility} upper bounds $\|\Pm^{t}-\Sm^{\infty}\|_{\infty}<\epsilon(t)$ in terms of $\delta, \lambda_{\numclust+1}, \kappa$, which for this data have values $\delta=6.2697\times10^{-8}, \ \lambda_{\numclust+1}=1-1.7563\times 10^{-4}, \ \kappa=  2.6738\times10^{2}$ when $\Pm$ is constructed with $\sigma=.15$.  As shown in Figure \ref{fig:BottleneckNoiselessTimePlot}, the theoretical estimate correctly illustrates the overall behavior of the transition from initial distribution, to mesoscopic equilbria, then to a global equilibrium.  

The distance from a high-density point across time scales appears in Figure \ref{fig:BottleneckNoiseless}.  For small time values, the diffusion distance scales similarly to Euclidean distance.  However, by time $t=10^{8}$, a mesoscopic equilibrium has been reached, and all points in the cluster are rather close together.  By $t=10^{16}$, a global equilibrium has been reached.

\begin{figure}[!htb] 
\centering
\begin{subfigure}{.49\textwidth}
\includegraphics[width=\textwidth]{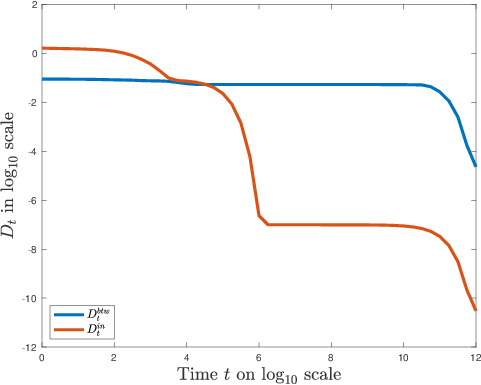}
\subcaption{Plot of $\Dbtw, \Din$ against $t$.}
\end{subfigure}
\begin{subfigure}{.49\textwidth}
\includegraphics[width=\textwidth]{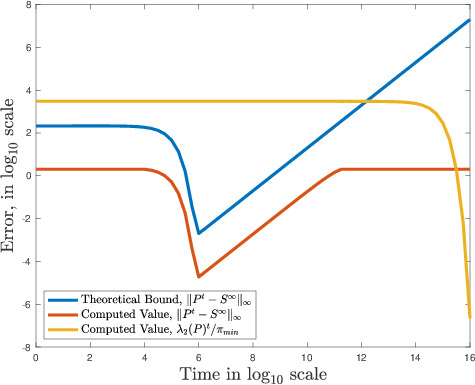}
\subcaption{Quantities related to mesoscopic and global equilibria.}
\end{subfigure}
\caption{\label{fig:BottleneckNoiselessTimePlot}  In (a), $\Dbtw, \Din$ are plotted against $t$.  For $t<10^{4}$, $\Din>\Dbtw$, since for small time, $D_{t}$ is essentially the same as Euclidean distance.  Around $t=10^{4}$, there is a transition, in which $\Din\ll \Dbtw$.  This corresponds to the Markov chain reaching mesoscopic equilibria in which the chain is well-mixed on each cluster, but not well-mixed globally.  In (b), we plot three quantities against $t$: the theoretical bound on $\|\Pm^{t}-\Sm^{\infty}\|_{\infty}$ guaranteed by Theorem \ref{thm:NearReducibility}; the actual, empirically computed quantity $\|\Pm^{t}-\Sm^{\infty}\|_{\infty}$; and the empirically computed quantity $\lambda_{2}(\Pm)^{t}/\boldpi_{\text{min}}$, which estimates the distance to the stationary distribution.  Notice that $\|\Pm^{t}-\Sm^{\infty}\|_{\infty}$ gets small, both the theoretical bound and the empirical value, around $t=10^{5}$.  It then increases.  Around $t=10^{14}$, $\lambda_{2}(\Pm)^{t}/\boldpi_{\text{min}}$ decays exponentially to 0, indicating that the global equilibrium has been reached.  Note that the theoretical estimate on $\|\Pm^{t}-\Sm^{\infty}\|_{\infty}$ is not tight, though it accurately captures the overall behavior of the quantity with respect to $t$.  }
\end{figure}

\begin{figure}[b] 
\centering
\begin{subfigure}{.24\textwidth}
\includegraphics[width=\textwidth]{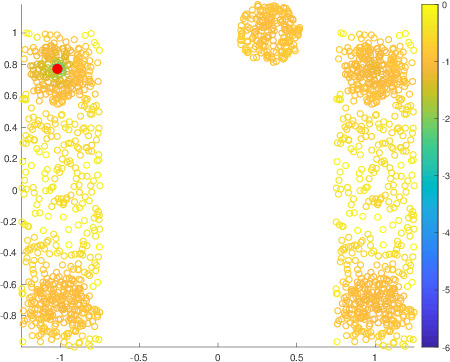}
\subcaption{$t=0$}
\end{subfigure}
\begin{subfigure}{.24\textwidth}
\includegraphics[width=\textwidth]{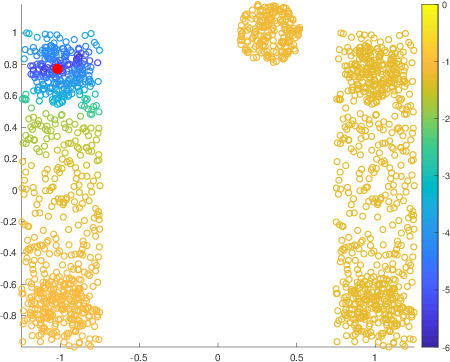}
\subcaption{$t=10^4$}
\end{subfigure}
\begin{subfigure}{.24\textwidth}
\includegraphics[width=\textwidth]{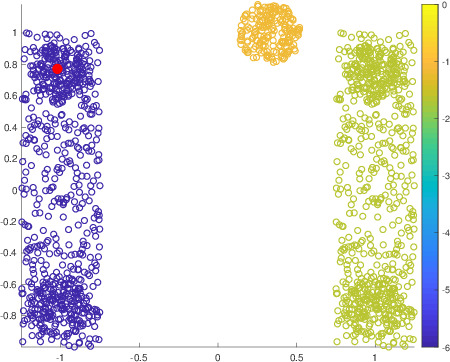}
\subcaption{$t=10^8$}
\end{subfigure}
\begin{subfigure}{.24\textwidth}
\includegraphics[width=\textwidth]{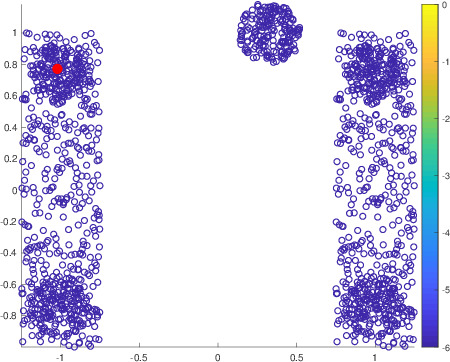}
\subcaption{$t=10^{16}$}
\end{subfigure}
\caption{\label{fig:BottleneckNoiseless}  A high-density point is shown in red, and all other points are colored by $D_{t}$ distance from this point in $\log_{10}$ scale.  The transition from initial distribution (a), to mesoscopic equilibrium (c), to global equilibrium (d), is illustrated as $t$ grows.}
\end{figure}

\subsubsection{Bottleneck Data Clustering Evaluation}

Comparisons with spectral clustering appear in Figures \ref{fig:BottleneckAccuracy} and \ref{fig:BottleneckEstimateK}.  In Figure \ref{fig:BottleneckEstimateK}, it is shown that for all values of the spatial scale parameter $\sigma$, the eigengap estimated number of clusters $\hat K$ is $1$, i.e. always incorrect. On the other hand, the figure shows that there is a range of $(\sigma,t)$ values---mesoscopic in $t$---for which LUND achieves perfect accuracy.  Indeed, after an initial phase in which the number of clusters is estimated as 1, the LUND estimate for $\numclust$ is decreasing in $t$, corresponding to the mixing of different clusters over time.  

\begin{figure}[!htb]
\centering
\begin{subfigure}{.47\textwidth}
\includegraphics[width=\textwidth]{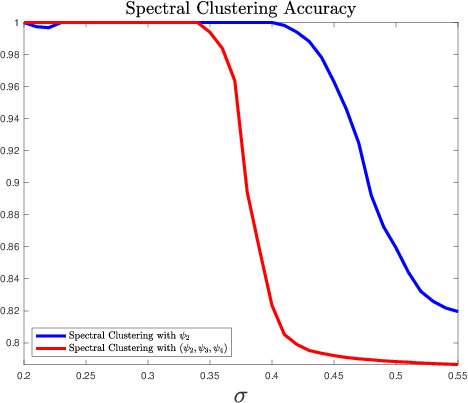}
\subcaption{Accuracy spectral clustering}
\end{subfigure}
\begin{subfigure}{.51\textwidth}
\includegraphics[width=\textwidth]{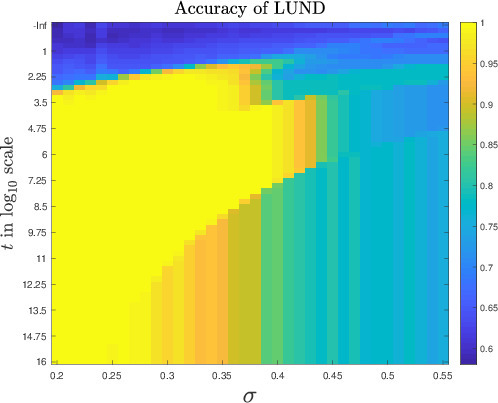}
\subcaption{Accuracy of LUND}
\end{subfigure}
\caption{\label{fig:BottleneckAccuracy}  Accuracy of two variations of spectral clustering compared to LUND as functions of $\sigma$.  While spectral clustering with $\psi_{2}$ performs nearly perfectly for $\sigma<.4$, its performance degrades as $\sigma$ increases.  Classical spectral clustering using $\psi_{1},\psi_{2},\psi_{3}$ achieves perfect clustering of the data for roughly $\sigma<.35$.  LUND is able to achieve perfect clustering accuracy for a wide range of $(\sigma,t)$ pairs, mainly for those $\sigma$ values which allows spectral clustering with just $\psi_{2}$ to succeed.  As $\sigma$ increases, the mesoscopic regime in which perfect accuracy is achieved shrinks before disappearing entirely around $\sigma=.45$. In this data, spectral clustering with just $\psi_{2}$ performs about as well as LUND in terms of accuracy, assuming $\numclust$ is known.}
\end{figure}

\begin{figure}[!htb] 
\centering
\begin{subfigure}{.24\textwidth}
\includegraphics[width=\textwidth]{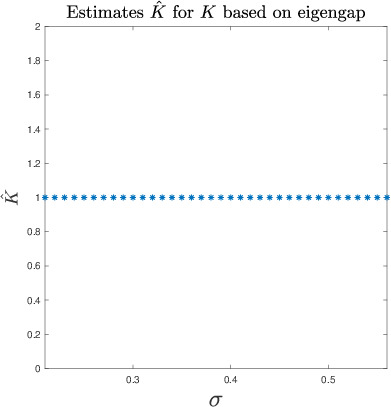}
\subcaption{Estimates of $\hat{K}$ using eigengap statistic.}
\end{subfigure}
\begin{subfigure}{.24\textwidth}
\includegraphics[width=\textwidth]{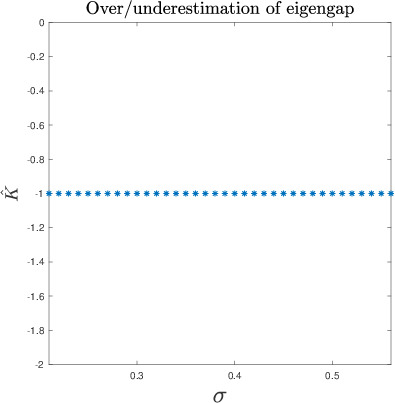}
\subcaption{Over/underestimation of $\hat{K}$ using eigengap statistic.}
\end{subfigure}
\begin{subfigure}{.24\textwidth}
\includegraphics[width=\textwidth]{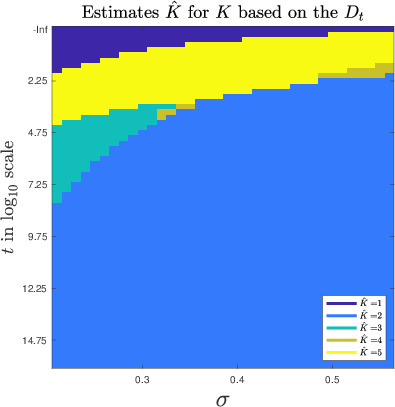}
\subcaption{Estimates of $\hat{K}$ using $\Dt$ statistic.}
\end{subfigure}
\begin{subfigure}{.24\textwidth}
\includegraphics[width=\textwidth]{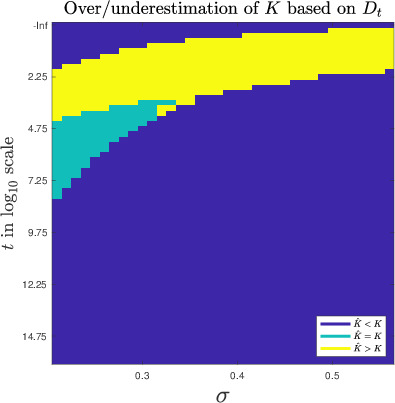}
\subcaption{Over/underestimation of $\hat{K}$ using $\Dt$ statistic.}
\end{subfigure}
\caption{\label{fig:BottleneckEstimateK}In (a), we see the estimates of $\hat{K}$ using the eigengap statistic, as a function of spatial scale parameter $\sigma$.  The eigengap consistently estimates $\hat{K}=1<3=K$, indicating that the multimodal nature of this data is too complicated for the spectral clustering eigengap to handle.  A quantized version of these estimates is shown in (b), in which entry 0 indicates correct estimation, -1 indicates $\hat{K}<K$ and 1 indicates $\hat{K}>K$.  
	There is a regime of $(\sigma,t)$ values in which LUND correctly estimates the number of clusters, as shown in (c) and (d).  This regime is essentially, after an initial time, monotonic decreasing in $t$, and the mesoscopic region in which $\hat{K}=K$ is decreasing in $\sigma$.}
\end{figure}

The LUND algorithm and FSFDPC are compared in Figure \ref{fig:BottleneckLUNDversusFSFDPC}.  Due to the non-spherical shapes of the clusters, FSFDPC is unable to learn the modes of the data correctly, and consequently assigns modes to the same cluster: the modes learned by FSFDPC and subsequent labels appear in subfigures (a), (b), respectively.  In contrast, LUND learns one mode from each cluster, as shown in (c).  Consequently, all points are labeled correctly, as shown in (d).

\begin{figure}[!htb] 
\centering
\begin{subfigure}{.24\textwidth}
\includegraphics[width=\textwidth]{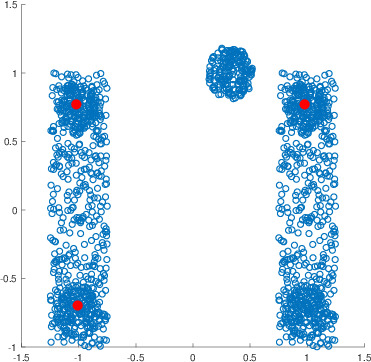}
\subcaption{Learned modes, FSFDPC}
\end{subfigure}
\begin{subfigure}{.24\textwidth}
\includegraphics[width=\textwidth]{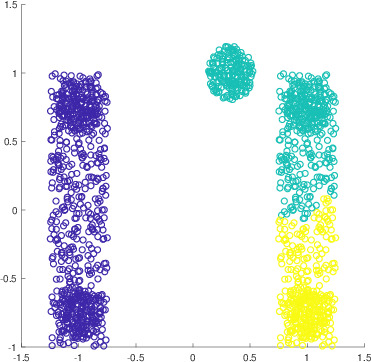}
\subcaption{Learned labels, FSFDPC}
\end{subfigure}
\begin{subfigure}{.24\textwidth}
\includegraphics[width=\textwidth]{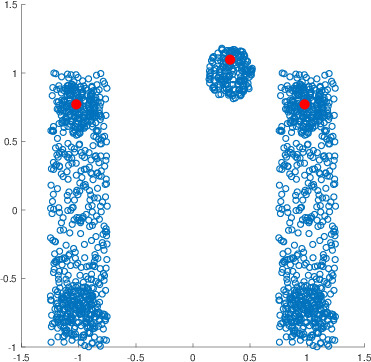}
\subcaption{Learned modes, LUND}
\end{subfigure}
\begin{subfigure}{.24\textwidth}
\includegraphics[width=\textwidth]{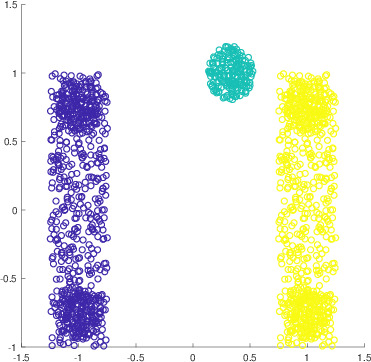}
\subcaption{Learned labels, LUND}
\end{subfigure}
\caption{\label{fig:BottleneckLUNDversusFSFDPC}  Comparison of FSFDPC to LUND.  In (a), the modes learned from FSFDPC---with Euclidean distances---are plotted.  Due to the eloganted, non-spherical nature of the data, the modes are learned incorrectly.  The subsequent labels, shown in (b), illustrate FSFDPC is not able to capture the structure of this data.  In (c), the modes learned from LUND are shown.  One mode is learned from each cluster, which allows for a correct labeling of all data points with LUND, as shown in (d).  LUND used parameters $(\sigma,t)=(.15,10^{6})$ for these data.}
\end{figure}


\subsection{Nonlinear Data}

We now consider the nonlinear multimodal data in the form of circles from Figure \ref{fig:MotivatingDatasets}.  The innermost circle is filled-in, and has only one high-density region.  It is surrounded by two circles, each with two high-density regions connected by low-density regions.  The paths connecting antipodal points on the outer circles are long, which suggests these sets will have low conductance.  In the context of Theorem \ref{thm:NearReducibility}, the parameters for this data have values $\delta=1.7225\times10^{-4}, 1-\lambda_{\numclust+1}= 6.8350\times 10^{-5}, \kappa=2.655\times10^{2}$ with $\sigma=.175$.  Comparison of theoretical and empirical estimates appear in Figure \ref{fig:NonlinearNoiselessTimePlot}, and the diffusion distances from one of the high-density points appear in Figure \ref{fig:NonlinearNoiseless}, illustrating the transition from initial distribution, to mesoscopic equilibrium, to global equilibrium.

\begin{figure}[!htb] 
\centering
\begin{subfigure}{.49\textwidth}
\includegraphics[width=\textwidth]{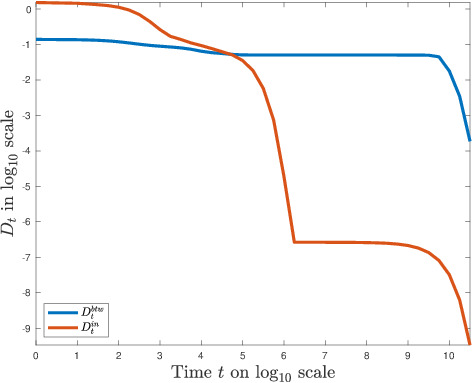}
\subcaption{Plot of $\Dbtw, \Din$ against $t$.}
\end{subfigure}
\begin{subfigure}{.49\textwidth}
\includegraphics[width=\textwidth]{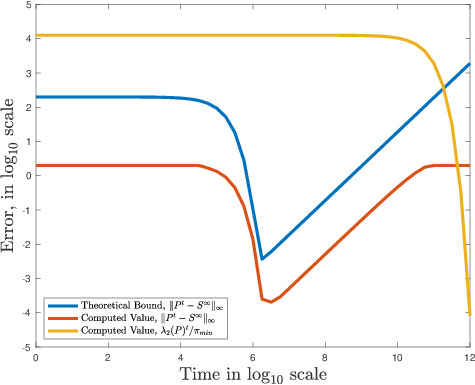}
\subcaption{Quantities related to mesoscopic and global equilibria.}
\end{subfigure}
\caption{\label{fig:NonlinearNoiselessTimePlot} In (a), we plot $\Dbtw, \Din$ against $t$.  For roughly $t<10^{5}$, $\Din>\Dbtw$; around $t=10^{5}$, there is a transition, in which $\Din\ll \Dbtw$.  This corresponds to the Markov chain reaching mesoscopic equilibria in which the chain is well-mixed on each cluster, but not well-mixed globally.  In (b), we plot three quantities against $t$: the theoretical bound on $\|\Pm^{t}-\Sm^{\infty}\|_{\infty}$ guaranteed by Theorem \ref{thm:NearReducibility}; the actual, empirically computed quantity $\|\Pm^{t}-\Sm^{\infty}\|_{\infty}$; and the empirically computed quantity $\lambda_{2}(\Pm)^{t}/\boldpi_{\text{min}}$, which estimates the distance to the stationary distribution.  Notice that $\|\Pm^{t}-\Sm^{\infty}\|_{\infty}$ gets small, both the theoretical bound and the empirical value, around $t=10^{4.5}$.  It then increases.  Around $t=10^{10}$,  $\lambda_{2}(\Pm)^{t}/\boldpi_{\text{min}}$ decays exponentially to 0, indicating that the global equilibrium has been reached.}
\end{figure}

\begin{figure}[!htb] 
\centering
\begin{subfigure}{.24\textwidth}
\includegraphics[width=\textwidth]{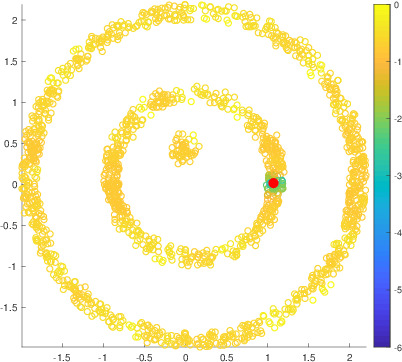}
\subcaption{$t=0$}
\end{subfigure}
\begin{subfigure}{.24\textwidth}
\includegraphics[width=\textwidth]{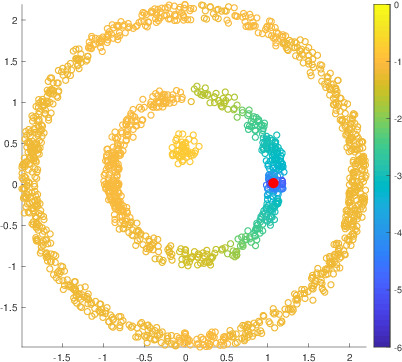}
\subcaption{$t=10^4$}
\end{subfigure}
\begin{subfigure}{.24\textwidth}
\includegraphics[width=\textwidth]{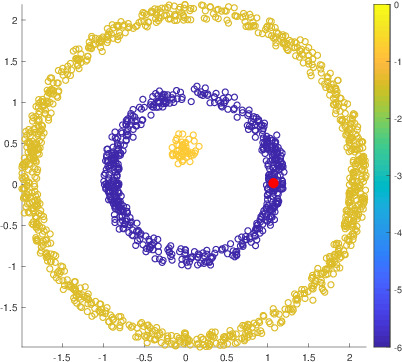}
\subcaption{$t=10^8$}
\end{subfigure}
\begin{subfigure}{.24\textwidth}
\includegraphics[width=\textwidth]{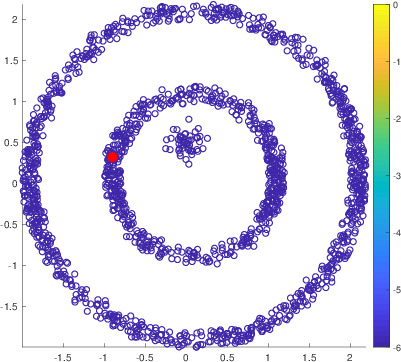}
\subcaption{$t=10^{16}$}
\end{subfigure}
\caption{\label{fig:NonlinearNoiseless}A high-density point is shown in red, and all other points are colored by $D_{t}$ distance from this point in $\log_{10}$ scale.  The transition from initial distribution (a), to mesoscopic equilibrium (c), to global equilibrium (d), is illustrated as $t$ grows.}
\end{figure}

\subsubsection{Nonlinear Data Clustering Evaluation}

LUND is compared with the two spectral clustering variants in Figures \ref{fig:NonlinearAccuracy} and \ref{fig:NonlinearEstimateK}.  In terms of overall accuracy, LUND with the correct choice of $t$ outperforms both methods of spectral clustering---using $\psi_{2}$ only and using $\psi_{1},\psi_{2},\psi_{3}$---for a range of $\sigma$ values.  The strong performance of LUND in the mesoscopic range, away from $t=0, t=\infty$, confirms the theoretical results, and demonstrates LUND's flexibility compared to classical spectral methods.  Beyond accuracy, the LUND estimator for $\numclust$ is empirically effective for a range of $(\sigma,t)$ values, while the eigengap is much less effective.  

\begin{figure}[!htb] 
\centering
\begin{subfigure}{.47\textwidth}
\includegraphics[width=\textwidth]{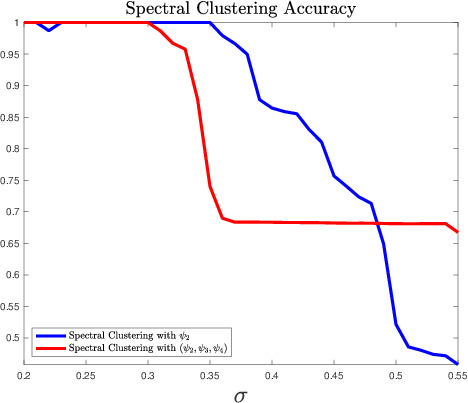}
\subcaption{Accuracy spectral clustering}
\end{subfigure}
\begin{subfigure}{.51\textwidth}
\includegraphics[width=\textwidth]{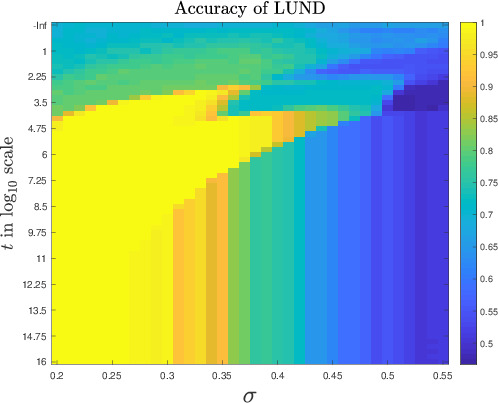}
\subcaption{Accuracy of LUND}
\end{subfigure}
\caption{\label{fig:NonlinearAccuracy}  Accuracy of two variations of spectral clustering compared to LUND as functions of $\sigma$.  While spectral clustering with $\psi_{2}$ performs well for very small $\sigma$, its performance degrades as $\sigma$ increases; classical spectral clustering using $\psi_{1},\psi_{2},\psi_{3}$ performs similarly though for a smaller range of $\sigma$.  LUND is able to achieve perfect clustering accuracy for a wide range of $(\sigma,t)$ pairs, in particular for pairs $(\sigma,t)$ such that spectral clustering fails.  As $\sigma$ increases, the mesoscopic regime in which perfect accuracy is achieved shrinks before disappearing entirely around $\sigma=.4$.  LUND outperforms spectral clustering with $(\psi_{2},\psi_{3},\psi_{4})$ roughly for $\sigma\in(.3,.4)$, and outperforms spectral clustering with $\psi_{2}$ alone roughly for $\sigma\in (.35,.4)$.}
\end{figure}

\begin{figure}[!htb] 
\centering
\begin{subfigure}{.24\textwidth}
\includegraphics[width=\textwidth]{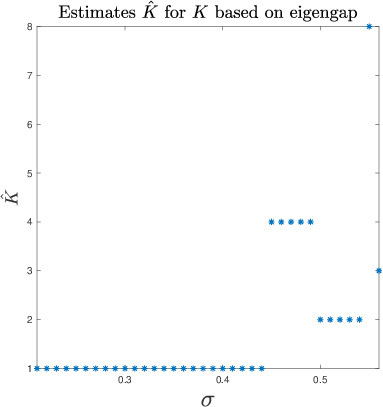}
\subcaption{Estimates of $\hat{K}$ using eigengap statistic.}
\end{subfigure}
\begin{subfigure}{.24\textwidth}
\includegraphics[width=\textwidth]{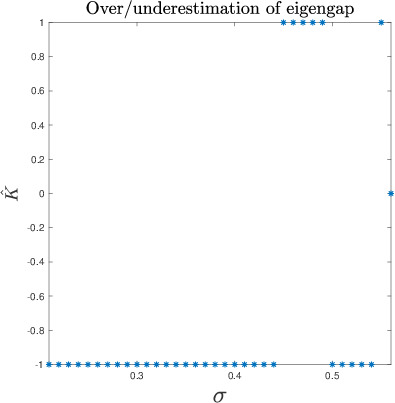}
\subcaption{Over/underestimation of $\hat{K}$ using eigengap statistic.}
\end{subfigure}
\begin{subfigure}{.24\textwidth}
\includegraphics[width=\textwidth]{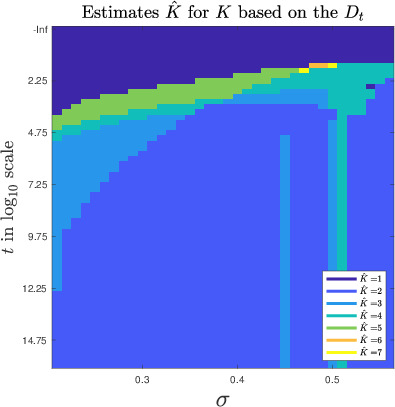}
\subcaption{Estimates of $\hat{K}$ using $\Dt$ statistic.}
\end{subfigure}
\begin{subfigure}{.24\textwidth}
\includegraphics[width=\textwidth]{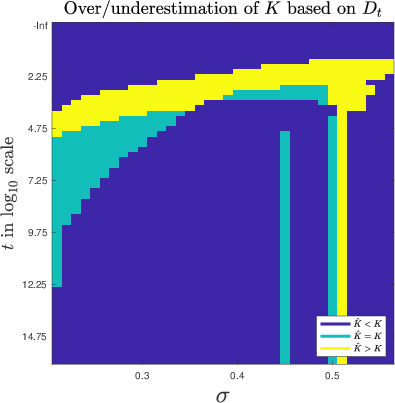}
\subcaption{Over/underestimation of $\hat{K}$ using $\Dt$ statistic.}
\end{subfigure}
\caption{\label{fig:NonlinearEstimateK}In (a) and (b), we see the eigengap consistently misestimates the number of clusters in the data, first estimating $\hat{K}=1$, before oscillating between various numbers of clusters, including ending on $\hat{K}=3$ for one value of $\sigma$.  LUND is able to achieve an accurate estimate for $\hat{K}$ for a range of $(\sigma,t)$ values, with generally more $t$ values yielding a correct estimate for smaller $\sigma$.}
\end{figure}

In Figure \ref{fig:NonlinearLUNDversusFSFDPC}, LUND is compared to FSFDPC.  LUND correctly learns the modes of the data and labels points correctly, as shown in (c), (d).  FSFDPC, however, fails to learn the modes of the data correctly, leading to erroneous labeling---see subfigures (a), (b) respectively.
\begin{figure}[!htb] 
\centering
\begin{subfigure}{.24\textwidth}
\includegraphics[width=\textwidth]{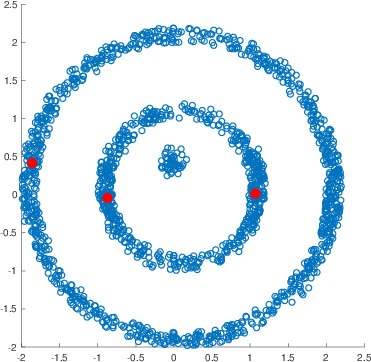}
\subcaption{Learned modes, FSFDPC}
\end{subfigure}
\begin{subfigure}{.24\textwidth}
\includegraphics[width=\textwidth]{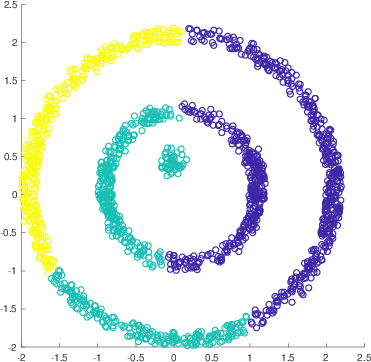}
\subcaption{Learned labels, FSFDPC}
\end{subfigure}
\begin{subfigure}{.24\textwidth}
\includegraphics[width=\textwidth]{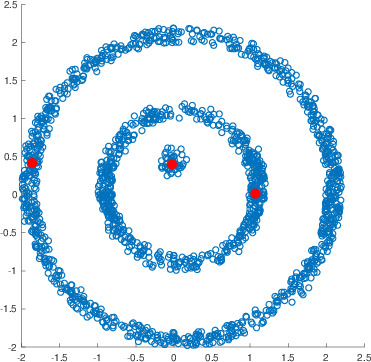}
\subcaption{Learned modes, LUND}
\end{subfigure}
\begin{subfigure}{.24\textwidth}
\includegraphics[width=\textwidth]{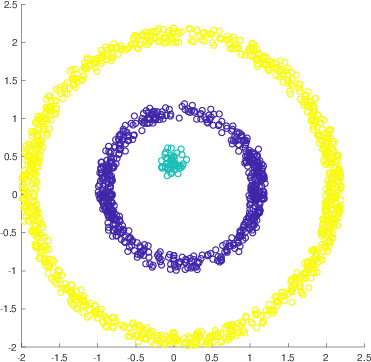}
\subcaption{Learned labels, LUND}
\end{subfigure}
\caption{\label{fig:NonlinearLUNDversusFSFDPC}In (a), the modes learned from FSFDPC---with Euclidean distances---are plotted.  The nonlinear nature of the data causes the modes to be learned incorrectly.  The subsequent labels, shown in (b), illustrate FSFDPC is not able to capture the structure of this data.  In (c), the modes learned from LUND are shown.  One mode is learned from each cluster, which allows for a correct labeling of all data points with LUND, as shown in (d).  LUND used parameters $(\sigma,t)=(.175,10^{5})$ for these data.}
\end{figure}


\subsection{Gaussian Data}

As a final synthetic example, we consider the Gaussians of Figure \ref{fig:NadlerGaussians}, which were constructed by \citet{Nadler2007_Fundamental} to be data on which both variants of spectral clustering fail.  These data are not sufficiently separated for Theorem \ref{thm:MainResult} to apply, but LUND still is able to perform well, owing to the incorporation of density, which allows to easily estimate the modes of the data.  Comparisons to spectral clustering in terms of overall accuracy are reported in Figure \ref{fig:GaussiansOA}.  It is also enlightening to consider performances of LUND and spectral clustering in terms of \emph{average accuracy}, in which the overall accuracy on each of the clusters is computed separately, and these class-wise accuracies are then averaged.  Compared to the overall accuracy measure, the average accuracy measure discounts large clusters and increases the significance of small clusters.  Results for average accuracy are in Figure \ref{fig:GaussiansAA}.

\begin{figure}[!htb] 
\centering
\begin{subfigure}{.47\textwidth}
\includegraphics[width=\textwidth]{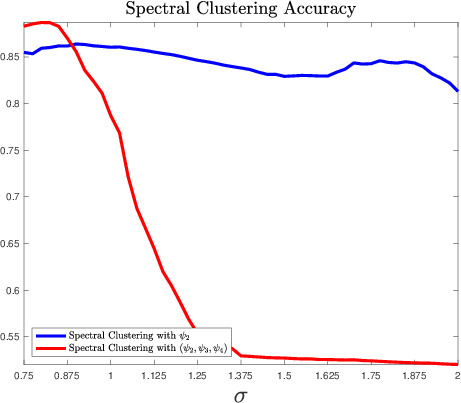}
\subcaption{Accuracy of spectral clustering}
\end{subfigure}
\begin{subfigure}{.51\textwidth}
\includegraphics[width=\textwidth]{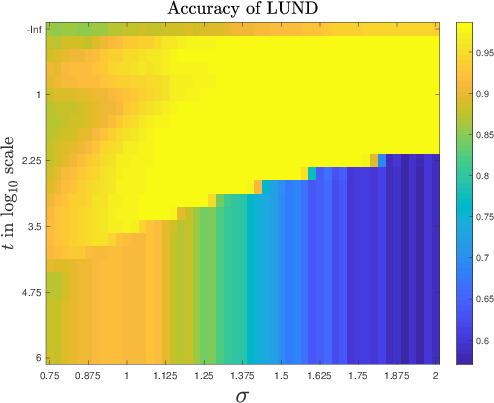}
\subcaption{Accuracy of LUND}
\end{subfigure}
\caption{\label{fig:GaussiansOA}The \emph{overall accuracy} of the two spectral clustering variants, as well as LUND, are shown for the Gaussian data.  In terms of overall accuracy, LUND is able to achieve near-perfect results for a range of parameter values.  Nearly all errors made were due to a point being generated from one Gaussian and landing very close to another Gaussian, which is essentially an unavoidable identifiability issue from which any unsupervised learning method would suffer.  Neither of the spectral clustering methods is able to match LUND's performance, which can be attributed to fundamental issues with the use of only the first small number of eigenvectors when performing spectral clustering, as shown by \citet{Nadler2007_Fundamental} and illustrated in Figure \ref{fig:NadlerGaussians}.}
\end{figure}

\begin{figure}[!htb] 
\centering
\begin{subfigure}{.47\textwidth}
\includegraphics[width=\textwidth]{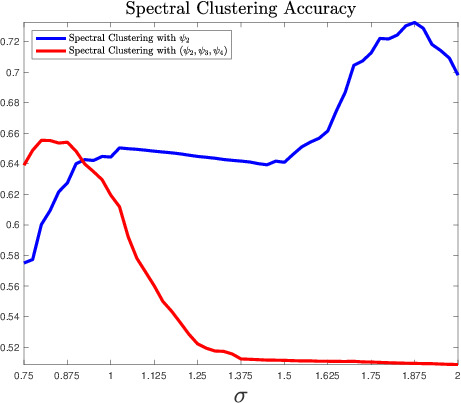}
\subcaption{Accuracy of spectral clustering}
\end{subfigure}
\begin{subfigure}{.51\textwidth}
\includegraphics[width=\textwidth]{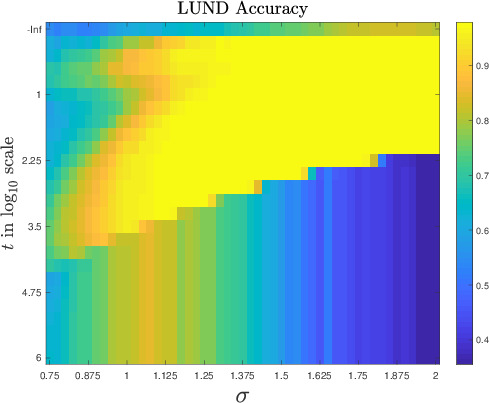}
\subcaption{Accuracy of LUND}
\end{subfigure}
\caption{\label{fig:GaussiansAA}The \emph{average accuracy} of the two spectral clustering variants, as well as LUND, are shown for the Gaussian data.  The results are qualitatively similar to overall accuracy, but with reduced performance for spectral clustering, since most of the errors made by the spectral clustering variants are on the small cluster, which is washed out by spectral clustering.  LUND achieves essentially perfect performance for a range of parameter values, excepting identifiability issues.}
\end{figure}

In Figure \ref{fig:GaussiansLUNDversusFSFDPC}, LUND is compared to FSFDPC.  LUND correctly learns the modes of the data and labels points correctly, as shown in (c), (d).  FSFDPC also learns the modes correctly, due to the unimodality of the Gaussian clusters and their isotropic covariance matrices.  

\begin{figure}[!htb] 
\centering
\begin{subfigure}{.24\textwidth}
\includegraphics[width=\textwidth]{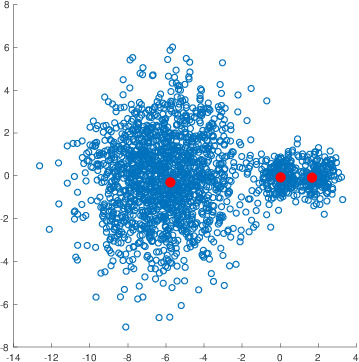}
\subcaption{Learned modes, FSFDPC}
\end{subfigure}
\begin{subfigure}{.24\textwidth}
\includegraphics[width=\textwidth]{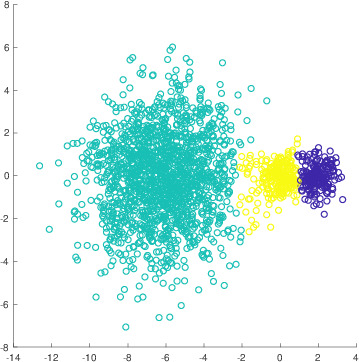}
\subcaption{Learned labels, FSFDPC}
\end{subfigure}
\begin{subfigure}{.24\textwidth}
\includegraphics[width=\textwidth]{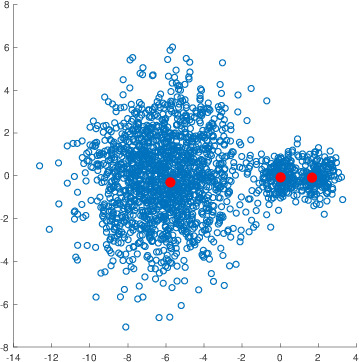}
\subcaption{Learned modes, LUND}
\end{subfigure}
\begin{subfigure}{.24\textwidth}
\includegraphics[width=\textwidth]{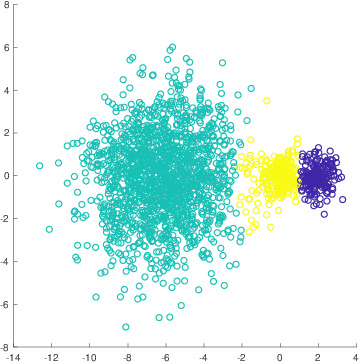}
\subcaption{Learned labels, LUND}
\end{subfigure}
\caption{\label{fig:GaussiansLUNDversusFSFDPC}  For the Gaussian data, both LUND and FSFDPC learn the data modes correctly, and are subsequently able to label the data with high accuracy.  The lack of difference in their comparative performances is attributed to the fact that the data in this case are Gaussians with isotropic covariance matrices, and in particular have simple spherical supports, which confers diffusion distances little advantage compared to Euclidean distances.  LUND used parameters $(\sigma,t)=(1,10^{3})$ for these data.}
\end{figure}

\subsection{Experimental Conclusions}

In all three synthetic examples, LUND performs well.  On the bottleneck data, it gives the same accuracy as spectral clustering with $\psi_{2}$ but better estimates on $\hat{K}$; on the nonlinear data, it gives the best range of accuracies with respect to $\sigma$, while also giving the best estimates of $\hat{K}$; on the Gaussian data, LUND performs as well as FSFDPC while both spectral methods fail.  These results suggest that LUND combines the best properties of spectral clustering with density-based clustering, while enjoying theoretical guarantees.  We remark that extensive experiments with LUND and some variants for real hyperspectral image data were shown by \citet{Murphy2018_Diffusion, Murphy2018_Iterative, Murphy2018_Unsupervised}, demonstrating the competitiveness of LUND with a range of benchmark and state-of-the-art clustering algorithms.  

As shown in Figures \ref{fig:BottleneckAccuracy}, \ref{fig:NonlinearAccuracy}, \ref{fig:GaussiansOA}, \ref{fig:GaussiansAA}, the relationship between $\sigma$ and $t$ is not as simple as suggested by equation (\ref{eqn:TsigmaEquivalence}).  Indeed, in the non-asymptotic case, and in particular in the case of clusters when the underlying density may be empirically 0 in some places, the relationship between $t$ and $\sigma$ does not obey a strict exponential relationship, as suggested by (\ref{eqn:TsigmaEquivalence}).  Instead, $t$ appears to interact with scales \emph{locally} on each cluster, as can be seen by the bifurcations in these plots.  Gaining a complete understanding of the relationship between $\sigma$ and $t$ in the cluster case is a topic of ongoing research.

\subsection{Computational Complexity}

The proposed algorithm enjoys essentially linear computational complexity.  This is achieved through the indexing structure \emph{cover trees}, \citep{Beygelzimer2006cover}, which allows for efficient nearest neighbor searches under the assumption that data has low-dimensional structure.  Indeed, for data $\{x_{i}\}_{i=1}^{n}\subset\mathbb{R}^{D}$, the computation of each of the $n$ data point's $\knn$ nearest neighbors can be achieved at a cost of $O(\knn D C^{d}n\log(n))$, where $d$ is the intrinsic dimension of the data.  This allows for the computation of each point's empirical density estimate $p(x)$ in $O(\knn D C^{d}n\log(n))$, where $\knn$ is the number of nearest neighbors used in the density estimate.  In addition, if the Markov chain is computed not on a fully connected graph, but on a $\knn$ nearest neighbors graph, the cost of computing diffusion distances with the eigenvector approximation is $O(\knn D C^{d}\log(n)+\knn M^{2}n)$, where $M$ eigenvectors are used in the approximation.  In the case that $\knn=O(\log(n)), M=O(1)$, these complexities simplify to $O(DC^{d}\log(n)^{2}n)$.  Computing $\rho_{t}(x)$ is $O(\log(n))$, under the assumption that, except for the $O(\log(n))$ class modes, each point has among its $O(\log(n))$ nearest neighbors a point of higher empirical density.  Subsequent sorting of $p$ and $D_{t}$ are $O(n\log(n))$, so the overall algorithm is $O(DC^{d}\log^{2}(n)n)$, which is linear in $n$, up to logarithmic factors.  
\section{Conclusions and Future Work}\label{sec:Conclusions}

In this article, new methods for bounding diffusion distances, based on nearly-reducible Markov chains, are deployed to provide sufficient conditions under which clustering of data can be guaranteed.  The theoretical results allow to rigorously show that diffusion distances exhibit multitemporal behavior, even in the case that clusters have multiple regions of high-density or nonlinear support.  These estimates on diffusion distance allow to prove performance guarantees on the LUND algorithm.  This may be interpreted as a critique of the popular FSFDPC algorithm, for which theoretical guarantees require unrealistic assumptions on the shapes of the clusters.  Numerical experiments on bottleneck, nonlinear, and Gaussian data indicate that the theoretical results correspond with empirical performance, and that LUND enjoys advantages of both spectral clustering and FSFDPC while tempering their weaknesses.  

While the results presented in this paper for diffusion distances are, we believe, novel and useful for developing performance guarantees for unsupervised learning, they fall short of a full finite sample analysis.  It is of interest to understand how the estimates on $\delta$ and $\lambda_{\numclust+1}$, which govern the estimates of Theorem \ref{thm:MainResult}, and consequently the performance guarantees for clustering, scale with the number of sample points.  Developing such precise estimates would require new mathematical methods for analyzing the spectra of random operators on graphs.  Such an analysis is suggested by recent works in discrete-to-continuum spectral analysis  \citep{Trillos2018error}, though handling the factor $(\I-\Pm_{ii})^{-1}$ may provide for new challenges.  

As remarked in Section \ref{sec:NumericalExperiments}, diffusion distances are Euclidean distances in a new coordinate basis, given by the (right) eigenvectors of $\Pm$, namely $D_{t}^{2}(x,y)=\sum_{\ell=1}^{n}\lambda_{\ell}^{2}(\psi_{\ell}(x)-\psi_{\ell}(y))^{2}.$  A different approach to proving the localization properties of $D_{t}$ with respect to time would be to show that different eigenvectors localize on particular clusters, and show that there are gaps in the eigenvalues $\lambda_{\ell}$ which account for the emergence of mesoscopic equilibria.  This approach is related to the analysis of eigenvectors corresponding to small eigenvalues for the symmetric Laplacian \citep{Shi2009}, and may provide new insights.  

The proposed method also lends itself to the semisupervised setting of \emph{active learning}, in which the user is allowed to query a small number of points for labels.  By estimating which points are most likely to be modes of clusters, the LUND algorithm presents natural candidates to query for labels.  It is of interest to understand if this method may resolve ambiguities for data in which the connections between clusters are weak.  

\section{Acknowledgements}
This project was partially funded by NSF-DMS-125012, NSF-DMS-1724979, NSF-DMS-1708602, NSF-ATD-1737984, AFOSR FA9550-17-1-0280, NSF-IIS-1546392.


\bibliography{DL_JMLR.bib}

\appendix

\section{Proof of Theorem \ref{thm:NearReducibility}}

Notice $\|\Pm^{t}-\Sm^{\infty}\|_{\infty}\le \|\Pm^{t}-\Sm^{t}\|_{\infty}+\|\Sm^{t}-\Sm^{\infty}\|_{\infty}$.  For all $t\ge0$, $\Pm^{t}-\Sm^{t}=\sum_{i=1}^{t}\Sm^{t-i}(\Pm-\Sm)\Pm^{i-1},$ so that $$\|\Pm^{t}-\Sm^{t}\|_{\infty}=\left\|\sum_{i=1}^{t}\Sm^{t-i}(\Pm-\Sm)\Pm^{i-1}\right\|_{\infty}\le \sum_{i=1}^{t}\|\Sm^{t-i}\|_{\infty}\|(\Pm-\Sm)\|_{\infty}\|\Pm^{i-1}\|_{\infty}=t\|(\Pm-\Sm)\|_{\infty}\le t\delta.$$

To bound $\|\Sm^{t}-\Sm^{\infty}\|_{\infty}$, notice that after diagonalizing $\Sm$, 

\[\Sm^{t}=\Z\begin{bmatrix}
    \I_{\numclust}  & \0   \\
   \0  & \D^{t}  \\
\end{bmatrix}\Z^{-1}, \ \
\Sm^{\infty}=\Z\begin{bmatrix}
    \I_{\numclust}  & \0   \\
   \0  & \0 \\
\end{bmatrix}\Z^{-1},\]
where  $\D$ is a diagonal matrix with $\lambda_{\numclust+1}, \lambda_{\numclust+2}, \dots, \lambda_{n}$ on the diagonal.  Hence, $\|\Sm^{t}-\Sm^{\infty}\|_{\infty}\le \|\Z\|_{\infty}\lambda_{\numclust+1}^{t}\|\Z^{-1}\|_{\infty}=\kappa\lambda_{\numclust+1}^{t}$, as desired.  The second result of the theorem follows similarly.  

\end{document}